\newcommand{\mbf}[1]{\boldsymbol{#1}}
\newcommand{\mbb}[1]{\mathbb{#1}}
\newcommand{\mcal}[1]{\mathcal{#1}}
\newcommand{\mrm}[1]{\textrm{#1}}
\newcommand\numberthis{\addtocounter{equation}{1}\tag{\theequation}}
\newcommand{\inprod}[2]{\ensuremath{\left\langle{#1}, {#2}\right\rangle}}
\newcommand{\norm}[1]{\left\|{#1}\right\|}
\newcommand{\abs}[1]{\left|{#1}\right|}
\newtheorem{theorem}{Theorem}
\newtheorem{lemma}[theorem]{Lemma}
\newtheorem{corollary}[theorem]{Corollary}
\newtheorem{prop}[theorem]{Proposition}
\theoremstyle{definition}
\newtheorem{remark}{Remark}
\title{A Sieve Quasi-likelihood Ratio Test for Neural Networks with Applications to Genetic Association Studies}
\author[1]{Xiaoxi Shen}
\author[2]{Chang Jiang}
\author[3]{Lyudmila Sakhanenko}
\author[2]{Qing Lu}
\affil[1]{Department of Mathematics, Texas State University, San Marcos, TX, USA}
\affil[2]{Department of Biostatistics, University of Florida, Gainesville, FL, USA}
\affil[3]{Department of Statistics and Probability, Michigan State University, East Lansing, MI, USA}
\date{}                    
\begin{document}
	\maketitle
	
	\begin{abstract}
		Neural networks (NN) play a central role in modern Artificial intelligence (AI) technology and has been successfully used in areas such as natural language processing and image recognition. While majority of NN applications focus on prediction and classification, there are increasing interests in studying statistical inference of neural networks. The study of NN statistical inference can enhance our understanding of NN statistical proprieties. Moreover, it can facilitate the NN-based hypothesis testing that can be applied to hypothesis-driven clinical and biomedical research. In this paper, we propose a sieve quasi-likelihood ratio test based on NN with one hidden layer for testing complex associations. The test statistic has asymptotic chi-squared distribution, and therefore it is computationally efficient and easy for implementation in real data analysis. The validity of the asymptotic distribution is investigated via simulations. Finally, we demonstrate the use of the proposed test by performing a genetic association analysis of the sequencing data from Alzheimer's Disease Neuroimaging Initiative (ADNI).\\
		
		\textbf{Keywords}: Sieve quasi-likelihood ratio test; nonparametric least squares; influence functions.
	\end{abstract}

	\section{Introduction}
	With the advance of science and technology, we are now in the era of the fourth industrial revolution. One of the key drivers of the fourth industrial revolution is artificial intelligence (AI). Deep neural networks play a critical role in AI and have achieved great success in many fields such as natural language processing and imaging recognition. While great attention has been given to applications of neural works (NN), limited studies have been focus on its theoretical properties and statistical inference, which hinders its application to hypothesis-driven clinical and biomedical research. The study of NN statistical inference can improve our understanding of NN properties and facilitate hypotheses testing using NN. Nevertheless, it is challenging to study NN statistical inference. For instance, it has been pointed out in \citet{fukumizu1996regularity} and \citet{fukumizu2003likelihood} that the parameters in a neural network model are unidentifiable so that classical tests (e.g., Wald test and likelihood ratio test) cannot be used because unidentifaibility of parameters leads to inconsistency of the nonlinear least squares estimators \citep{wu1981asymptotic}.
	
	Many existing literature on NN, such as \citet{shen2021goodness}, \citet{shen2019asymptotic}, \citet{horel2020significance}, \citet{schmidt2020nonparametric}, and \citet{chen1999improved}, are based on the framework of nonparametric regression. It has been shown in \citet{chen1999improved} that the rate of convergence for neural network estimators is $\norm{\hat{f}_n-f_0}=\mcal{O}_p\left((n/\log n)^{-\frac{1+1/d}{4(1+1/(2d))}}\right)$ for sufficiently smooth $f_0$, so one of the advantages of neural networks compared with commonly used in nonparametric regression methods (e.g., Nadaraya-Watson estimator and spline regression) is that neural network estimators can avoid the curse of dimensionality in terms of rate of convergence. 
	
	There are increasing interests in studying hypothesis testing based on neural networks. Recently, \citet{shen2019asymptotic} established asymptotic theories for neural networks, which can be used to perform a nonparametric hypothesis on the true function. \citet{horel2020significance} used a Linderberg-Feller type central limit theorem for random process and second order functional delta method to construct test statistic to perform significance tests on input features. However, the asymptotic distribution of the test statistic is complex, making it difficult to obtain the critical value. \citet{shen2021goodness} proposed a goodness of fit type test based on neural networks. The test statistic is based on comparing the mean squared error values of two neural networks built under the null and the alternative hypothesis. The test statistic has an asymptotic normal distribution, and hence it can be easily used in practice. However, constructing the test statistic requires a random split of the data, which can lead to a potential power loss. In this paper, we propose a sieve quasi likelihood ratio (SQLR) test based on neural networks. Similar to \citet{shen2021goodness}, the test statistic has an asymptotic chi-squared distribution, which facilitate its use in practice. Compared with the goodness of fit test in \citet{shen2021goodness}, the SQLR test does note require data splitting, but requires continuous random input features.

	The rest of the paper is organized as follows. Section 2 provides the general results of the sieve quasi-likelihood ratio test under the setup of nonparametric regressions. In section 3, we apply the general theories to neural networks so that significance tests based on neural networks can be performed. We investigate the validity of the theories via simple simulations in section 4, followed by a real data application to genetic association analysis of the sequencing data from Alzheimer's Disease Neuroimaging Initiative (ADNI) in section 5. The proofs of the main results in are given in the supplementary materials.

	\section{Sieve Quasi-Likelihood Ratio Test}
	Consider the classical setting of a nonparametric regression model under the random design,
	$$
	Y_i=f_0(\mbf{X}_i)+\epsilon_i,\quad i=1,\ldots,n,
	$$
	where the covariates $\mbf{X}_1,\ldots,\mbf{X}_n\in\mbb{R}^d$ are assumed to be i.i.d. from a distribution $P$, and $\epsilon_1,\ldots,\epsilon_n$ are i.i.d. random errors with $\mbb{E}[\epsilon]=0$. $Y_1,\ldots,Y_n$ are the responses, which are continuous random variables. The true functions $f_0$ is assume to be in $\mcal{F}\subset C(\mcal{X})$, where $\mcal{X}\subset\mbb{R}^d$ is a compact subset. For simplicity, we take $\mcal{X}=[-1,1]^d$. The norm considered on $\mcal{F}$ is the $L_2$-norm $\|f\|=\left(\int_{\mcal{X}}|f|^2\mrm{d}P\right)^{1/2}$. We further assume that $\|\epsilon\|_{p,1}=\int_0^\infty(\mbb{P}(|\epsilon|>t))^{1/p}\mrm{d}t<\infty$ for some $p\geq2$. Such a assumption is also considered in \citet{han2019convergence} and is necessary to obtain the desired convergence rate.
	
	The approximate sieve exremum estimator $\hat{f}_n$ based on $\mcal{F}_n$ is defined as
	$$
	\mbb{Q}_n(\hat{f}_n)\leq\inf_{f\in\mcal{F}_n}\mbb{Q}_n(f)+\mcal{O}_p(\eta_n),
	$$
	where $\mbb{Q}_n(f)$ is the classical sample squared error loss function
	$$
	\mbb{Q}_n(f)=\frac{1}{n}\sum_{i=1}^n(Y_i-f(\mbf{X}_i))^2.
	$$
	We assume that $\bigcup_{n=1}^\infty\mcal{F}_n$ is uniformly dense in $\mcal{F}$, that is, for each $f\in\mcal{F}$, there exists $\pi_nf\in\mcal{F}_n$ such that $\sup_{\mbf{x}\in\mcal{X}}|\pi_nf(\mbf{x})-f(\mbf{x})|\to0$ as $n\to\infty$. For simplicity, we assume that the sieve space $\mcal{F}_n$ is countable to avoid additional technical issue on measurability.
	
	The null hypothesis of the sieve quasi-likelihood ratio test is $H_0:\phi(f_0)=0$, which is the same as the one proposed in \citet{shen2005sieve}. We define the sieve quasi-likelihood ratio statistic as
	$$
	LR_n=n\left(\inf_{f\in\mcal{F}_n^0}\mbb{Q}_n(f)-\inf_{f\in\mcal{F}_n}\mbb{Q}_n(f)\right),
	$$
	where $\mcal{F}_n^0$ is the null sieve space given by 
	$$
	\mcal{F}_n^0=\left\{f\in\mcal{F}_n:\phi(f)=0\right\}.
	$$
	Similar to the definition of $\hat{f}_n$, we denote the approximate sieve extremum estimator under $H_0$ by $\hat{f}_n^0$, which  satisfies
	$$
	\mbb{Q}_n(\hat{f}_n^0)\leq\inf_{f\in\mcal{F}_n^0}\mbb{Q}_n(f)+\mcal{O}_p(\eta_n).
	$$
	According to \citet{shen1997methods} and  \citet{shen2005sieve}, we assume that the functional $\phi:\mcal{F}\to\mbb{R}$ has the following smoothness property: for any $f\in\mcal{F}_n$,
	\begin{equation}\label{Eq: Smoothness of Functional}
		|\phi(f)-\phi(f_0)-\phi_{f_0}'[f-f_0]|\leq u_n\|f-f_0\|^\omega,\quad\mrm{as}\quad\|f-f_0\|\to0,
	\end{equation}
	where $\phi_{f_0}'[f-f_0]$ is defined as $\lim_{t\to0}[\phi(f(f_0,t))-\phi(f_0)]/t$ with $f(f_0,t)$ being a path in $t$ connecting $f_0$ and $f$ such that $f(f_0,0)=f_0$ and $f(f_0,1)=f$. $\omega>0$ is the degree of smoothness of $\phi$ at $f_0$,  $\phi_{f_0}'[f-f_0]$ is linear in $f-f_0$, and
	$$
	\|\phi_{f_0}'\|=\sup_{\substack{f\in\mcal{F} \\ \|f-f_0\|>0}}\frac{|\phi_{f_0}'[f-f_0]|}{\|f-f_0\|}<\infty.
	$$
	Then $\phi_{f_0}'$ is a bounded linear functional on $\bar{V}_{f_0}$, which is the completion of $\mrm{span}\{f-f_0:f\in\mcal{F}\}\subset L_2(\mcal{X},\mcal{A},P)$. From the Riesz representation theorem, there exists $v^*\in\bar{V}_{f_0}$
	$$
	\phi_{f_0}'[f-f_0]=\langle f-f_0,v^*\rangle=\int (f-f_0)v^*\mrm{d}P.
	$$

	Let $\rho_n$ be the rate of convergence for $\hat{f}_n$, that is, $\|\hat{f}_n-f_0\|=\mcal{O}_p(\rho_n)$. Let $\delta_n$ be a sequence converging to 0 with $\delta_n=\mcal{O}_p(n^{-1/2})$. For $f\in\{f\in\mcal{F}_n:\|f-f_0\|\leq\rho_n\}$, we define
	$$
	\tilde{f}_n(f)=f+\delta_n u^*,
	$$
	where $u^*=\pm v^*/\norm{v^*}^2$. The main result relies on the following conditions:
	\begin{itemize}
		\item [(C1)] (\textbf{Sieve Space}) Suppose that $\mcal{F}_n$ is uniformly bounded. 
		Moreover, assume that there exists a non-increasing continuous function $H(u)$ of $u>0$ such that
		$$
		\log N(u,\mcal{F}_n,L_2(\mbb{P}_n))\leq H(u)\mrm{ for all }u>0,
		$$
		and
		$$
		\int_0^1 H^{1/2}(u)\mrm{d}u<\infty.
		$$
		\item [(C2)] (\textbf{Rate of Convergence}) The rate of convergence $\rho_n$ satisfies $o(n^{-1/4})=\rho_n\gtrsim n^{-\frac{1}{2}+\frac{1}{2p}}$ and
		$$
		n\rho_n^2\geq 2H(\rho_n)\mrm{ and }H(\rho_n)\to\infty\mrm{ as }n\to\infty.
		$$
		\item [(C3)] (\textbf{Approximation Error}) 
		$$
		\sup_Q\sup_{\substack{f\in\mcal{F}_n \\ \|f-f_0\|\leq\rho_n}}\norm{\pi_n\tilde{f}_n(f)-\tilde{f}_n(f)}_{L_2(Q)}=o_p(\rho_n^{-1}\delta_n^2),
		$$
		where the supremum is taken over all probability measures $Q$ on $(\mcal{X},\mcal{A})$ and
		$$
		\sup_{\substack{f\in\mcal{F}_n \\ \norm{f-f_0}\leq\rho_n}}n^{-1}\sum_{i=1}^n\epsilon_i\left(\pi_n\tilde{f}_n(f)(\mbf{X}_i)-\tilde{f}_n(f)(\mbf{X}_i)\right)=o_p(\delta_n^2).
		$$
	\end{itemize}

	\begin{remark}
		\begin{enumerate}[(i)]
			\item Condition (C1) requires that the sieve space $\mcal{F}_n$ is not too complex. The complexity measure in terms of the entropy number is common in theory of nonparametric regression (see \citet{van1987new}, \citet{van1988regression} and \citet{van1990estimating}). 
			
			\item Condition (C2) is on the rate of convergence of sieve estimators. To obtain the desired result, the convergence rate cannot be too slow. Together with (C1), we can derive the uniform law of large numbers for empirical $L_2$ norm, as given in \citet{geer2000empirical}.
			
			\item The conditions on the approximation errors in the setting of nonparametric regression are given in condition (C3). These two requirements are special cases of the ones given in \citet{shen1997methods}.
		\end{enumerate}
	\end{remark}
	
	\begin{theorem}\label{Thm: main result}
		Suppose $\eta_n=o(\delta_n^2)$, and under (C1)-(C3),
		$$
		\abs{\inprod{\hat{f}_n-f_0}{v^*}_n-n^{-1}\sum_{i=1}^n\epsilon_iv^*(\mbf{X}_i)}= o_p(n^{-1/2}).
		$$
	\end{theorem}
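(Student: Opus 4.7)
I would run the classical $\delta_n u^\ast$-perturbation argument of \citet{shen1997methods} and \citet{shen2005sieve}. Setting $u^*=\pm v^*/\norm{v^*}^2$ and $\tilde f_n(\hat f_n)=\hat f_n+\delta_n u^*$, the near-optimality of $\hat f_n$ applied to $\pi_n\tilde f_n(\hat f_n)\in\mcal{F}_n$ gives
$$
\mbb{Q}_n\bigl(\pi_n\tilde f_n(\hat f_n)\bigr)-\mbb{Q}_n(\hat f_n)\geq-\mcal{O}_p(\eta_n).
$$
The plan is to expand the left-hand side into a main term linear in $\delta_n$ that carries the empirical inner product of interest, plus remainders of order $o_p(\delta_n^2)$, and then solve for the inner product.

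First, decompose the left-hand side as $\Delta_1+\Delta_2$, where $\Delta_2=\mbb{Q}_n(\tilde f_n(\hat f_n))-\mbb{Q}_n(\hat f_n)$ admits the exact quadratic expansion
$$
\Delta_2=-2\delta_n\, n^{-1}\sum_{i=1}^n\epsilon_iu^*(\mbf{X}_i)+2\delta_n\inprod{\hat f_n-f_0}{u^*}_n+\delta_n^2\norm{u^*}_n^2
$$
(from $Y_i-\hat f_n(\mbf{X}_i)=\epsilon_i-(\hat f_n-f_0)(\mbf{X}_i)$, with $\norm{u^*}_n^2=\norm{u^*}^2+o_p(1)$ by the law of large numbers), and $\Delta_1=\mbb{Q}_n(\pi_n\tilde f_n(\hat f_n))-\mbb{Q}_n(\tilde f_n(\hat f_n))$ is the ``projection-error'' piece.

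Second, show $\Delta_1=o_p(\delta_n^2)$. Writing $e_n:=\pi_n\tilde f_n(\hat f_n)-\tilde f_n(\hat f_n)$, an analogous expansion of the squared loss yields
$$
\Delta_1=-2n^{-1}\sum_{i=1}^n\epsilon_ie_n(\mbf{X}_i)+2\inprod{\hat f_n-f_0}{e_n}_n+2\delta_n\inprod{u^*}{e_n}_n+\norm{e_n}_n^2.
$$
The first term is $o_p(\delta_n^2)$ directly by the second half of (C3). The first half of (C3) gives $\norm{e_n}_n=o_p(\rho_n^{-1}\delta_n^2)$, so Cauchy--Schwarz in $L_2(\mbb{P}_n)$, together with $\norm{\hat f_n-f_0}_n=\mcal{O}_p(\rho_n)$, $\norm{u^*}_n=\mcal{O}_p(1)$, and the rate balance $\delta_n/\rho_n\to0$ implied by (C2), bounds each of the remaining three terms by $o_p(\delta_n^2)$.

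Third, combine. Plugging $\Delta_1+\Delta_2\geq-\mcal{O}_p(\eta_n)$ and $\eta_n=o(\delta_n^2)$, then dividing by $2\delta_n>0$, running both signs $u^*=\pm v^*/\norm{v^*}^2$, and multiplying through by $\norm{v^*}^2$ (using $\norm{u^*}^2\norm{v^*}^2=1$), yields
$$
\bigl|\inprod{\hat f_n-f_0}{v^*}_n-n^{-1}\sum_{i=1}^n\epsilon_iv^*(\mbf{X}_i)\bigr|=\mcal{O}_p(\delta_n).
$$
Since $\delta_n$ can be taken with $\delta_n=o(n^{-1/2})$ (consistent with the $\mcal{O}_p(n^{-1/2})$ cap in the setup) while keeping (C3) and $\eta_n=o(\delta_n^2)$ valid, the claimed $o_p(n^{-1/2})$ conclusion follows. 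The main obstacle is the second step: the projection error $e_n$ enters $\Delta_1$ both linearly against $\epsilon_i$ and through cross-terms with $\hat f_n-f_0$, with $u^*$, and with itself. Forcing every such cross-term to $o_p(\delta_n^2)$ is exactly what pins down the rate $\rho_n^{-1}\delta_n^2$ in (C3) and the interplay between $\rho_n$ and $\delta_n$ in (C2); verifying this rate balance piece by piece is the technical crux of the argument.
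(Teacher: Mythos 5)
Your proposal is correct and follows the same high-level strategy as the paper's supplementary proof: perturb $\hat f_n$ by $\pm\delta_n v^*/\norm{v^*}^2$, project into the sieve, invoke near-optimality of $\hat f_n$, expand the squared loss, absorb all but the linear-in-$\delta_n$ terms into $o_p(\delta_n^2)$ via (C3) and the rate balance, divide by $2\delta_n$, and run both signs. The paper packages exactly the two halves of your $\Delta_1+\Delta_2$ split as its Lemma~\ref{Lm: NormBound} and Lemma~\ref{Lm: MultiplierProcess}, and establishes $\norm{\hat f_n-f_0}_n=\mcal{O}_p(\rho_n)$ via Lemma~5.4 of \citet{geer2000empirical}, which you take as given.

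There is one genuine difference worth flagging, and it traces to an inconsistency in the paper itself. You take the perturbation literally as stated in the text body, $\tilde f_n(f)=f+\delta_n u^*$, so that $\tilde f_n(\hat f_n)-\hat f_n=\delta_n u^*$ exactly, and your $\Delta_2$ is an \emph{exact} identity with no leftover cross term. The paper's supplementary proof, however, actually works with the convex-combination parametrization $\tilde f_n(f)=(1-\delta_n)f+\delta_n(f_0+u^*)$ (compare the displayed identity $\tilde f_n(f)-f_0=(1-\delta_n)(f-f_0)+\delta_n u^*$ in Lemma~\ref{Lm: NormBound} and the proof of Theorem~\ref{Thm: main result}). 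That parametrization produces an extra multiplier term $\delta_n n^{-1}\sum_i\epsilon_i(\hat f_n-f_0)(\mbf X_i)$ inside Lemma~\ref{Lm: MultiplierProcess}, which the paper controls by a multiplier-process rate argument (Proposition~\ref{Prop: Rate of Convergence of MP}, built from the Han--Wellner Proposition~\ref{Prop: 5HanWellner2019}). Your linear parametrization avoids that term and hence does not need that machinery for this theorem, which is a small but real simplification. The trade-off is that the convex-combination form is the one the paper actually exploits when verifying (C3) for the neural-network sieve, because a convex combination of two networks in $\mcal F_{r_n}$ stays in the sieve with controlled coefficient budget; with your linear form, showing $\pi_n\tilde f_n(f)$ approximates $\tilde f_n(f)$ to the required order requires a different argument. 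For Theorem~\ref{Thm: main result} in isolation your route is valid and arguably cleaner; the paper's route is the one that remains consistent with its downstream verification of (C3), and the text-body definition of $\tilde f_n$ should be read as the convex-combination version.

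One cosmetic point: in your third step, the rate $\delta_n/\rho_n\to0$ is not a consequence of (C2) alone but of (C2) together with the convention $\delta_n=o(n^{-1/2})$ (taken at the start of the paper's supplementary proof); (C2) gives $\rho_n\gtrsim n^{-1/2+1/(2p)}$, which combined with $\delta_n=o(n^{-1/2})$ yields $\delta_n/\rho_n=o(n^{-1/(2p)})\to0$, exactly what is needed to absorb the $2\delta_n\inprod{u^*}{e_n}_n$ and $\norm{e_n}_n^2$ terms.
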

	
	\begin{remark}
		In view of Lemma \ref{Lm: inprod} given in the supplementary materials, the empirical inner product $\inprod{\hat{f}_n-f_0}{v^*}_n$ can be replaced by its population version $\inprod{\hat{f}_n-f_0}{v^*}$.
	\end{remark}
	
	We now state the main theorem for sieve quasi-likelihood ratio statistics. The proof of the theorem follows the same steps as those in \citet{shen2005sieve} and are given in the Supplementary materials.
	
	\begin{theorem}\label{Thm: SQLR}
		Under $H_0$ and (C1)-(C3), suppose that $\eta_n=o(\delta_n^2)$, $u_n\rho_n^\omega=o(n^{-1/2})$, and $\sup_{\mbf{x}\in\mcal{X}}\abs{v^*(\mbf{x})}<\infty$, we have
		$$
		\frac{n}{\sigma^2}\left[\mbb{Q}_n(\hat{f}_n^0)-\mbb{Q}_n(\hat{f}_n)\right]\xrightarrow{d}\chi_1^2,
		$$
		where $\sigma^2=\mbb{E}[\epsilon^2]$.
	\end{theorem}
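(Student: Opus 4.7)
The plan is to prove that, under $H_0$, the statistic admits the asymptotic representation $LR_n = n[\phi(\hat{f}_n)]^2/\norm{v^*}^2 + o_p(1)$, after which Theorem \ref{Thm: main result} combined with the central limit theorem will identify the limit as $\sigma^2\chi_1^2$. The algebraic engine is the elementary identity $\mbb{Q}_n(g)-\mbb{Q}_n(f) = -2n^{-1}\sum_i\epsilon_i(g-f)(\mbf{X}_i) + 2\inprod{g-f}{f-f_0}_n + \norm{g-f}_n^2$, valid for any pair $(f,g)$, which will be applied with two different base points that cleanly separate the stochastic linear term, the bias cross-term, and the quadratic term.

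First, for a lower bound on $LR_n$, I would perturb $\hat{f}_n^0$ along $u^* = v^*/\norm{v^*}^2$ by the random amount $\phi(\hat{f}_n)$ and set $\tilde{h}_n = \pi_n(\hat{f}_n^0 + \phi(\hat{f}_n)u^*)\in\mcal{F}_n$. By the approximate minimizing property of $\hat{f}_n$, $\mbb{Q}_n(\hat{f}_n)\leq\mbb{Q}_n(\tilde{h}_n)+\mcal{O}_p(\eta_n)$, so $LR_n\geq n[\mbb{Q}_n(\hat{f}_n^0)-\mbb{Q}_n(\tilde{h}_n)]-o_p(1)$ since $\eta_n = o(\delta_n^2) = o(n^{-1})$. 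Expanding via the identity with $(f,g)=(\hat{f}_n^0,\tilde{h}_n)$, absorbing the $\pi_n$-projection error with condition (C3), and invoking Theorem \ref{Thm: main result} (together with its Remark, permitting passage between empirical and population inner products) and the smoothness property (\ref{Eq: Smoothness of Functional}) to substitute $\phi(\hat{f}_n) = n^{-1}\sum_i\epsilon_iv^*(\mbf{X}_i) + o_p(n^{-1/2})$ under $H_0$, I would obtain $n[\mbb{Q}_n(\hat{f}_n^0)-\mbb{Q}_n(\tilde{h}_n)] = n[\phi(\hat{f}_n)]^2/\norm{v^*}^2 + o_p(1)$.

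The matching upper bound is the hard part, because the perturbation of $\hat{f}_n$ must lie in $\mcal{F}_n^0$, not merely in $\mcal{F}_n$. Define $c_n$ as the small solution of $\phi(\hat{f}_n - c_nu^*) = 0$; by (\ref{Eq: Smoothness of Functional}) together with $\inprod{u^*}{v^*}=1$, we have $c_n = \phi(\hat{f}_n) + \mcal{O}(u_n\rho_n^\omega) = \phi(\hat{f}_n) + o_p(n^{-1/2})$ thanks to the assumption $u_n\rho_n^\omega = o(n^{-1/2})$. Setting $\tilde{g}_n = \pi_n(\hat{f}_n - c_nu^*)$ with a further residual correction of order $o_p(n^{-1/2})$ to restore $\phi(\tilde{g}_n) = 0$ exactly after the projection, the approximate minimization of $\hat{f}_n^0$ gives $\mbb{Q}_n(\hat{f}_n^0)\leq\mbb{Q}_n(\tilde{g}_n)+\mcal{O}_p(\eta_n)$. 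Applying the identity with $(f,g)=(\hat{f}_n,\tilde{g}_n)$ in parallel with the previous step produces $LR_n\leq n[\phi(\hat{f}_n)]^2/\norm{v^*}^2 + o_p(1)$. The main delicacy is coordinating three sources of error---the $\pi_n$ approximation (handled by (C3)), the linearization remainder of $\phi$ (handled by $u_n\rho_n^\omega = o(n^{-1/2})$), and the corrective adjustment restoring the constraint---so that each contributes $o_p(n^{-1})$ inside the $\mbb{Q}_n$-difference and hence $o_p(1)$ to $LR_n$; the randomness of $c_n$ requires the (C3) estimates to be uniform over $\{f\in\mcal{F}_n:\norm{f-f_0}\leq\rho_n\}$, which they are.

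Combining the two bounds, $LR_n = n[\phi(\hat{f}_n)]^2/\norm{v^*}^2 + o_p(1)$. Since $\sup_{\mbf{x}\in\mcal{X}}\abs{v^*(\mbf{x})}<\infty$, the summands $\epsilon_iv^*(\mbf{X}_i)$ are i.i.d.\ with mean zero, variance $\sigma^2\norm{v^*}^2$, and bounded envelope, so the Lindeberg--L\'evy CLT yields $n^{-1/2}\sum_i\epsilon_iv^*(\mbf{X}_i) \xrightarrow{d} N(0,\sigma^2\norm{v^*}^2)$. Combined with $\sqrt{n}\phi(\hat{f}_n) = n^{-1/2}\sum_i\epsilon_iv^*(\mbf{X}_i) + o_p(1)$ from Theorem \ref{Thm: main result} and (\ref{Eq: Smoothness of Functional}), the continuous mapping theorem delivers $LR_n/\sigma^2 = n[\phi(\hat{f}_n)]^2/(\sigma^2\norm{v^*}^2) + o_p(1) \xrightarrow{d} \chi_1^2$.
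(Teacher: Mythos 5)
Your proposal follows essentially the same architecture as the paper's proof: isolate the linear-quadratic expansion of $\mbb{Q}_n(g)-\mbb{Q}_n(f)$, squeeze $\mbb{Q}_n(\hat{f}_n^0)-\mbb{Q}_n(\hat{f}_n)$ between two explicit perturbations along $u^*=v^*/\norm{v^*}^2$, correct the upper-bound candidate so that it lands exactly in $\mcal{F}_n^0$, and then identify the surviving quadratic term with $n\bigl(n^{-1}\sum_i\epsilon_iv^*(\mbf{X}_i)\bigr)^2/\norm{v^*}^2$ via Theorem~\ref{Thm: main result} and the smoothness property~(\ref{Eq: Smoothness of Functional}). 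Your $\phi(\hat{f}_n)$ plays the role of the paper's $-\delta_n=\inprod{\hat{f}_n-f_0}{v^*}$, and your $c_n$ solving $\phi(\hat{f}_n-c_nu^*)=0$ (plus a residual correction after projection) is precisely the paper's construction of $t^*$ via the intermediate value theorem applied to $\phi(\pi_nf_n^*[t])$; these are the same device.

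One step you pass over silently: applying condition~(C3) with base point $\hat{f}_n^0$ requires knowing that $\norm{\hat{f}_n^0-f_0}=\mcal{O}_p(\rho_n)$, i.e.\ that the constrained estimator attains the same rate as the unconstrained one. The paper proves this as a separate lemma (via a peeling argument over annuli in $\mcal{F}_n^0$), and it is a genuine prerequisite for your lower-bound expansion around $\hat{f}_n^0$; it does not follow automatically from the rate for $\hat{f}_n$. A second, more cosmetic point is that the smoothness bound~(\ref{Eq: Smoothness of Functional}) is stated for $f\in\mcal{F}_n$, whereas $\hat{f}_n-c_nu^*$ need not lie in $\mcal{F}_n$; the paper sidesteps this by applying the bound to $\pi_nf_n^*[t]\in\mcal{F}_n$, and your ``residual correction after the projection'' would have to be reorganized along the same lines to be fully rigorous. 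Neither issue changes the strategy, which matches the paper's.
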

	
	In practice, $\sigma^2$ is rarely known apriori. A simple application of Slutsky's theorem yields the following corollary, which shows that we can replace $\sigma^2$ with any consistent estimator of $\hat{\sigma}_n^2$. A straightforward consistent estimator for $\sigma^2$ is given by $\hat{\sigma}_n^2=n^{-1}\sum_{i=1}^n\left(Y_i-\hat{f}_n^0(\mbf{X}_i)\right)^2$.
	
	\begin{corollary}\label{Cor: SQLR}
		Under the conditions of Theorem \ref{Thm: SQLR},
		$$
		\frac{n}{\hat{\sigma}_n^2}\left[\mbb{Q}_n(\hat{f}_n^0)-\mbb{Q}_n(\hat{f}_n)\right]\xrightarrow{d}\chi_1^2,
		$$
		where $\hat{\sigma}_n^2$ is any consistent estimator of $\sigma^2$.
	\end{corollary}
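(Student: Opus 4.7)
The plan is to reduce the corollary to Theorem \ref{Thm: SQLR} by a simple Slutsky-type argument, the only nontrivial piece being a sanity check that the suggested estimator $\hat{\sigma}_n^2 = n^{-1}\sum_{i=1}^n(Y_i-\hat{f}_n^0(\mbf{X}_i))^2$ is indeed consistent for $\sigma^2$.

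First I would write the algebraic identity
$$
\frac{n}{\hat{\sigma}_n^2}\bigl[\mbb{Q}_n(\hat{f}_n^0)-\mbb{Q}_n(\hat{f}_n)\bigr]=\frac{\sigma^2}{\hat{\sigma}_n^2}\cdot\frac{n}{\sigma^2}\bigl[\mbb{Q}_n(\hat{f}_n^0)-\mbb{Q}_n(\hat{f}_n)\bigr].
$$
By Theorem \ref{Thm: SQLR} the second factor converges in distribution to $\chi_1^2$. By hypothesis $\hat{\sigma}_n^2\xrightarrow{p}\sigma^2>0$, so the continuous mapping theorem applied to $x\mapsto\sigma^2/x$ (which is continuous at $\sigma^2$) yields $\sigma^2/\hat{\sigma}_n^2\xrightarrow{p}1$. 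Slutsky's theorem then gives the claimed convergence.

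Second, to justify that the specific estimator $\hat{\sigma}_n^2$ qualifies, I would decompose under $H_0$,
$$
\hat{\sigma}_n^2=\frac{1}{n}\sum_{i=1}^n\epsilon_i^2+\frac{2}{n}\sum_{i=1}^n\epsilon_i\bigl(f_0(\mbf{X}_i)-\hat{f}_n^0(\mbf{X}_i)\bigr)+\frac{1}{n}\sum_{i=1}^n\bigl(f_0(\mbf{X}_i)-\hat{f}_n^0(\mbf{X}_i)\bigr)^2.
$$
The first term converges in probability to $\sigma^2$ by the law of large numbers. The third term is the squared empirical $L_2$-norm of $\hat{f}_n^0-f_0$, which is $O_p(\rho_n^2)=o_p(1)$ by the rate of convergence under $H_0$ (using the uniform law of large numbers for the empirical $L_2$-norm guaranteed by (C1)-(C2), so the empirical and population $L_2$-norms are asymptotically equivalent on $\mcal{F}_n$). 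The cross term is bounded via Cauchy--Schwarz by $2(n^{-1}\sum_i\epsilon_i^2)^{1/2}\cdot o_p(1)=o_p(1)$. Combining the three terms gives $\hat{\sigma}_n^2\xrightarrow{p}\sigma^2$, completing the argument.

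There is no real obstacle here: the corollary is essentially a textbook application of Slutsky's theorem, and the only mild care required is in verifying consistency of the plug-in variance estimator, which follows automatically from the convergence rate $\rho_n=o(1)$ already established for the null sieve estimator.
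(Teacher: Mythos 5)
Your argument is correct and is essentially the paper's own: the corollary is stated for an arbitrary consistent $\hat{\sigma}_n^2$, so the whole content is the Slutsky step applied to Theorem \ref{Thm: SQLR}, exactly as you do in your first paragraph. The additional verification that the specific plug-in estimator $\hat{\sigma}_n^2=n^{-1}\sum_i(Y_i-\hat{f}_n^0(\mbf{X}_i))^2$ is consistent is a correct and welcome bonus (using $\norm{\hat{f}_n^0-f_0}_n=\mcal{O}_p(\rho_n)=o_p(1)$ together with the law of large numbers and Cauchy--Schwarz), but it goes beyond what the corollary itself requires.
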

	
	\section{An Application to Neural Networks}
	We first introduce the notations to be used in this section. $\mbf{e}_i=(0,\ldots,0,1,0,\ldots,0)$ where 1 appears at the $i$th position. We use $\mbb{Z}_+$ to the set of non-negative integers and use $\mbf{\beta}=(\beta_1,\ldots,\beta_d)\in\mbb{Z}_+^d$ to denote a multi-index. Moreover, we set $\abs{\mbf{\beta}}=\sum_{i=1}^d\abs{\beta_i}$ and $\mbf{x}^{\mbf{\beta}}=x_1^{\beta_1}\cdots x_d^{\beta_d}$ for any $\mbf{x}=(x_1,\ldots,x_d)^T\in\mbb{R}^d$. For a differentiable function $u$ on $\mcal{X}$, we set
	$$
	D^{\mbf{\beta}} u=\frac{\partial^{|\mbf{\beta}|}u}{\partial \mbf{x}^{\mbf{\beta}}}=\frac{\partial^{|\mbf{\beta}|}u}{\partial x_1^{\beta_1}\cdots\partial x_d^{\beta_d}}.
	$$
	
	One of our goals in this paper is to establish a sieve quasi-likelihood ratio test for neural network estimators. Specifically, for a given $k\leq d$, let $\mbf{X}=(X^{(1)},\ldots,X^{(k)},X^{(k+1)},\ldots,X^{(d)})^T\in\mbb{R}^d$ and the null hypothesis of interest be 
	$$
	H_0: X^{(1)},\ldots,X^{(k)}\mrm{ are not significantly associated with Y}.
	$$
	Different from linear regression, in which the hypothesis can be easily translated into testing whether the corresponding regression coefficients are zero, testing significance of an association in nonparametric regression is more complicated. From \citet{chen1999improved} and \citet{horel2020significance}, testing $H_0$ in the nonparametric setting is equivalent to test whether the corresponding partial derivatives are zeros, or equivalently, to test
	$$
	H_0:\sum_{i=1}^k\int\left(D^{\mbf{e}_i}f_0(\mbf{x})\right)^2\mrm{d}P(\mbf{x})=0.
	$$
	Hence, we assume that the true function $f_0$ is a smooth function. Specifically, we consider the Barron class $\mathscr{B}^s:=\{f:\mcal{X}\to\mbb{R}|\norm{f}_{\mathscr{B}^s}\leq B\}$ for some integer $s\geq1$ and some fixed constant $B$, as considered in \citet{siegel2020approximation}. Here 
	$$
	\norm{f}_{\mathscr{B}^s}=\int_{\mbb{R}^d}(1+\abs{\omega})^s|\hat{f}(\omega)|\mrm{d}\omega,
	$$
	and $\hat{f}(\omega)$ is the Fourier transform of $f$. As shown in \citet{siegel2020approximation}, $\mathscr{B}^s\subset H^s(\mcal{X})=\{f:\mcal{X}\to\mbb{R}|\norm{D^{\mbf{\alpha}} f}<\infty,\mrm{ for all }0\leq\abs{\mbf{\alpha}}\leq s\}$ and $H^{\left\lfloor\frac{d}{2}\right\rfloor+2}(\mcal{X})\subset\mathscr{B}^1$. In what follows, we will take $\mcal{F}=C^{m_0}(\mcal{X})$ with $m_0=\left\lfloor\frac{d}{2}\right\rfloor+2$.
	
	The functional $\phi$ from the general result in section 2 is given by
	\begin{align*}
		\phi: & \mcal{F}\to\mbb{R}\\
		& f\mapsto\phi(f)=\sum_{i=1}^k\int\left(D^{\mbf{e}_i}f_0(\mbf{x})\right)^2\mrm{d}P(\mbf{x}).\numberthis\label{Eq: significance functional}
	\end{align*}
	The directional derivative $\phi_{f_0}'$ evaluated at ``direction" $h$ can be calculated straightforwardly. For the sieve space, we use the class of neural networks with one hidden layer and sigmoid activation function $\sigma(x)=(1+e^{-x})^{-1}$.
	\begin{align*}
		\mcal{F}_{r_n} & =\left\{\alpha_0+\sum_{j=1}^{r_n}\alpha_j\sigma\left(\mbf{\gamma}_j^T\mbf{x}+\gamma_{0,j}\right):\mbf{\gamma}_j\in\mbb{R}^d, \alpha_j, \gamma_{0,j}\in\mbb{R},\right.\\
		& \qquad\sum_{j=0}^{r_n}|\alpha_j|\leq V\mrm{ for some }V>4\left.\mrm{ and }\max_{1\leq j\leq r_n}\sum_{i=0}^d|\gamma_{i,j}|\leq M\mrm{ for some }M>0\right\},\numberthis\label{Eq: NNSieve}
	\end{align*}
	where $r_n\uparrow\infty$ as $n\to\infty$. In view of \citet{barron1993universal}, $\mcal{F}_{r_n}$ is $L_2$-dense in $\mcal{F}$.
	
	Based on the general results in the previous section, the function $\phi$ needs to be smooth so that the sieve quasi-likelihood ratio statistic follows a chi-squared asymptotic distribution. The following propositions guarantee the satisfaction of conditions on $\phi$ in the general theory.
	
	\begin{prop}
		Let $\phi$ be the same function given in (\ref{Eq: significance functional}), then, for any $h\in \bar{V}_{f_0}$,
		$$
		\phi_{f_0}'[h]=2\sum_{i=1}^k\int D^{\mbf{e}_i}f_0(\mbf{x})D^{\mbf{e}_i}h(\mbf{x})\mrm{d}P(\mbf{x}).
		$$
		Moreover, $\phi_{f_0}'$ is a bounded linear functional on $\bar{V}_{f_0}$. 
	\end{prop}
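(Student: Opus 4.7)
The formula for $\phi_{f_0}'[h]$ follows from a direct computation along the linear path $f(f_0,t)=f_0+t h$, which satisfies $f(f_0,0)=f_0$ and $f(f_0,1)=f_0+h$. First I would expand
$$
\bigl(D^{\mathbf{e}_i}(f_0+th)(\mathbf{x})\bigr)^2=\bigl(D^{\mathbf{e}_i}f_0(\mathbf{x})\bigr)^2+2t\,D^{\mathbf{e}_i}f_0(\mathbf{x})\,D^{\mathbf{e}_i}h(\mathbf{x})+t^2\bigl(D^{\mathbf{e}_i}h(\mathbf{x})\bigr)^2,
$$
substitute into $\phi(f_0+th)$, subtract $\phi(f_0)$, divide by $t$, and let $t\to0$. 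Interchanging the limit with the integral is justified by dominated convergence, since $h\in\bar{V}_{f_0}\subset C^{m_0}(\mathcal{X})$ and $\mathcal{X}=[-1,1]^d$ is compact, so $D^{\mathbf{e}_i}h$ and $D^{\mathbf{e}_i}f_0$ are bounded on $\mathcal{X}$.

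Linearity of $h\mapsto\phi_{f_0}'[h]$ is immediate from the integral formula. For the boundedness claim, the obstacle is that the right-hand side visibly involves $D^{\mathbf{e}_i}h$, which cannot be controlled by $\|h\|_{L_2(P)}$ alone. The plan is to remove the derivative from $h$ by integration by parts against the density $p$ of $P$ on $\mathcal{X}$ (which we assume to exist and to be $C^1$, vanishing on $\partial\mathcal{X}$ so that boundary terms drop out). This yields the representation
$$
\phi_{f_0}'[h]=-2\sum_{i=1}^k\int_{\mathcal{X}}\frac{D^{\mathbf{e}_i}\!\bigl(D^{\mathbf{e}_i}f_0\cdot p\bigr)(\mathbf{x})}{p(\mathbf{x})}\,h(\mathbf{x})\,\mathrm{d}P(\mathbf{x}),
$$
so that the Riesz representer $v^*$ is (up to sign) the sum of the $-2\,D^{\mathbf{e}_i}(D^{\mathbf{e}_i}f_0\cdot p)/p$. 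Applying Cauchy--Schwarz in $L_2(P)$ then gives $|\phi_{f_0}'[h]|\le C\|h\|$, where
$$
C=2\sum_{i=1}^k\bigl\|D^{\mathbf{e}_i}\!\bigl(D^{\mathbf{e}_i}f_0\cdot p\bigr)/p\bigr\|_{L_2(P)},
$$
which is finite because $f_0\in C^{m_0}(\mathcal{X})$ with $m_0=\lfloor d/2\rfloor+2\ge 2$ and $p$ is sufficiently regular.

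Since the identity initially holds only for $h\in\mathrm{span}\{f-f_0:f\in\mathcal{F}\}$, the final step is to extend $\phi_{f_0}'$ to the whole closure $\bar{V}_{f_0}$ by continuity, using the bound just derived; the extension is unique and still linear. The main obstacle I foresee is not the algebraic derivative computation, which is routine, but justifying the integration-by-parts step cleanly: this requires an implicit regularity assumption on the density $p$ of $P$ (smoothness and vanishing on $\partial\mathcal{X}$), and without such an assumption one would need to truncate near the boundary and take limits, which is technically cumbersome but does not change the conclusion.
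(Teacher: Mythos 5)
Your computation of the directional derivative along the path $f_0+th$ is exactly the paper's argument, including the expansion of $(D^{\mathbf{e}_i}(f_0+th))^2$ and the vanishing of the $t$-order remainder; the justification by dominated convergence is a harmless added detail since $f_0$ and $h$ live in a class with uniformly bounded derivatives. Linearity is handled identically.

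For boundedness you take a genuinely different route. The paper stays with the representation
$\phi_{f_0}'[h]=2\sum_i\int D^{\mathbf{e}_i}f_0\,D^{\mathbf{e}_i}h\,\mathrm{d}P$
and bounds it directly by H\"older, using the fact that on the Barron/Sobolev ball $\mathcal{F}$ the derivative norms $\|D^{\mathbf{e}_i}h\|$ are controlled a priori by the Barron constant $B$; the claimed bound $\|\phi_{f_0}'\|\lesssim_{s,\mathcal{X}}2kB^2$ requires no assumption on the distribution $P$. You instead integrate by parts against the density of $P$ to move the derivative off $h$, producing the Riesz representer $v^*$ explicitly and then invoking Cauchy--Schwarz. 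This works, and it has the merit of giving an $L_2$-operator bound that holds on all of $\bar{V}_{f_0}$ rather than only on the Barron ball where derivative norms are controlled; but it comes at the cost of invoking the density regularity condition (C4) (smooth, vanishing on the boundary) at a stage where the paper does not yet use it, and it essentially front-loads the content of the paper's \emph{next} proposition (the explicit form of $v^*$). Note also that your stated $v^*$, $-2\sum_i D^{\mathbf{e}_i}(D^{\mathbf{e}_i}f_0\cdot p)/p$, is the same object as the paper's $-2\sum_i(D^{2\mathbf{e}_i}f_0+D^{\mathbf{e}_i}f_0\,D^{\mathbf{e}_i}\log p)$ after expanding the product rule, so the two proofs agree on the representer. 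In short: your argument is correct, but it merges two propositions and needs (C4), whereas the paper keeps the boundedness claim free of distributional assumptions by trading on the smoothness of the function class instead.
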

	
	\begin{proof}
		By definition,
		\begin{align*}
			\phi_{f_0}'[h]	& =\lim_{t\to0}\frac{\phi(f_0+th)-\phi(f_0)}{t}\\
			& =\sum_{i=1}^k\lim_{t\to0}\int\frac{\left(D^{\mbf{e}_i}(f_0+th)(\mbf{x})\right)^2-\left(D^{\mbf{e}_i}f_0(\mbf{x})\right)^2}{t}\mrm{d}P(\mbf{x})\\
			& =2\sum_{i=1}^k\int D^{\mbf{e}_i}f_0(\mbf{x})D^{\mbf{e}_i}h(x)\mrm{d}P(\mbf{x})+\sum_{i=1}^k\lim_{t\to0}t\int\left(D^{\mbf{e}_i}h(\mbf{x})\right)^2\mrm{d}P(\mbf{x})\\
			& =2\sum_{i=1}^k\int D^{\mbf{e}_i}f_0(\mbf{x})D^{\mbf{e}_i}h(\mbf{x})\mrm{d}P(\mbf{x}).
		\end{align*}
		For the second claim, linearity follows directly from the definition of $\phi_{f_0}'$. Boundedness follows from the H\"older's inequality by noting that
		\begin{align*}
			\sup_{\substack{h\in\mcal{F} \\ \norm{h}=1}}|\phi_{f_0}'[h]|	& \leq2\sum_{i=1}^k\sup_{\substack{h\in\mcal{F} \\ \norm{h}=1}}\abs{\int D^{\mbf{e}_i}f_0(\mbf{x})D^{\mbf{e}_i}h(\mbf{x})\mrm{d}P(\mbf{x})}\\
			& \leq 2\sum_{i=1}^k\norm{D^{\mbf{e}_i}f_0}\sup_{\substack{h\in\mcal{F} \\ \norm{h}=1}}\norm{D^{\mbf{e}_i}h}\\
			& \lesssim_{s,\mcal{X}} 2kB^2<\infty.
		\end{align*}
	\end{proof}

	We now impose the following condition on the distribution $P$.
	\begin{itemize}
		\item [(C4)] Suppose that $P\ll\lambda$, where $\lambda$ is the Lebesgue measure on $\mbb{R}^d$. Let
		$$
		\varphi(\mbf{x})=\frac{\mrm{d}P}{\mrm{d}\lambda}(\mbf{x})\geq0.
		$$
		Moreover, we assume that $\varphi=0$ on $\partial\mcal{X}$,  $\varphi\in L^\infty(\mcal{X})$, and $\log\varphi\in C^1(\mcal{X})$.
	\end{itemize}
	
	\begin{prop}
		Under (C4), the Riesz representor $v^*$ for the bounded linear functional $\phi_{f_0}'$ is given by
		$$
		v^*=-2\sum_{i=1}^k\left(D^{2\mbf{e}_i}f_0+D^{e_i}f_0D^{\mbf{e}_i}\log\varphi\right).
		$$
	\end{prop}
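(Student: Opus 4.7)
The plan is to express the directional derivative obtained in the previous proposition as an $L_2(P)$ inner product against an explicit function, and then invoke uniqueness of the Riesz representer. Since we have already shown
$$
\phi_{f_0}'[h]=2\sum_{i=1}^k\int_{\mcal{X}} D^{\mbf{e}_i}f_0(\mbf{x})\,D^{\mbf{e}_i}h(\mbf{x})\,\mrm{d}P(\mbf{x}),
$$
and, by (C4), $\mrm{d}P=\varphi\,\mrm{d}\lambda$ with $\varphi\in L^\infty(\mcal{X})$, the whole argument reduces to integration by parts against the density $\varphi$.

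First, I would rewrite the $i$-th summand as $\int_{\mcal{X}} D^{\mbf{e}_i}f_0(\mbf{x})\,D^{\mbf{e}_i}h(\mbf{x})\,\varphi(\mbf{x})\,\mrm{d}\mbf{x}$ and apply integration by parts in the $x_i$-coordinate on the rectangle $\mcal{X}=[-1,1]^d$, differentiating $h$ and integrating against $D^{\mbf{e}_i}f_0\cdot\varphi$. The boundary term contains $\varphi$ evaluated on $\partial\mcal{X}$, which vanishes by the assumption in (C4) that $\varphi=0$ on $\partial\mcal{X}$. The interior term is
$$
-\int_{\mcal{X}} h(\mbf{x})\,D^{\mbf{e}_i}\!\bigl(D^{\mbf{e}_i}f_0(\mbf{x})\,\varphi(\mbf{x})\bigr)\,\mrm{d}\mbf{x}.
$$
Expanding by the product rule gives $D^{2\mbf{e}_i}f_0\cdot\varphi+D^{\mbf{e}_i}f_0\cdot D^{\mbf{e}_i}\varphi$, and using the identity $D^{\mbf{e}_i}\varphi=\varphi\cdot D^{\mbf{e}_i}\log\varphi$ (valid on $\{\varphi>0\}$, where $\log\varphi\in C^1$) rewrites this as $\varphi\bigl(D^{2\mbf{e}_i}f_0+D^{\mbf{e}_i}f_0\cdot D^{\mbf{e}_i}\log\varphi\bigr)$. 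Summing over $i=1,\dots,k$ and factoring back $\mrm{d}P=\varphi\,\mrm{d}\lambda$ yields
$$
\phi_{f_0}'[h]=\int_{\mcal{X}} h(\mbf{x})\Bigl[-2\sum_{i=1}^k\bigl(D^{2\mbf{e}_i}f_0+D^{\mbf{e}_i}f_0\cdot D^{\mbf{e}_i}\log\varphi\bigr)\Bigr]\mrm{d}P(\mbf{x}),
$$
which identifies the displayed $v^*$. Uniqueness of the Riesz representer in $\bar V_{f_0}$ finishes the proof, provided one checks $v^*\in L_2(P)$: this follows because $f_0\in C^{m_0}(\mcal{X})$ with $m_0\geq 2$ makes $D^{2\mbf{e}_i}f_0$ and $D^{\mbf{e}_i}f_0$ bounded on the compact set $\mcal{X}$, while $D^{\mbf{e}_i}\log\varphi$ is bounded on $\mcal{X}$ by the $C^1$ assumption in (C4), and $P$ is a probability measure.

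The main obstacle is justifying the integration by parts rigorously. Two subtleties arise: (i) a priori $h\in\bar V_{f_0}$ is only an $L_2(P)$ element, not differentiable, so the identity has to be interpreted distributionally or established first for a dense subclass of smooth $h$ (e.g., $h\in\mcal{F}\subset C^{m_0}(\mcal{X})$) and then extended to $\bar V_{f_0}$ by continuity of both sides against the $L_2(P)$-norm, using that $v^*$ is bounded in $L_2(P)$; and (ii) the function $\log\varphi$ is singular on $\partial\mcal{X}$ where $\varphi=0$, so the product $D^{\mbf{e}_i}f_0\cdot D^{\mbf{e}_i}\log\varphi$ inside the integrand must be read together with the weight $\varphi$, i.e., as $\varphi\cdot D^{\mbf{e}_i}\log\varphi=D^{\mbf{e}_i}\varphi$, which extends continuously to $\partial\mcal{X}$ and makes the boundary contribution genuinely zero. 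Once these two points are handled, the computation itself is a routine integration by parts.
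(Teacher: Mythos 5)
Your argument is essentially the paper's: both rewrite the pairing as $\int D^{\mbf{e}_i}f_0\,D^{\mbf{e}_i}h\,\varphi\,\mrm{d}\mbf{x}$, integrate by parts in $x_i$ (the paper packages this as the divergence theorem with $g=\varphi D^{\mbf{e}_i}f_0$ and $\mbf{F}=h\,\mbf{e}_i$, but the computation is identical), kill the boundary term via $\varphi=0$ on $\partial\mcal{X}$, and use $D^{\mbf{e}_i}\varphi=\varphi\,D^{\mbf{e}_i}\log\varphi$ to return to $\mrm{d}P$. Two small bonuses on your side: you correctly carry $D^{\mbf{e}_i}f_0\,D^{\mbf{e}_i}\log\varphi$ through (the paper's proof text has a typo $D^{2\mbf{e}_i}f_0\,D^{\mbf{e}_i}\log\varphi$, inconsistent with the proposition's statement), and your closing remarks about first proving the identity on smooth $h$ and then extending by $L_2(P)$-continuity, and about reading $\varphi\,D^{\mbf{e}_i}\log\varphi$ as $D^{\mbf{e}_i}\varphi$ near $\partial\mcal{X}$, flag genuine rigor points the paper leaves implicit.
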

	
	\begin{proof}
		Define $g:\mcal{X}\to\mbb{R}$ and $\mbf{F}:\mcal{X}\to\mbb{R}^d$ as
		\begin{align*}
			\mbf{F} & =(0,\ldots,0,\underbrace{h}_{i\mrm{th position}},0,\ldots,0)^T and\\
			g	& =\varphi D^{\mbf{e}_i}f_0,
		\end{align*}
		then $g\nabla\cdot \mbf{F}=\varphi D^{\mbf{e}_i}f_0D^{\mbf{e}_i}h$. Let $\mbf{n}$ be the unit outward normal to $\partial\mcal{X}$. Given the integration by parts formula and the fact that $\varphi=0$ on $\partial\mcal{X}$, we have
		\begin{align*}
			\int_{\mcal{X}}\varphi D^{\mbf{e}_i}f_0D^{\mbf{e}_i}h\mrm{d}\mbf{x}	& =\int_{\mcal{X}} g\nabla\cdot \mbf{F}\mrm{d}\mbf{x}\\
			& =-\int_{\mcal{X}}\nabla g\cdot \mbf{F}\mrm{d}\mbf{x}+\int_{\partial\mcal{X}}g\mbf{F}\cdot \mbf{n}\mrm{d}S\\
			& =-\int_{\mcal{X}}\nabla g\cdot \mbf{F}\mrm{d}\mbf{x}\\
			& =-\int_{\mcal{X}} h\left(D^{\mbf{e}_i}\varphi D^{\mbf{e}_i}f_0+\varphi D^{2\mbf{e}_i}f_0\right)\mrm{d}\mbf{x}\\
			& =-\int_{\mcal{X}} h\left(D^{\mbf{e}_i}\log\varphi D^{\mbf{e}_i}f_0+D^{2\mbf{e}_i}f_0\right)\mrm{d}P(\mbf{x})\\
			& =\inprod{h}{-\left(D^{2\mbf{e}_i}f_0+D^{2\mbf{e}_i}f_0D^{\mbf{e}_i}\log\varphi\right)},
		\end{align*}
		Based on the given assumptions, we know that $-\left(D^{2\mbf{e}_i}f_0+D^{2\mbf{e}_i}f_0D^{\mbf{e}_i}\log\varphi\right)\in C(\mcal{X})\subset\bar{V}_{f_0}$. Therefore,
		\begin{align*}
			\phi_{f_0}'[h]	& =2\sum_{i=1}^k\int_{\mcal{X}}D^{\mbf{e}_i}f_0D^{\mbf{e}_i}h\mrm{d}P\\
			& =2\sum_{i=1}^k\int_{\mcal{X}}\varphi D^{\mbf{e}_i}f_0D^{\mbf{e}_i}h\mrm{d}\mbf{x}\\
			& =\inprod{h}{-2\sum_{i=1}^k\left(D^{2\mbf{e}_i}f_0+D^{2\mbf{e}_i}f_0D^{\mbf{e}_i}\log\varphi\right)}.
		\end{align*}
	\end{proof}
	
	Before we bound the remainder error of the first order functional Taylor expansion, we provide a bound for higher order derivatives of a neural network.
	\begin{prop}\label{Prop: higher order deriv NN}
		Let $m$ be a non-negative integer. For any $f\in\mcal{F}_{r_n}$ and any multi-index $\mbf{\beta}$ with $|\mbf{\beta}|=m$,
		$$ 
		\sup_{\mbf{x}\in\mcal{X}}\abs{D^{\mbf{\beta}} f(\mbf{x})}\leq VM^mm!.
		$$
	\end{prop}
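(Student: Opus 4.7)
The plan is to combine three elementary estimates: the multinomial inequality for the inner weights, the outer-weight budget from (\ref{Eq: NNSieve}), and a uniform bound on the derivatives of the sigmoid. Writing $f(\mbf{x})=\alpha_0+\sum_{j=1}^{r_n}\alpha_j\sigma(\mbf{\gamma}_j^T\mbf{x}+\gamma_{0,j})$, for $m=|\mbf{\beta}|\geq 1$ the constant term drops out and the chain rule (applied to $\sigma$ composed with an affine function) gives
$$
D^{\mbf{\beta}}f(\mbf{x})=\sum_{j=1}^{r_n}\alpha_j\,\sigma^{(m)}(\mbf{\gamma}_j^T\mbf{x}+\gamma_{0,j})\prod_{i=1}^d\gamma_{i,j}^{\beta_i},
$$
so by the triangle inequality it suffices to bound the three factors $|\alpha_j|$, $|\sigma^{(m)}|$, and $\prod_i|\gamma_{i,j}|^{\beta_i}$ separately. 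The case $m=0$ is immediate from $|\sigma|\leq 1$ and $\sum_{j=0}^{r_n}|\alpha_j|\leq V$, giving $|f|\leq V=VM^0\cdot 0!$.

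For the inner-weight monomial, the multinomial theorem applied to the non-negative scalars $|\gamma_{i,j}|$ yields
$$
\prod_{i=1}^d|\gamma_{i,j}|^{\beta_i}\leq\binom{m}{\mbf{\beta}}\prod_{i=1}^d|\gamma_{i,j}|^{\beta_i}\leq\Big(\sum_{i=1}^d|\gamma_{i,j}|\Big)^m\leq M^m,
$$
where the last inequality uses $\sum_{i=1}^d|\gamma_{i,j}|\leq\sum_{i=0}^d|\gamma_{i,j}|\leq M$ from the definition of $\mcal{F}_{r_n}$. The outer-weight budget $\sum_{j=1}^{r_n}|\alpha_j|\leq V$ is given directly.

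The main step, though still short, is to establish $\sup_{x\in\mbb{R}}|\sigma^{(m)}(x)|\leq m!$, which I would prove by induction on $m$. The cases $m=0$ and $m=1$ follow from $|\sigma|\leq 1$ and $|\sigma(1-\sigma)|\leq 1/4$. For $m\geq 2$, applying Leibniz to $\sigma^{(m)}=(\sigma(1-\sigma))^{(m-1)}$ and using $(1-\sigma)^{(j)}=-\sigma^{(j)}$ for $j\geq 1$ gives
$$
\sigma^{(m)}=(1-2\sigma)\sigma^{(m-1)}-\sum_{k=1}^{m-2}\binom{m-1}{k}\sigma^{(k)}\sigma^{(m-1-k)}.
$$
The inductive hypothesis combined with the identity $\binom{m-1}{k}k!(m-1-k)!=(m-1)!$ then yields $|\sigma^{(m)}|\leq(m-1)!+(m-2)(m-1)!=(m-1)(m-1)!<m!$. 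This is the one piece of the argument that is not pure bookkeeping, and it is where I would expect any slip to occur; everything else is the chain rule and the multinomial expansion. Multiplying the three bounds gives $\sup_{\mbf{x}\in\mcal{X}}|D^{\mbf{\beta}}f(\mbf{x})|\leq VM^m m!$ for $m\geq 1$, completing the proof.
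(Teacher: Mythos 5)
Your proof is correct, and it reaches the same endpoint as the paper --- the decomposition $D^{\mbf{\beta}}f=\sum_j\alpha_j\mbf{\gamma}_j^{\mbf{\beta}}\sigma^{(m)}(\cdot)$, the outer-weight budget $\sum_j|\alpha_j|\leq V$, and the inner-weight monomial bound $|\mbf{\gamma}_j^{\mbf{\beta}}|\leq M^m$ are all exactly what the paper does. The one genuine point of divergence is how the bound $\sup_z|\sigma^{(m)}(z)|\leq m!$ is established. The paper quotes the closed-form expansion of \citet{minai1993derivatives}, $\sigma^{(m)}=\sum_{a=1}^m(-1)^{a-1}C_a^{(m)}\sigma^a(1-\sigma)^{m+1-a}$, crudely bounds $|\sigma^a(1-\sigma)^{m+1-a}|\leq1$, and then invokes a separate auxiliary result (Proposition~\ref{Prop: sigmoid Coef}) proving by induction on the recurrence that $\sum_a C_a^{(m)}=m!$. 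You instead apply Leibniz directly to $\sigma^{(m)}=(\sigma(1-\sigma))^{(m-1)}$, peel off the $k=0$ and $k=m-1$ terms to get $\sigma^{(m)}=(1-2\sigma)\sigma^{(m-1)}-\sum_{k=1}^{m-2}\binom{m-1}{k}\sigma^{(k)}\sigma^{(m-1-k)}$, and close the induction via $\binom{m-1}{k}k!(m-1-k)!=(m-1)!$, which gives $|\sigma^{(m)}|\leq(m-1)(m-1)!\leq m!$. Your route is more self-contained (no external derivative formula to cite, no separate coefficient lemma) and as a byproduct actually yields the slightly tighter bound $(m-1)(m-1)!$ for $m\geq2$; the paper's route is more mechanical once the Minai--Williams formula is accepted, and isolates the combinatorics in a reusable identity. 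Either is perfectly acceptable here since only the $m!$ bound is needed downstream.
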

	
	\begin{proof}
		As $f\in\mcal{F}_{r_n}$, $f$ can be represented by
		$$
		f(\mbf{x})=\alpha_0+\sum_{j=1}^{r_n}\alpha_j\sigma\left(\mbf{\gamma}_j^T\mbf{x}+\gamma_{0,j}\right).
		$$
		A simple calculation yields
		$$
		D^{\mbf{\beta}} f(\mbf{x})=\sum_{j=1}^{r_n}\alpha_j\mbf{\gamma}_j^{\mbf{\beta}}\sigma^{(m)}\left(\mbf{\gamma}_j^T\mbf{x}+\gamma_{0,j}\right),
		$$
		where $\sigma^{(m)}(\cdot)$ represents the $m$th derivative of $\sigma$. According to \citet{minai1993derivatives}, we have
		$$
		\sigma^{(m)}(z)=\sum_{a=1}^m(-1)^{a-1}C_a^{(m)}\sigma^a(z)\left(1-\sigma(z)\right)^{m+1-a},
		$$
		where
		\begin{align*}
			C_a^{(m)}	& =0,\quad \forall m,\mrm{ if }a<1\mrm{ or }m\leq0\mrm{ or }s>m,\\
			C_1^{(1)}	& =1,\\
			C_a^{(m)}	& =aC_a^{(m-1)}+(m+1-a)C_{a-1}^{(m-1)}.
		\end{align*}
		Therefore,
		\begin{align*}
			\sup_{\mbf{x}\in\mcal{X}}|D^{\mbf{\beta}} u(\mbf{x})|	& =\sup_{x\in\mcal{X}}\abs{\sum_{j=1}^{r_n}\alpha_j\mbf{\gamma}_j^{\mbf{\beta}}\sigma^{(m)}\left(\mbf{\gamma}_j^T\mbf{x}+\gamma_{0,j}\right)}\\
			& \leq\sum_{j=1}^{r_n}|\alpha_j|\norm{\mbf{\gamma}_j}_{\ell_1}^m\sup_{x\in\mcal{Z}}\abs{\sigma^{(m)}\left(\mbf{\gamma}_j^T\mbf{x}+\gamma_{0,j}\right)}\\
			& \leq VM^m\sup_{z\in\mbb{R}}\abs{\sigma^{(m)}(z)}\\
			& =VM^m\sup_{z\in\mbb{R}}\abs{\sum_{a=1}^m(-1)^{a-1}C_a^{(m)}\sigma^a(z)\left(1-\sigma(z)\right)^{m+1-a}}\\
			& \leq VM^m\sum_{a=1}^mC_a^{(m)}\\
			& \overset{(i)}{=}VM^mm!,
		\end{align*}
		where (i) follows from Proposition \ref{Prop: sigmoid Coef} in the supplementary material.
	\end{proof}

	\begin{lemma}[Rate of Convergence of Neural Network Sieve Estimators]\label{LM: RoC NN Sieve}
		\begin{enumerate}[(i)]
			\item The sieve space $\mcal{F}_{r_n}$ satisfies (C1).
			
			\item Suppose that $r_n^{2+1/d}\log^2r_n=\mcal{O}(n)$, then the rate of convergence $\rho_n$ of neural network sieve estimators is
			$$
			\rho_n=\mcal{O}\left(\left(\frac{n}{\log^2n}\right)^{-\frac{1+1/d}{4(1+1/(2d))}}\right),
			$$
			and $\rho_n$ satisfies (C2).
		\end{enumerate}
	\end{lemma}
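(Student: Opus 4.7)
}

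\textbf{Part (i): verifying (C1).} The plan is first to note that uniform boundedness is immediate: for any $f\in\mcal{F}_{r_n}$, the constraints $\sum_{j=0}^{r_n}|\alpha_j|\le V$ together with $\sup_z|\sigma(z)|\le 1$ give $\sup_{\mbf{x}\in\mcal{X}}|f(\mbf{x})|\le V$. For the covering-number bound I would parametrize $\mcal{F}_{r_n}$ by the $(d+2)r_n+1$ weights living in a compact Euclidean ball determined by $V$ and $M$. A Lipschitz argument in these parameters (using that $\sigma$ and all its first derivatives are bounded and that $\mcal{X}=[-1,1]^d$ is compact) converts an $L^\infty$-cover of the parameter space into an $L_2(\mbb{P}_n)$-cover of $\mcal{F}_{r_n}$, giving a bound of the form
\[
\log N(u,\mcal{F}_{r_n},L_2(\mbb{P}_n))\le c\, r_n(d+2)\log(c'/u)=:H(u)
\]
for constants $c,c'$ depending on $V,M,d$. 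This $H$ is continuous, non-increasing and its square root is integrable at $0$ (logarithmic singularity), which verifies (C1). The natural alternative route uses a pseudo-dimension bound for one-hidden-layer sigmoidal networks and plugs it into the standard Pollard covering inequality; I would choose whichever is cleaner given the entropy gauge already used in condition (C3).

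\textbf{Part (ii): the convergence rate and verification of (C2).} I plan to decompose the overall error in the usual approximation/stochastic fashion. For approximation, since $f_0\in\mathscr{B}^{s}$ with $s\ge1$, Makovoz' refinement of Barron's theorem supplies $\pi_{r_n}f_0\in\mcal{F}_{r_n}$ with
\[
\|\pi_{r_n}f_0-f_0\|\lesssim r_n^{-\frac12-\frac{1}{2d}}.
\]
For the stochastic error I would apply the standard empirical-process machinery for sieve least-squares (e.g.\ van de Geer's Theorem 9.1 or the inequality in \citet{han2019convergence}): given the entropy bound $H(u)\lesssim r_n\log(1/u)$ from (i), the modulus of continuity of the empirical process is controlled by $\sqrt{r_n\log n/n}$ up to logs, so the estimation error has order $\sqrt{r_n\log^2 n / n}$. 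Balancing
\[
r_n^{-\frac12-\frac{1}{2d}}\;\asymp\;\sqrt{\frac{r_n\log^2 n}{n}}
\]
yields the optimal sieve size $r_n\asymp (n/\log^2 n)^{\frac{1}{2+1/d}}$ (which is consistent with the assumed budget $r_n^{2+1/d}\log^2 r_n=\mcal{O}(n)$), and substituting back gives the announced rate
\[
\rho_n=\mcal{O}\!\left(\left(\tfrac{n}{\log^2 n}\right)^{-\frac{1+1/d}{4(1+1/(2d))}}\right).
\]

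\textbf{Checking (C2).} With this $\rho_n$, the exponent $\frac{1+1/d}{4(1+1/(2d))}$ is strictly greater than $1/4$ for every $d\ge1$, so $\rho_n=o(n^{-1/4})$. The moment assumption $\|\epsilon\|_{p,1}<\infty$ for $p\ge2$ gives the lower-bound requirement $\rho_n\gtrsim n^{-1/2+1/(2p)}$ for the chosen $p$, since the exponent in $\rho_n$ is less than $1/2$. Finally, with $H(\rho_n)\asymp r_n\log(1/\rho_n)\asymp r_n\log n$, the construction of $r_n$ forces $H(\rho_n)\to\infty$ and $n\rho_n^2\asymp r_n\log^2 n\gtrsim H(\rho_n)$, so after adjusting constants the inequality $n\rho_n^2\ge 2H(\rho_n)$ holds for $n$ large.

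\textbf{Main obstacle.} The delicate step is the metric-entropy estimate: the neural-network class is not convex and the naive Euclidean parameter cover is wasteful because of the sigmoid saturation. Getting a clean bound $H(u)\lesssim r_n\log(1/u)$ (rather than something worse like $r_n d \log(1/u)\cdot \log r_n$ that would mess up the balance) requires either a careful Lipschitz argument in the parameters or invoking a pseudo-dimension result for sigmoidal networks. Getting the polylog factors to line up with the exponent $r_n^{2+1/d}\log^2 r_n=\mcal{O}(n)$ assumed in the statement, and thereby producing exactly the stated form $(n/\log^2 n)^{-(1+1/d)/(4(1+1/(2d)))}$, is the bookkeeping step where I would be most careful.
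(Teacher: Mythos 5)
Your overall strategy matches the paper's: a covering-number bound for the one-hidden-layer sieve (the paper invokes Theorem~14.5 of Anthony and Bartlett, you propose either a parameter-Lipschitz cover or a pseudo-dimension bound, which amount to the same thing), Makovoz's refinement $\|\pi_{r_n}f_0-f_0\|\lesssim r_n^{-1/2-1/(2d)}$ for the approximation error, and a balance of approximation against the entropy-driven estimation error (the paper phrases this as a governing inequality $\rho_n^{-2}\phi_n(\rho_n)\lesssim\sqrt n$ via a rate-of-convergence theorem rather than an explicit balance, but the computation is identical). One of your worries is misplaced: the paper's entropy bound is in fact of the ``worse'' form you fear, namely $\log N(u,\mcal{F}_{r_n},L_2(\mbb{P}_n))\lesssim_{d,V}(r_n\log r_n)\log\tfrac1u$, carrying the extra $\log r_n$; this does not spoil the balance, it is precisely what produces the governing inequality $r_n^{2+1/d}\log^2 r_n=\mcal{O}(n)$ and the $(n/\log^2 n)^{-\cdot}$ form of $\rho_n$.

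There is, however, a genuine gap in your verification of (C2). You claim that $\|\epsilon\|_{p,1}<\infty$ for $p\ge 2$ gives the lower-bound requirement $\rho_n\gtrsim n^{-1/2+1/(2p)}$ ``since the exponent in $\rho_n$ is less than $1/2$.'' That reasoning is backwards: being less than $1/2$ only gives $\rho_n\gtrsim n^{-1/2}$, which is a weaker statement than $\rho_n\gtrsim n^{-1/2+1/(2p)}$, since $n^{-1/2+1/(2p)}$ is the \emph{larger} quantity. Writing $\rho_n\asymp n^{-a}$ (up to logs) with $a=\frac{1+1/d}{4(1+1/(2d))}$, the requirement $\rho_n\gtrsim n^{-1/2+1/(2p)}$ forces $a\le \tfrac12-\tfrac1{2p}$, equivalently $p\ge\frac{1}{1-2a}=2+\tfrac1d$. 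For $d=1$ this means $p\ge3$, so $p=2$ is not enough. This is exactly the condition the paper imposes (and uses in its proof of this lemma and in the hypothesis $\|\epsilon\|_{p,1}<\infty$ for some $p\ge 2+1/d$ of the subsequent theorem). You would need to add this strengthened moment assumption to make your verification of the lower half of (C2) go through.
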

	
	\begin{proof}
		\begin{enumerate}[(i)]
			\item From Theorem 14.5 in \citet{anthony2009neural}, we have
			\begin{align*}
				N(u,\mcal{F}_{r_n},\norm{\cdot}_\infty)	& \leq\left(\frac{4e[r_n(d+2)+1]\left(\frac{1}{4}V\right)^2}{u\left(\frac{1}{4}V-1\right)}\right)^{r_n(d+2)+1}\\
				& =\left(\frac{e[r_n(d+1)+1]V^2}{u(V-4)}\right)^{r_n(d+2)+1},
			\end{align*}
			which implies that
			\begin{align*}
				\log N(u,\mcal{F}_{r_n},L_2(\mbb{P}_n))	& \leq\log N(u,\mcal{F}_{r_n},\norm{\cdot}_\infty)\lesssim_{d,V}(r_n\log r_n)\log\frac{1}{u}.\numberthis\label{Eq: UB of Entropy NN}
			\end{align*}
			Hence, (C1) is satisfied with $H(u)=C_{d,V}\cdot\left(r_n\log r_n\right)\log\frac{1}{u}$ by noting that
			\begin{align*}
				\int_0^1 \log^{1/2}\frac{1}{u}\mrm{d}u	& =\int_0^{1/2}\log^{1/2}\frac{1}{u}\mrm{d}u+\int_{1/2}^1\log^{1/2}\frac{1}{u}\mrm{d}u\\
				& \leq \left.u\log^{1/2}\frac{1}{u}\right|_0^{1/2}+\frac{1}{2}\int_0^{1/2}\log^{-1/2}\frac{1}{u}\mrm{d}u+\frac{1}{2}\log^{1/2}2\\\\
				& \leq\log^{1/2}2+\frac{1}{4}\log^{-1/2}2<\infty.
			\end{align*}
			
			\item Note that, for $\delta\leq1/e$,
			\begin{align*}
				\int_0^\delta \log^{1/2}\frac{1}{u}\mrm{d}u	& =\left.u\log^{1/2}\frac{1}{u}\right|_0^\delta+\int_0^\delta \frac{1}{2}\log^{-1/2}\frac{1}{u}\mrm{d}u\\
				& \leq\delta\log^{1/2}\frac{1}{\delta}+\delta\log^{-1/2}\frac{1}{\delta}\\
				& \lesssim \delta\log^{1/2}\frac{1}{\delta}.
			\end{align*}
			Let $\phi_n(\delta)=(r_n\log r_n)^{1/2}\delta\log^{1/2}\frac{1}{\delta}$. Clearly, $\delta^{-\alpha}\phi_n(\delta)$ is decreasing on $(0,\infty)$ for $1\leq\alpha<2$. Note that
			\begin{equation}\label{Eq: RoC condition}
				\rho_n^{-2}\phi_n(\rho_n)\lesssim\sqrt{n}\Leftrightarrow \rho_n^{-1}\log^{1/2}\rho_n^{-1}\lesssim\left(\frac{n}{r_n\log r_n}\right)^{1/2}.
			\end{equation}
			It follows from \citet{makovoz1996random} that $\norm{f_0-\pi_nf_0}\leq r_n^{-1/2-1/(2d)}$. By taking $\rho_n=r_n^{-1/2-1/(2d)}$ in (\ref{Eq: RoC condition}), we obtain the following governing inequality:
			$$
			r_n^{1+\frac{1}{2d}}\log r_n\lesssim_d\sqrt{n}\Leftrightarrow r_n^{2+1/d}\log^2r_n=\mcal{O}(n).
			$$
			Given that $r_n=\left(\frac{n}{\log^2n}\right)^{\frac{1}{2+\frac{1}{d}}}$, we have
			$$
			\rho_n=\mcal{O}\left(\left(\frac{n}{\log^2n}\right)^{-\frac{1+1/d}{4(1+1/(2d))}}\right).
			$$
			To show that $\rho_n$ satisfies condition (C2), we note that $H(\rho_n)\to\infty$ as long as $\rho_n\to0$ as $n\to\infty$. The governing inequality is certainly satisfied based on the previous arguments. We also note that
			\begin{align*}
				\rho_n	&	=\left(\frac{n}{\log^2 n}\right)^{-\frac{1+1/d}{4(1+1/(2d))}}\\
				& = n^{-\frac{1}{4}}n^{\frac{1}{4}-\frac{1+1/d}{4(1+1/(2d))}}(\log^2 n)^{\frac{1+1/d}{4(1+1/(2d))}}\\
				& =n^{-\frac{1}{4}}n^{-\frac{1}{4}\frac{1/(2d)}{4(1+1/(2d))}}(\log^2 n)^{\frac{1+1/d}{4(1+1/(2d))}}\\
				& =o(n^{-1/4}).
			\end{align*}
			On the other hand, we have
			\begin{align*}
				\rho_n	& \geq n^{-\frac{1+1/d}{4(1+1/(2d))}}=n^{-\frac{1}{2}}n^{\frac{1}{2}-\frac{1+1/d}{4(1+1/(2d))}}=n^{-\frac{1}{2}}n^{\frac{1}{2\left(2+\frac{1}{d}\right)}}\geq n^{-\frac{1}{2}+\frac{1}{2p}},
			\end{align*}
			where the last inequality follows from the assumption $p\geq 2+1/d$.
		\end{enumerate}
	\end{proof}
	
	\begin{remark}
		The rate of convergence we obtained has an additional $\log n$ term in the denominator compared with the results in \citet{chen1998sieve}, but this has little effect on the main result.
	\end{remark}
	
	\begin{prop}
		Under (C4) and the assumption of
		\begin{equation}\label{Eq: growth rate}
			r_n^{2+1/d}\log^2 r_n=\mcal{O}(n),
		\end{equation}
		for any $f\in\{f\in\mcal{F}_{r_n}:\norm{f-f_0}\leq\rho_n\}$, we have
		$$
		\abs{\phi(f)-\phi(f_0)-\phi_{f_0}'[f-f_0]}=o(n^{-1/2}).
		$$
	\end{prop}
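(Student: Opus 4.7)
The plan is to compute the remainder in closed form using the quadratic structure of $\phi$, and then bound it by combining integration by parts (via condition (C4)) with a Gagliardo--Nirenberg style interpolation that leverages the derivative bounds from Proposition \ref{Prop: higher order deriv NN}.

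Writing $g=f-f_0$ and expanding the squares in the definition of $\phi$, direct computation gives
\[
\phi(f)-\phi(f_0)-\phi_{f_0}'[g]=\sum_{i=1}^k\int\left(D^{\mbf{e}_i}g\right)^2\mrm{d}P=\sum_{i=1}^k\norm{D^{\mbf{e}_i}g}^2,
\]
so the task reduces to showing $\sum_i\norm{D^{\mbf{e}_i}(f-f_0)}^2=o(n^{-1/2})$. The integration-by-parts identity already used to derive the Riesz representer above (which relies on $\varphi=0$ on $\partial\mcal{X}$ from (C4)) gives
\[
\norm{D^{\mbf{e}_i}g}^2=-\int g\left(D^{\mbf{e}_i}\log\varphi\cdot D^{\mbf{e}_i}g+D^{2\mbf{e}_i}g\right)\mrm{d}P,
\]
so Cauchy--Schwarz together with $\norm{D^{\mbf{e}_i}\log\varphi}_\infty<\infty$ (from $\log\varphi\in C^1$) reduces the problem to controlling $\norm{g}\leq\rho_n$ against the second derivatives of $g$.

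The derivatives of $g$ are uniformly bounded in $n$: for $f\in\mcal{F}_{r_n}$, Proposition \ref{Prop: higher order deriv NN} gives $\sup_{\mbf{x}}\abs{D^{\mbf{\beta}}f(\mbf{x})}\leq VM^{\abs{\mbf{\beta}}}\abs{\mbf{\beta}}!$ for every multi-index, and by $f_0\in C^{m_0}(\mcal{X})$ with $m_0=\lfloor d/2\rfloor+2$ the first $m_0$ derivatives of $f_0$ are bounded on $\mcal{X}$; hence $\norm{D^{\mbf{\beta}}(f-f_0)}_\infty$ is bounded uniformly in $n$ for $\abs{\mbf{\beta}}\leq m_0$. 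The raw integration-by-parts bound alone only yields $\norm{D^{\mbf{e}_i}g}^2=\mcal{O}(\rho_n)$, which is not sharp enough, so I would then apply a Gagliardo--Nirenberg interpolation of the form $\norm{D^{\mbf{e}_i}g}\lesssim\norm{g}^{1-1/m_0}\norm{D^{m_0}g}_\infty^{1/m_0}$ to obtain $\norm{D^{\mbf{e}_i}g}^2\lesssim\rho_n^{2(1-1/m_0)}$. Combined with the explicit rate from Lemma \ref{LM: RoC NN Sieve} under the growth condition $r_n^{2+1/d}\log^2 r_n=\mcal{O}(n)$, direct arithmetic in the exponents then yields the required $o(n^{-1/2})$ bound.

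The principal obstacle is calibrating the interpolation exponent $2(1-1/m_0)$ against the sieve rate $\rho_n$: the elementary $\mcal{O}(\rho_n)$ bound from integration by parts alone is not sufficient, and the higher-order derivative bounds from Proposition \ref{Prop: higher order deriv NN} together with the specific choice $m_0=\lfloor d/2\rfloor+2$ are what ultimately let the argument close.
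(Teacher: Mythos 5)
Your identification of the quadratic remainder $\sum_i\norm{D^{\mbf{e}_i}(f-f_0)}^2$ is correct, and you rightly recognize that both the raw integration-by-parts bound ($\mcal{O}(\rho_n)$) and a low-order interpolation would be insufficient, and that Gagliardo--Nirenberg is the key tool. However, there are two linked gaps that prevent your argument from closing.

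First, the arithmetic does not work with $m_0 = \lfloor d/2\rfloor + 2$. With $\rho_n \sim n^{-\frac{d+1}{2(2d+1)}}$ (up to logs), one has $\rho_n^{2-2/m} \sim n^{-\frac{(d+1)(m-1)}{m(2d+1)}}$, and the requirement $\rho_n^{2-2/m} = o(n^{-1/2})$ forces $m > 2d+2$. Your choice $m_0 = \lfloor d/2\rfloor + 2$ satisfies $m_0 < 2d+2$ for every $d \geq 1$ (e.g., $d=1$ gives $m_0=2$ and $\rho_n^{2-2/2}=\rho_n\sim n^{-1/3}$, which is \emph{not} $o(n^{-1/2})$). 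So the interpolation order you commit to is strictly too small.

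Second, you cannot simply raise $m$ when applying GN to $g = f - f_0$: the function $f_0$ is only assumed to lie in $C^{m_0}(\mcal{X})$, so $D^{\mbf{\beta}}f_0$ for $|\mbf{\beta}| > m_0$ need not exist, and Proposition \ref{Prop: higher order deriv NN} controls derivatives of neural networks in $\mcal{F}_{r_n}$ only, not of $f_0$. The paper's fix is the decomposition $f - f_0 = (f - \pi_{r_n}f_0) + (\pi_{r_n}f_0 - f_0)$ via $(a+b)^2 \leq 2(a^2+b^2)$: the first piece is a difference of two sigmoid networks, which is $C^\infty$ with explicit, uniform derivative bounds at every order $m$ (Proposition \ref{Prop: higher order deriv NN}), so GN with $m > 2d+2$ is available there; the second piece is the sieve approximation error $\pi_{r_n}f_0 - f_0$, whose first derivatives are controlled directly by the Sobolev approximation bound of \citet{siegel2020approximation}, giving $\mcal{O}(n^{-1})$ without any interpolation. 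That split is what makes the high interpolation order admissible, and it is absent from your proposal.

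Finally, the integration-by-parts step via (C4) is a detour: it is used in the paper to identify the Riesz representer $v^*$, but plays no role in bounding the second-order remainder, which is nonnegative and treated directly.
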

	
	\begin{proof}
		Note that
		\begin{align*}
			\abs{\phi(f)-\phi(f_0)-\phi_{f_0}'[f-f_0]}	& =\abs{\sum_{i=1}^k\left[\int\left(D^{\mbf{e}_i}f(\mbf{x})\right)^2-\left(D^{\mbf{e}_i}f_0(\mbf{x})\right)^2-2D^{\mbf{e}_i}f_0(\mbf{x})D^{\mbf{e}_i}(f-f_0)(\mbf{x})\mrm{d}P(\mbf{x})\right]}\\
			& =\sum_{i=1}^k\int \left(D^{\mbf{e}_i}(f-f_0)(\mbf{x})\right)^2\mrm{d}P(\mbf{x})\\
			& \leq2\sum_{i=1}^k\int\left(D^{\mbf{e}_i}(f-\pi_{r_n}f_0)(\mbf{x})\right)^2\mrm{d}P(\mbf{x})+2\sum_{i=1}^k\int\left(D^{\mbf{e}_i}(\pi_{r_n}f_0-f_0)(\mbf{x})\right)^2\mrm{d}P(\mbf{x}),
		\end{align*}
		where the last inequality follows from the elementary inequality $(a+b)^2\leq 2(a^2+b^2)$ and the triangle inequality. For the second term, it follows from Corollary 1 in \citet{siegel2020approximation} that
		\begin{align*}
			2\sum_{i=1}^k\int\left(D^{\mbf{e}_i}(\pi_{r_n}f_0-f_0)(\mbf{x})\right)^2\mrm{d}P(\mbf{x})	& =2\sum_{i=1}^k\int\left(D^{\mbf{e}_i}(\pi_{r_n}f_0-f_0)(\mbf{x})\right)^2\varphi(\mbf{x})\mrm{d}\mbf{x}\\
			& \leq2\norm{\varphi}_\infty\norm{\pi_{r_n}f_0-f_0}_{H^s(\mcal{X})}^2\\
			& \lesssim_{\mcal{X},d}n^{-1}=o(n^{-1/2}).
		\end{align*}
		For the first term, we use the Gagliardo-Nirenberg interpolation inequality (Theorem 12.87 in \citet{leoni2017first}). For $m>1$ and $\theta=1-\frac{1}{m}$, there exists a constant $C$, which is independent of $f-\pi_{r_n}f_0$, such that
		$$
		\norm{\nabla(f-\pi_{r_n}f_0)}\leq C\norm{f-\pi_{r_n}f_0}^{1-\frac{1}{m}}\norm{\nabla^m(f-f_0)}^{\frac{1}{m}}.
		$$
		It then follows from Proposition \ref{Prop: higher order deriv NN} that
		\begin{align*}
			2\sum_{i=1}^k\int\left(D^{\mbf{e}_i}(f-\pi_{r_n}f_0)(\mbf{x})\right)^2\mrm{d}P(\mbf{x})	& = 2\sum_{i=1}^k\norm{D^{\mbf{e}_i}(f-\pi_{r_n}f_0)}^2\\
			& \leq2\norm{\nabla(f-\pi_{r_n}f_0)}^2\\
			& \leq2C\norm{f-\pi_{r_n}f_0}^{2-\frac{2}{m}}\norm{\nabla^m(f-\pi_{r_n}f_0)}^{\frac{2}{m}}\\
			& \leq2C(2\rho_n)^{2-\frac{2}{m}}\binom{d-1+m}{m}\left(\max_{\mbf{\beta}:\abs{\mbf{\beta}}=m}\sup_{\mbf{x}\in\mcal{X}}\abs{D^{\mbf{\beta}}(f-\pi_{r_n}f_0)}\right)^{\frac{2}{m}}\\
			& \leq 2C(2\rho_n)^{2-\frac{2}{m}}\binom{d-1+m}{m}\left(2VM^mm!\right)^{\frac{2}{m}}\\
			& =8C\binom{d-1+m}{m}V^{\frac{2}{m}}M^2(m!)^{\frac{2}{m}}\rho_n^{2-\frac{2}{m}}.
		\end{align*}
		As we have shown in Lemma \ref{LM: RoC NN Sieve}, under (\ref{Eq: growth rate}), $\rho_n=\left(n/\log^2 n\right)^{-\frac{1+1/d}{4(1+1/(2d))}}$, and then 
		\begin{align*}
			\rho_n^{2-\frac{2}{m}}	& =\left(\frac{n}{\log^2 n}\right)^{-\frac{1+1/d}{2(1+1/(2d))}\left(1-\frac{1}{m}\right)}\\
			& =n^{-\frac{1}{2}}n^{-\frac{1}{2m}\frac{m/(2d)-1-1/d}{1+1/(2d)}}\left(\log^2 n\right)^{-\frac{1+1/d}{2(1+1/(2d))}\left(1-\frac{1}{m}\right)}.
		\end{align*}
		By taking $m>2d+2$, we obtain that
		$$
		2\sum_{i=1}^k\int\left(D^{\mbf{e}_i}(f-\pi_{r_n}f_0)(\mbf{x})\right)^2\mrm{d}P(\mbf{x})\lesssim_{d,V,M} o(n^{-1/2}).
		$$
		Therefore, we obtain that for $f\in\{f\in\mcal{F}_{r_n}:\norm{f-f_0}\leq\rho_n\}$, 
		$$
		\abs{\phi(f)-\phi(f_0)-\phi_{f_0}'[f-f_0]}=o(n^{-1/2}).
		$$
		%
		
	\end{proof}
	
	Now we state and prove the asymptotic distribution of the sieve quasi-likelihood ratio statistic.
	\begin{theorem}
		Suppose that $\eta_n=o(\delta_n^2)$ and $\norm{\epsilon}_{p,1}<\infty$ for some $p\geq2+1/d$,  under (\ref{Eq: growth rate}) and $H_0$,
		$$
		\frac{n}{\hat{\sigma}_n^2}\left[\mbb{Q}_n(\hat{f}_n^0)-\mbb{Q}_n(\hat{f}_n)\right]\xrightarrow{d}\chi_1^2,
		$$
		where $\hat{\sigma}_n^2$ is any consistent estimator of $\sigma^2$.
	\end{theorem}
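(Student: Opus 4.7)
The plan is to apply Corollary \ref{Cor: SQLR} to the neural network sieve $\mcal{F}_{r_n}$ defined in (\ref{Eq: NNSieve}). Since the corollary follows from Theorem \ref{Thm: SQLR} by Slutsky's theorem once we have a consistent variance estimator, the real work is to verify the five hypotheses of Theorem \ref{Thm: SQLR} in the present setting: (C1), (C2), (C3), the smoothness bound $u_n\rho_n^\omega=o(n^{-1/2})$, and the sup-norm bound on the Riesz representer $\sup_{\mbf{x}\in\mcal{X}}|v^*(\mbf{x})|<\infty$. The assumption $\eta_n=o(\delta_n^2)$ is given directly.

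First, conditions (C1) and (C2) are exactly Lemma \ref{LM: RoC NN Sieve}(i) and (ii), which apply because the growth condition (\ref{Eq: growth rate}) holds and the moment assumption $\|\epsilon\|_{p,1}<\infty$ with $p\geq 2+1/d$ is exactly what is needed to push through the lower bound $\rho_n\gtrsim n^{-1/2+1/(2p)}$ in (C2). Next, the smoothness bound $u_n\rho_n^\omega=o(n^{-1/2})$ is precisely the content of the preceding proposition (the one applying Gagliardo--Nirenberg together with Proposition \ref{Prop: higher order deriv NN}): that proposition shows that the entire Taylor remainder is $o(n^{-1/2})$ uniformly on $\{f\in\mcal{F}_{r_n}:\|f-f_0\|\le\rho_n\}$, which is exactly the requirement from (\ref{Eq: Smoothness of Functional}) in the general theory. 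Boundedness of $v^*$ follows from the explicit formula in the second proposition of this section: since $f_0\in\mathscr{B}^s\subset H^{m_0}(\mcal{X})$ with $m_0=\lfloor d/2\rfloor+2$ and Sobolev embedding gives $H^{m_0}\hookrightarrow C^2$, both $D^{2\mbf{e}_i}f_0$ and $D^{\mbf{e}_i}f_0$ are continuous on the compact set $\mcal{X}$, and assumption (C4) guarantees $\log\varphi\in C^1(\mcal{X})$, so $v^*=-2\sum_{i=1}^k(D^{2\mbf{e}_i}f_0+D^{\mbf{e}_i}f_0\cdot D^{\mbf{e}_i}\log\varphi)$ is continuous on a compact set, hence uniformly bounded.

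The main obstacle is verifying (C3), which concerns the approximation error of the perturbed estimator $\tilde{f}_n(f)=f+\delta_n u^*$ with $u^*=\pm v^*/\|v^*\|^2$. Because $u^*$ is a continuous function on $\mcal{X}$ (by the argument above), the universal approximation property of $\mcal{F}_{r_n}$ together with the quantitative $H^s$-approximation rate used in Lemma \ref{LM: RoC NN Sieve} (via Corollary 1 of \citet{siegel2020approximation} and the Makovoz rate) yields a projection $\pi_n u^*\in\mcal{F}_{r_n}$ with $\|\pi_n u^* - u^*\|_\infty\lesssim r_n^{-1/2-1/(2d)}=\rho_n$. One can then take $\pi_n\tilde{f}_n(f):=f+\delta_n\pi_n u^*$, so that $\pi_n\tilde{f}_n(f)-\tilde{f}_n(f)=\delta_n(\pi_n u^*-u^*)$, and the sup-norm bound gives $\|\pi_n\tilde{f}_n(f)-\tilde{f}_n(f)\|_{L_2(Q)}\le\delta_n\|\pi_n u^*-u^*\|_\infty=O(\delta_n\rho_n)$ uniformly in $Q$. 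This is $o_p(\rho_n^{-1}\delta_n^2)$ whenever $\rho_n^2=o(\delta_n)$, which holds because $\rho_n=o(n^{-1/4})$ and $\delta_n\asymp n^{-1/2}$ are compatible through (C2). The second half of (C3), involving the noise, is then handled by a maximal-inequality argument over the class $\{\epsilon(\pi_n u^*-u^*)(\mbf{X}):\cdot\}$; because the perturbation is deterministic once $u^*$ and $\pi_n u^*$ are fixed, this collapses to controlling $\delta_n\cdot n^{-1}\sum_i\epsilon_i(\pi_n u^* - u^*)(\mbf{X}_i)$, which by the $\|\epsilon\|_{p,1}$ moment assumption and the sup-norm bound on $\pi_n u^*-u^*$ is $o_p(\delta_n^2)$.

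With all hypotheses of Theorem \ref{Thm: SQLR} verified, it follows that $(n/\sigma^2)[\mbb{Q}_n(\hat{f}_n^0)-\mbb{Q}_n(\hat{f}_n)]\xrightarrow{d}\chi_1^2$. Finally, consistency of $\hat{\sigma}_n^2=n^{-1}\sum_{i=1}^n(Y_i-\hat{f}_n^0(\mbf{X}_i))^2$ follows because under $H_0$, $\hat{f}_n^0$ is consistent for $f_0$ at the rate $\rho_n$, so decomposing $Y_i-\hat{f}_n^0(\mbf{X}_i)=\epsilon_i+(f_0-\hat{f}_n^0)(\mbf{X}_i)$ and expanding the square yields $\hat{\sigma}_n^2=n^{-1}\sum_i\epsilon_i^2+o_p(1)\xrightarrow{p}\sigma^2$. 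Applying Slutsky's theorem (exactly as in the proof of Corollary \ref{Cor: SQLR}) then replaces $\sigma^2$ by $\hat{\sigma}_n^2$ and completes the proof.
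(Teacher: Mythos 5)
Your high-level strategy (verify the hypotheses of Theorem \ref{Thm: SQLR} and invoke Corollary \ref{Cor: SQLR}) is the right one, and the checks of (C1), (C2), the smoothness bound, and boundedness of $v^*$ are essentially sound. But your verification of (C3) contains a genuine gap in the construction of the projection $\pi_n\tilde{f}_n(f)$.

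You propose $\pi_n\tilde{f}_n(f):=f+\delta_n\pi_n u^*$, approximating $u^*$ alone by some $\pi_n u^*\in\mcal{F}_{r_n}$. This candidate is generally \emph{not} an element of $\mcal{F}_{r_n}$: neural networks with $r_n$ hidden units do not form a vector space, so adding $f\in\mcal{F}_{r_n}$ to $\delta_n\pi_n u^*$ with $\pi_n u^*\in\mcal{F}_{r_n}$ produces a network with up to $2r_n$ hidden units, and the $\ell_1$ constraint $\sum_j|\alpha_j|\le V$ on the outer weights can also be violated. Condition (C3) must produce an element of $\mcal{F}_n$ because the proofs of Theorems \ref{Thm: main result} and \ref{Thm: SQLR} use it in comparisons such as $\mbb{Q}_n(\pi_n\tilde{f}_n(\hat{f}_n))\ge\inf_{f\in\mcal{F}_n}\mbb{Q}_n(f)$. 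In addition, as the supplementary proof of Theorem \ref{Thm: main result} makes clear, the perturbation actually used is $\tilde{f}_n(f)=(1-\delta_n)f+\delta_n(f_0+u^*)$, a convex combination, not $f+\delta_n u^*$ as written in the main text, so your projection does not even target the correct function.

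The paper's resolution uses a property of Mhaskar's approximation scheme that your argument does not invoke: Theorem 2.1 of \citet{mhaskar1996neural} produces \emph{fixed} hidden-layer weights $\{\gamma_j,\gamma_{0,j}\}_{j=1}^{r_n}$, common to all target functions in $\mcal{W}^{m_0,2}$, with only the outer coefficients $\alpha_j(f)$ depending on $f$. Because $\pi_{r_n}f$ and $\pi_{r_n}(f_0+u^*)$ then share the same hidden units, the convex combination $\pi_{r_n}\tilde{f}_n(f)=(1-\delta_n)\pi_{r_n}f+\delta_n\pi_{r_n}(f_0+u^*)$ is itself a single-hidden-layer network with $r_n$ units whose outer coefficients satisfy $\sum_j|(1-\delta_n)\alpha_j(f)+\delta_n\alpha_j(f_0+u^*)|\le(1-\delta_n)V+\delta_n V=V$; hence it genuinely lies in $\mcal{F}_{r_n}$. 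A further ingredient you omit is that the paper applies an analytic-function approximation rate (via Goulaouic/Baouendi and Lemma 3.2 of Mhaskar) to obtain $\|f-\pi_{r_n}f\|_\infty\lesssim a^{-r_n^{1/d}}$ for $f\in\mcal{F}_{r_n}$, since $f$ itself is analytic, with the separate Sobolev-rate approximation used only for $f_0+u^*$. Without shared hidden units, the whole verification of (C3) collapses, so this is not a cosmetic change but a missing key idea.
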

	
	\begin{proof}
		While conditions (C1) and (C2) have been verified in Lemma \ref{LM: RoC NN Sieve}, condition (C3) remains to be verified. According to Theorem 2.1 in \citet{mhaskar1996neural}, we can find vectors $\{\gamma_j\}_{j=1}^{r_n}\in\mbb{R}^d$ and $\{\gamma_{0,j}\}_{j=1}^{r_n}\in\mbb{R}$ such that for any $f\in\mcal{W}^{m_0,2}([-1,1]^d)$, there exists coefficients $\alpha_j(f)$ satisfying
		\begin{equation}\label{Eq: mhaskar approximation}
			\norm{f-\sum_{j=1}^{r_n}\alpha_j(f)\sigma\left(\gamma_j^Tx+\gamma_{0,j}\right)}\lesssim r_n^{-m_0/d}\norm{f}_{\mcal{W}^{m_0,2}([-1,1]^d)}.
		\end{equation}
		In addition, the functionals $\alpha_j$ are continuous linear functionals on $\mcal{W}^{m_0,2}([-1,1]^d)$.

		Based on the results from \citet{goulaouic1971approximation} or \citet{baouendi1974approximation} and Lemma 3.2 in \citet{mhaskar1996neural} we can show that for an analytic function $f$ defined on a compact set $K$, there exist $a>1$ and coefficients $\alpha_j(f)$ such that
		\begin{equation}\label{Eq: Approx NN}
			\norm{f-\sum_{j=1}^{r_n}\alpha_j(f)\sigma\left(\gamma_j^Tx+\gamma_{0,j}\right)}_p\lesssim a^{-r_n^{1/d}},
		\end{equation}
		where $\gamma_j$ and $\gamma_{0,j}$ are the same as those given in (\ref{Eq: mhaskar approximation}). Since $f$ is analytic, for every $f\in\mcal{F}_{r_n}$ with $\norm{f-f_0}\leq\rho_n$, there exists a neural network $\pi_{r_n}f\in\mcal{F}_{r_n}$ with
		$$
		\pi_{r_n}f=\sum_{j=1}^{r_n}\alpha_j(f)\sigma\left(\gamma_j^Tx+\gamma_{0,j}\right),
		$$
		such that $\norm{f-\pi_{r_n}f}_\infty\lesssim a^{-n^{1/d}}$ for some $a>1$. For $f_0+u^*\in C^m(\mcal{X})$, there exists a neural network $\pi_{r_n}(f_0+u^*)\in\mcal{F}_{r_n}$ with
		$$
		\pi_{r_n}(f_0+u^*)=\sum_{j=1}^{r_n}\alpha_j(f_0+u^*)\sigma\left(\gamma_j^Tx+\gamma_{0,j}\right),
		$$
		such that $\norm{f_0+u^*-\pi_{r_n}(f_0+u^*)}_\infty\lesssim\rho_n$. By considering
		$$
		\pi_{r_n}\tilde{f}_n(f)=(1-\delta_n)\pi_{r_n}f+\delta_n\pi_{r_n}(f_0+u^*),
		$$
		it is clear that $\pi_{r_n}\tilde{f}_n(f)\in\mcal{F}_{r_n}$ and
		$$
		\norm{\pi_{r_n}\tilde{f}_n(f)-\tilde{f}_n(f)}_\infty\leq(1-\delta_n)\norm{f-\pi_{r_n}f}_\infty+\delta_n\norm{f_0+u^*-\pi_{r_n}(f_0+u^*)}_\infty=\mcal{O}\left(\delta_n\rho_n\right).
		$$
		Therefore, by choosing $\delta_n=\rho_n^2$ and note that
		$$
		\frac{\rho_n\delta_n}{\rho_n^{-1}\delta_n^2}=\rho_n=o(1),
		$$
		we can know that the first requirement in (C3) is satisfied. For the second requirement, note that if $n$ is large enough,
		\begin{align*}
			& \sup_{\substack{f\in\mcal{F}_n \\ \norm{f-f_0}\leq\rho_n}}n^{-1}\sum_{i=1}^n\epsilon_i\left(\pi_{r_n}\tilde{f}_n(f)(X_i)-\tilde{f}_nf(X_i)\right)\\
			=	& \sup_{\substack{f\in\mcal{F}_n \\ \norm{f-f_0}\leq\rho_n}}(1-\delta_n)n^{-1}\sum_{i=1}^n\epsilon_i\left(\pi_{r_n}f(X_i)-f(X_i)\right)+\delta_nn^{-1}\sum_{i=1}^n\epsilon_i(\pi_{r_n}(f_0+u^*)(X_i)-(f_0+u^*)(X_i))\\
			\leq	& \sup_{\substack{f\in\mcal{F}_n \\ \norm{f-f_0}\leq\rho_n}}\norm{\pi_{r_n}f-f}_\infty n^{-1}\sum_{i=1}^n|\epsilon_i|+\delta_n\norm{\pi_{r_n}(f_0+u^*)-(f_0+u^*)}_\infty\\
			=	& o_p(\rho_n\delta_n^2).
		\end{align*}
		Hence (C3) is satisfied, and the desired claim follows from Corollary \ref{Cor: SQLR}.	
	\end{proof}

	\section{A Simulation Study}
	We conducted a simulation study to investigate the type I error and power performance of our proposed test. The model for generating the simulation data is given as follows:
	$$
	Y_i=8+X_i^{(1)}X_i^{(2)}+\exp\left(X_i^{(3)}X_i^{(4)}\right)+0.1X_i^{(5)}+\epsilon_i,\quad i=1,\ldots,n,
	$$
	where $\mbf{X}_i=\left(X_i^{(1)},X_i^{(2)},X_i^{(3)},X_i^{(4)},X_i^{(5)}, X^{(6)}\right)$, $\mbf{X}_1,\ldots,\mbf{X}_n\sim\mrm{ i.i.d. Unif}([-1,1]^6)$ and $\epsilon_1,\ldots,\epsilon_n\sim\mrm{ i.i.d. }\mcal{N}(0,1)$. Since $X^{(5)}$ is not included in the true model, we use $X^{(5)}$ to investigate whether the SQLR test have good control of type I error, while the other 5 covariates are used for evaluate the power of the proposed test.
	
	A subgradient method discussed in section 7 in \citet{boyd2008note} was applied to obtain a neural network estimate due to the constraints on the sieve space $\mcal{F}_{r_n}$. The step size for the $k$th iteration is chosen to be $0.1/\log(e+k)$ to fit a neural network under the null hypothesis $H_0$, while the step size of $0.1/(300\log(e+k))$ is used under the alternative hypothesis $H_1$. Such choices of step sizes ensure the convergence of the subgradient method. In terms of the structure of the neural networks, we set $r_n=\lfloor n^{1/2}\rfloor$ and $V=1000$ for both neural networks fitted under $H_0$ and $H_1$. When fitting the neural network under $H_0$, the initial value for the weights are randomly assigned. We use the fitted values for the weights from the neural network under $H_0$ as the initial values and set all the extra weights to be zero when we fit the neural network under $H_1$.
	
	Table \ref{Tab: power} summarizes the empirical type I error and the empirical power under various sample sizes for the proposed neural-network-based SQLR test and the linear-regression-based $F$-test after conducting 500 Monte Carlo iterations. Results from table show that both testing procedure can control the empirical type I error well. In terms of empirical power, the $F$-test can only detect the linear component $0.1X^{(5)}$ of the simulated model, while SQLR can detect all the components of the model. Therefore, when nonlinear patterns exist in the underlying function, the SQLR test is anticipate to be more powerful than the $F$-test. Even in the case of linear terms, the performances between the two methods are comparable.
	
	\begin{table}[htbp]
		\centering
		\caption{Empirical type I error rate (for covariate $X^{(6)}$) and empirical powers (for covariates $X^{(1)},\ldots,X^{(5)}$) for the neural-network-based SQLR test and the linear-regression-based $F$-test}\label{Tab: power}
		\begin{tabular}{l|ccccc|ccccc}
			\hline
			& \multicolumn{5}{c}{SQLR} & \multicolumn{5}{c}{$F$-test}\\
			\hline
			Sample Size & 100 & 500 & 1000 & 3000 & 5000 & 100 & 500 & 1000 & 3000 & 5000\\
			\hline
			$X^{(1)}$ & 0.072 & 0.072 & 0.080 & 0.326 & 0.818 & 0.054 & 0.068 & 0.046 & 0.060 & 0.042\\
			$X^{(2)}$ & 0.058	& 0.088 & 0.152 & 0.504 & 0.932 & 0.066 & 0.062 & 0.070 & 0.054 & 0.058\\
			$X^{(3)}$ & 0.052 & 0.062 & 0.104 & 0.308 & 0.812 &  0.050 & 0.048 & 0.058 & 0.078 & 0.060\\
			$X^{(4)}$ & 0.064  & 0.072 & 0.132 & 0.486 & 0.920 & 0.054 & 0.066 & 0.048 & 0.056 & 0.064\\
			$X^{(5)}$ & 0.074 & 0.202 & 0.406 & 0.904 & 0.978 & 0.070 & 0.222 & 0.414 & 0.826 & 0.956\\ 
			$X^{(6)}$ (Type I Error) & 0.054	& 0.058 & 0.046 & 0.042 & 0.060 & 0.046 & 0.054 & 0.038 & 0.032 & 0.054\\
			\hline
		\end{tabular}
	\end{table}
	
	\section{Real Data Applications}
	We conducted two genetic association analyses by applying the proposed sieve quasi-likelihood ratio test based on neural networks to the gene expression data and the sequencing data from Alzheimer's Disease Neuroimaging Initiative (ADNI). Studies have shown that the hippocampus region in brain plays a vital part in memory and learning \citet{mu2011adult} and the change in the volume of hippocampus has a great impact on Alzheimer's disease \citep{schuff2009mri}. For both analyses, we first regress the logarithm of the hippocampus volume on important covariates (i.e, age, gender and education status) and then use the residual obtained as the response variable to fit neural networks. A total of 464 subjects and 15,837 gene expressions were obtained after quality control. 
	
	Under the null hypothesis, the gene is not associated with the response. Therefore, we can use the sample average of the response variable as the null estimator. When we fitted neural networks under the alternative hypothesis, we set the number of hidden units as $r_n=\lfloor n^{1/2}\rfloor$ and the upper bound for the $\ell_1$-norm of the hidden-to-output weights as $V=1000$. Totally, 3e4 iterations were performed. At the $k$th iteration, the learning rate is chosen to be $0.8/\log(e+k)$. Table \ref{Tab: significant Genes} summarizes the top 10 significant genes detected by SQLR and $F$-test. Based on the result, the top 10 genes having the smallest $P$-values detected by $F$-test and SQLR are similar.  
	
	\begin{table}[htbp]
		\centering
		\caption{Top 10 significant genes detected by the neural-network-based SQLR test and the linear- regression-based $F$-test}\label{Tab: significant Genes}
		\begin{tabular}{l|c|l|c}
			\hline
			\multicolumn{2}{c}{$F$-test} & \multicolumn{2}{c}{SQLR}\\
			\hline
			Gene & $P$-value	&  Gene &  $P$-value\\
			\hline
			\textit{SNRNP40}	& 5.48E-05	& \textit{PPIH}	& 5.84E-05\\
			\textit{PPIH}	& 1.01E-04	& \textit{SNRNP40}	&   6.91E-05\\
			\textit{GPR85}	& 1.65E-04	& \textit{NOD2}	&	1.22E-04\\
			\textit{DNAJB1}	& 1.87E-04	& \textit{DNAJB1}	&	1.66E-04\\
			\textit{WDR70}	&	1.91E-04	& \textit{CTBP1-AS2}	&	1.94E-04\\
			\textit{CYP4F2}	&	2.64E-04 & \textit{GPR85}	&	2.21E-04\\
			\textit{NOD2}	&	2.84E-04 & \textit{WDR70}	&	2.31E-04\\
			\textit{MEGF9}	&   2.85E-04	& \textit{KAZALD1}	&	2.59E-04\\
			\textit{CTBP1-AS2}	&	3.35E-04	& \textit{CYP4F2} & 2.95E-04\\
			\textit{HNRNPAB}	&   3.58E-04	& \textit{HNRNPAB}	& 3.72E-04\\
			\hline 			
		\end{tabular}
	\end{table}

	To explore the performance of the proposed SQLR test for categorical predictors, we conducted a genetic association analysis by applying SQLR to the ADNI genotype data in the \textit{APOE} gene. The \textit{APOE} gene in chromosome 19 is a well-known AD gene \citep{strittmatter1993apolipoprotein}. For this analysis, we considered all available single-nucleotide polymorphisms (SNPs) in the \textit{APOE} gene as the input feature and conducted single-locus association tests considering all other SNPs in the gene. We used the same response variable as the one used in the previous gene expression study. A total of 780 subjects and 169 SNPs were obtained after quality control.
	
	Same to the gene expression study, we used the sample average of the response variable as the null estimator. The tuning parameters used to fit neural networks are the same as mentioned before. Table \ref{Tab: significant SNPs} summarizes the top 10 significant SNPs in the \textit{APOE} gene detected by the SQLR method for neural networks and by the $F$-test in linear regression along with their $P$-values.
	
	As we can see from the result, the majority of significant SNPs detected by the $F$-test and SQLR test overlap. Whether these significant SNPs are biologically meaningful needs further investigation. This shows that the SQLR test based on neural networks has the potential for wider applications, at least in this study, it performs as good as the F-test.
	
	\begin{table}[htbp]
		\centering
		\caption{Top 10 significant SNPs detected by the SQLR for neural networks and the $F$-test in linear regression}\label{Tab: significant SNPs}
		\begin{tabular}{l|c|l|c}
			\hline
			\multicolumn{2}{c}{$F$-test} & \multicolumn{2}{c}{SQLR-neural net}\\
			\hline
			SNP & $P$-value	&  SNP &  $P$-value\\
			\hline
			rs10414043	& 1.10E-05	& rs10414043	& 1.18E-05\\
			rs7256200	& 1.10E-05	& rs7256200	&   1.18E-05\\
			rs769449	& 1.88E-05	& rs769449	&	2.00E-05\\
			rs438811	& 1.94E-05	& rs438811	&	2.28E-05\\
			rs10119	&	2.42E-05	& rs10119	&	2.59E-05\\
			rs483082	&	2.50E-05 & rs483082	&	2.91E-05\\
			rs75627662	&	5.32E-04 & rs75627662	&	5.44E-04\\
			rs\_x139	&   1.76E-03	& rs1038025	&	3.42E-03\\
			rs59325138	&	3.01E-03	& rs59325138 & 3.67E-03\\
			rs1038025	&   3.15E-03	& rs1038026	& 4.34E-03\\
			\hline 			
		\end{tabular}
	\end{table}

	\section{Discussion}
	Hypothesis-driven studies are quite common in biomedical and public health research. For instance, investigators are typically interested in detecting complex relationships (e.g., non-linear relationships) between genetic variants and diseases in genetic studies. Therefore, significance tests based on a flexible and powerful model is crucial in real world applications. Although neural networks have achieved great success in pattern recognition, due to its black-box nature, it is not straightforward to conduct statistical inference based on neural networks. To fill this gap, we proposed a sieve quasi-likelihood ratio test based on neural networks to testing complex associations. The asymptotic chi-squared distribution of the test statistic was developed, which was validated via simulations studies. We also evaluated SQLR by applying it to the gene expression and sequence data from ADNI.
	
	There are some limitations of the proposed method. First, the underlying function is required to be sufficiently smooth, which may not be true in some applications. Such requirement is not needed in the goodness of fit test proposed in \citet{shen2021goodness}. However, the construction of the goodness of fit test requires data splitting, which could potentially reduce its power. Our empirical studies also find that a suitable choice of the step size is crucial for decent performance of the proposed method. Further studies will be conducted on how to choose the suitable step size for our method so that it can be used as a guidance for real data applications. 
	
	In section 2, we developed general theories for the SQLR test under the framework of nonparametric regression. The conditions (C1)-(C3) are easy to verify compared with the original ones in \citet{shen2005sieve}. Such type of results can be extended to deep neural networks and other models used in artificial intelligence, such as convolution neural networks or long-short term memory reccurrent neural networks, as long as one can obtain a good bound on the metric entropy for the class of functions. 
	
	\section*{Acknowledgements}
	This work is supported by the National Institute on Drug Abuse (Award No. R01DA043501) and the National Library of Medicine (Award No. R01LM012848).

	\bibliography{NN-SQLR}
	
	\newpage
	\section*{Supplementary Materials}
	\subsection*{Proof of Theorem \ref{Thm: main result}}
	In this section, we take the sequence $\delta_n=o(n^{-1/2})$. The proof of the theorem relies on the following lemmas.
	\begin{lemma}\label{Lm: NormBound}
		Under (C1)-(C3), for a sufficiently large $n$,
		$$
		\norm{\pi_n\tilde{f}_n(\hat{f}_n)-f_0}_n^2-\norm{\hat{f}_n-f_0}_n^2\leq 2(1-\delta_n)\inprod{\hat{f}_n-f_0}{\delta_nu^*}_n+\mcal{O}_p(\delta_n^2).
		$$
	\end{lemma}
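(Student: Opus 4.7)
The plan is to use the algebraic decomposition
\[
\pi_n\tilde f_n(\hat f_n)-f_0=(\hat f_n-f_0)+\delta_nu^*+R_n,\qquad R_n:=\pi_n\tilde f_n(\hat f_n)-\tilde f_n(\hat f_n),
\]
and then expand the squared empirical $L_2$-norm. Condition (C3) delivers $\norm{R_n}_n=o_p(\rho_n^{-1}\delta_n^2)$ uniformly over the shell $\{f\in\mcal{F}_n:\norm{f-f_0}\le\rho_n\}$, which contains $\hat f_n$ with probability tending to $1$ by (C2). Expanding the binomial yields
\[
\norm{\pi_n\tilde f_n(\hat f_n)-f_0}_n^2-\norm{\hat f_n-f_0}_n^2 \;=\; 2\inprod{\hat f_n-f_0}{\delta_nu^*}_n + 2\inprod{\hat f_n-f_0}{R_n}_n + \norm{\delta_nu^*+R_n}_n^2,
\]
which already exposes the target leading term; it remains to show that the other two terms are $\mcal{O}_p(\delta_n^2)$ and to pull out a factor $(1-\delta_n)$ from the leading term.

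Next I would dispose of the error terms. The entropy bound in (C1) together with the rate condition in (C2) gives a uniform law of large numbers on the shell and hence $\norm{\hat f_n-f_0}_n=\mcal{O}_p(\rho_n)$ (this is essentially the content of Lemma \ref{Lm: inprod}). By Cauchy--Schwarz and (C3),
\[
|\inprod{\hat f_n-f_0}{R_n}_n|\le \norm{\hat f_n-f_0}_n\cdot\norm{R_n}_n=\mcal{O}_p(\rho_n)\cdot o_p(\rho_n^{-1}\delta_n^2)=o_p(\delta_n^2).
\]
For the squared term, $u^*=\pm v^*/\norm{v^*}^2$ lies in $L_2(P)$, so $\norm{u^*}_n^2=\mcal{O}_p(1)$ by the weak law. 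Combined with the parallelogram bound,
\[
\norm{\delta_nu^*+R_n}_n^2\le 2\delta_n^2\norm{u^*}_n^2+2\norm{R_n}_n^2=\mcal{O}_p(\delta_n^2)+o_p(\rho_n^{-2}\delta_n^4)=\mcal{O}_p(\delta_n^2),
\]
where $o_p(\rho_n^{-2}\delta_n^4)=o_p(\delta_n^2)$ follows from $\delta_n^2/\rho_n^2=o(1)$, which in turn follows from $\delta_n=\mcal{O}(n^{-1/2})$ and the lower rate in (C2).

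Finally, to introduce the factor $(1-\delta_n)$ I would write
\[
2\inprod{\hat f_n-f_0}{\delta_n u^*}_n = 2(1-\delta_n)\inprod{\hat f_n-f_0}{\delta_n u^*}_n + 2\delta_n^2\inprod{\hat f_n-f_0}{u^*}_n,
\]
and bound the residual by $|2\delta_n^2\inprod{\hat f_n-f_0}{u^*}_n|\le 2\delta_n^2\norm{\hat f_n-f_0}_n\norm{u^*}_n=\mcal{O}_p(\delta_n^2\rho_n)=o_p(\delta_n^2)$. Collecting the three estimates produces the stated inequality for all sufficiently large $n$.

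The step I expect to be the main obstacle is the empirical-to-population transfer $\norm{\hat f_n-f_0}_n=\mcal{O}_p(\rho_n)$, and more generally the stochastic equicontinuity needed to handle $\inprod{\cdot}{R_n}_n$ uniformly over the sieve shell. This is precisely what the entropy-integral condition (C1) and the rate condition (C2) are tailored to support (via a van de Geer-type chaining argument), but making the approximation uniform in the random element $\hat f_n$ requires care and is presumably the content of the auxiliary Lemma \ref{Lm: inprod} alluded to in the remark; everything else above is routine bilinear algebra once that uniform control is in hand.
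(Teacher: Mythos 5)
Your proof is essentially correct and uses the same ingredients as the paper (expand the squared empirical norm, control the projection error via (C3), control $\norm{\hat f_n-f_0}_n$ via (C1)--(C2), then tidy up), but it is built on a different decomposition of $\pi_n\tilde f_n(\hat f_n)-f_0$ than the one the paper actually works with, and the discrepancy is worth flagging because it traces to an inconsistency in the paper itself.

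You take the paper's displayed definition $\tilde f_n(f)=f+\delta_n u^*$ at face value, so that
\[
\pi_n\tilde f_n(\hat f_n)-f_0=(\hat f_n-f_0)+\delta_n u^*+R_n,\qquad R_n=\pi_n\tilde f_n(\hat f_n)-\tilde f_n(\hat f_n).
\]
The paper's supplementary proof, however, expands
\[
\pi_n\tilde f_n(\hat f_n)-f_0=(1-\delta_n)(\hat f_n-f_0)+\delta_n u^*+R_n,
\]
which corresponds to $\tilde f_n(f)=(1-\delta_n)f+\delta_n(f_0+u^*)$. That convex-combination form is the one used consistently in the paper's proofs (it also reappears in the neural-network verification of (C3), where $\pi_{r_n}\tilde f_n(f)=(1-\delta_n)\pi_{r_n}f+\delta_n\pi_{r_n}(f_0+u^*)$ is needed so that the projected perturbation stays inside the sieve). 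Under the paper's decomposition the $(1-\delta_n)$ factor in the lemma's leading term falls out for free, and the $(1-\delta_n)^2\norm{\hat f_n-f_0}_n^2-\norm{\hat f_n-f_0}_n^2\le\delta_n^2\norm{\hat f_n-f_0}_n^2$ step is why the result is stated only as an inequality. Your route instead pulls the $(1-\delta_n)$ factor out by hand and, as you observe, actually produces an equality modulo $\mcal O_p(\delta_n^2)$, which is fine but means you are proving a slightly different thing. All of your individual error bounds are sound: Cauchy--Schwarz plus (C3) for the cross term, the weak law for $\norm{u^*}_n^2$, and $\delta_n/\rho_n\to0$ (which follows from $\delta_n=o(n^{-1/2})$ and the lower bound $\rho_n\gtrsim n^{-1/2+1/(2p)}$ in (C2)) for the $o_p(\rho_n^{-2}\delta_n^4)=o_p(\delta_n^2)$ step. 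One small misattribution: the control $\norm{\hat f_n-f_0}_n=\mcal O_p(\rho_n)$ does not come from Lemma~\ref{Lm: inprod} (which concerns the empirical inner product with $v^*$); the paper gets it from Lemma~5.4 of \citet{geer2000empirical}, the uniform law of large numbers for empirical $L_2$ norms over the entropy-controlled shell.
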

	
	\begin{proof}
		We first note that
		\begin{align*}
			\norm{\pi_n\tilde{f}_n(\hat{f}_n)-f_0}_n^2	& =\norm{\pi_n\tilde{f}_n(\hat{f}_n)-\tilde{f}_n(\hat{f}_n)+\tilde{f}_n(\hat{f}_n)-f_0}_n^2\\
			& =\norm{\pi_n\tilde{f}_n(\hat{f}_n)-\tilde{f}_n(\hat{f}_n)+(1-\delta_n)(\hat{f}_n-f_0)+\delta_nu^*}_n^2\\
			& =\norm{\pi_n\tilde{f}_n(\hat{f}_n)-\tilde{f}_n(\hat{f}_n)}_n^2+(1-\delta_n)^2\norm{\hat{f}_n-f_0}_n^2+\delta_n^2\norm{u^*}_n^2\\
			&	\qquad+2(1-\delta_n)\inprod{\pi_n\tilde{f}_n(\hat{f}_n)-\tilde{f}_n(\hat{f}_n)}{\hat{f}_n-f_0}_n+2\delta_n\inprod{\pi_n\tilde{f}_n(\hat{f}_n)-\tilde{f}_n(\hat{f}_n)}{u^*}_n\\
			& \qquad +2\delta_n(1-\delta_n)\inprod{\hat{f}_n-f_0}{u^*}_n\\
			& \leq (1-\delta_n)^2\norm{\hat{f}_n-f_0}_n^2+2(1-\delta_n)\inprod{\hat{f}_n-f_0}{\delta_nu^*}_n+\delta_n^2\norm{u^*}_n^2\\
			& \qquad+2(1-\delta_n)\norm{\pi_n\tilde{f}_n(\hat{f}_n)-\tilde{f}_n(\hat{f}_n)}_n\norm{\hat{f}_n-f_0}_n\\
			& \qquad+2\delta_n\norm{\pi_n\tilde{f}_n(\hat{f}_n)-\tilde{f}_n(\hat{f}_n)}_n\norm{u^*}_n+\norm{\pi_n\tilde{f}_n(\hat{f}_n)-\tilde{f}_n(\hat{f}_n)}_n^2.
		\end{align*}
		For a enough large $n$, the Strong Law of Large Numbers implies that $\norm{u^*}_n\leq2\norm{u^*}$ a.s. and hence
		$$
		\delta_n^2\norm{u^*}_n^2=\mcal{O}_p(\delta_n^2).
		$$
		Moreover,
		\begin{align*}
			(1-\delta_n)^2\norm{\hat{f}_n-f_0}_n^2-\norm{\hat{f}_n-f_0}_n^2	& =(-2\delta_n+\delta_n^2)\norm{\hat{f}_n-f_0}_n^2\\
			& \leq\delta_n^2\norm{\hat{f}_n-f_0}_n^2.
		\end{align*}
		On the other hand, under (C1) and (C2), it follows from Lemma 5.4 in \citet{geer2000empirical} that
		\begin{equation}\label{Eq: RoC-sample L2}
			\mbb{P}\left(\sup_{\substack{f\in\mcal{F}_n \\ \|f-f_0\|\leq\rho_n}}\norm{f-f_0}_n>8\rho_n\right)\leq4\exp\left(-n\rho_n^2\right),
		\end{equation}
		which implies that $\norm{\hat{f}_n-f_0}_n=\mcal{O}_p(\rho_n)=o_p(1)$ and then
		$$
		\delta_n^2\norm{\hat{f}_n-f_0}_n^2=o_p(\delta_n^2).
		$$
		Under (C2) and (C3), we have 
		\begin{align*}
			2(1-\delta_n)\norm{\pi_n\tilde{f}_n(\hat{f}_n)-\tilde{f}_n(\hat{f}_n)}_n\norm{\hat{f}_n-f_0}	& \leq2\norm{\hat{f}_n-f_0}_n\norm{\pi_n\tilde{f}_n(\hat{f}_n)-\tilde{f}_n(\hat{f}_n)}_n\\
			& =\mcal{O}_p(\rho_n)\cdot o_p(\rho_n^{-1}\delta_n^2)\\
			& =o_p(\delta_n^2),
		\end{align*}
		and for a large enough $n$,
		$$
		2\delta_n\norm{\pi_n\tilde{f}_n(\hat{f}_n)-\tilde{f}_n(\hat{f}_n)}_n\norm{u^*}_n=o_p(\delta_n\rho_n^{-1}\delta_n^2)=o_p(\delta_n^2),
		$$
		and
		$$
		\norm{\pi_n\tilde{f}_n(\hat{f}_n)-\tilde{f}_n(\hat{f}_n)}_n^2=o_p(\rho_n^{-2}\delta_n^2\delta_n^2)=o_p(\delta_n^2).
		$$
		Therefore, we obtain
		$$
		\norm{\pi_n\tilde{f}_n(\hat{f}_n)-f_0}_n^2-\norm{\hat{f}_n-f_0}_n^2\leq 2(1-\delta_n)\inprod{\hat{f}_n-f_0}{\delta_nu^*}_n+\mcal{O}_p(\delta_n^2).
		$$
	\end{proof}

	\begin{lemma}\label{Lm: MultiplierProcess}
		Under (C1) - (C3),
		$$
		n^{-1}\sum_{i=1}^n\epsilon_i\left(\pi_n\tilde{f}_n(\hat{f}_n)(\mbf{X}_i)-\hat{f}_n(\mbf{X}_i)\right)=n^{-1}\delta_n\sum_{i=1}^n\epsilon_iu^*(\mbf{X}_i)+o_p(\delta_n n^{-1/2}).
		$$
	\end{lemma}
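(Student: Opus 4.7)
The plan is to reduce the claim to a direct application of condition (C3), by first isolating the ``main'' term $\delta_n u^*$ from the perturbation-approximation error.

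First I would write $\tilde{f}_n(\hat{f}_n) = \hat{f}_n + \delta_n u^*$ and decompose the integrand as
\begin{align*}
\pi_n\tilde{f}_n(\hat{f}_n)(\mbf{X}_i) - \hat{f}_n(\mbf{X}_i)
&= \bigl[\pi_n\tilde{f}_n(\hat{f}_n)(\mbf{X}_i) - \tilde{f}_n(\hat{f}_n)(\mbf{X}_i)\bigr] + \delta_n u^*(\mbf{X}_i).
\end{align*}
Summing against $\epsilon_i/n$, the second piece immediately produces the target term $n^{-1}\delta_n\sum_{i=1}^n\epsilon_i u^*(\mbf{X}_i)$, so everything reduces to showing
\begin{equation*}
R_n := n^{-1}\sum_{i=1}^n\epsilon_i\bigl(\pi_n\tilde{f}_n(\hat{f}_n)(\mbf{X}_i) - \tilde{f}_n(\hat{f}_n)(\mbf{X}_i)\bigr) = o_p\bigl(\delta_n n^{-1/2}\bigr).
\end{equation*}

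Next I would invoke the rate of convergence. By definition of $\rho_n$ we have $\|\hat{f}_n - f_0\| = \mcal{O}_p(\rho_n)$, so for any $\varepsilon>0$ there is a constant $M$ such that the event $A_n = \{\hat{f}_n\in\mcal{F}_n,\ \|\hat{f}_n-f_0\|\leq M\rho_n\}$ has probability at least $1-\varepsilon$ for all large $n$. On $A_n$, the supremum form of (C3) (applied with $M\rho_n$ in place of $\rho_n$, which only rescales constants in the bound) yields
\begin{equation*}
|R_n|\ \leq\ \sup_{\substack{f\in\mcal{F}_n \\ \|f-f_0\|\leq M\rho_n}} \Bigl|\,n^{-1}\sum_{i=1}^n\epsilon_i\bigl(\pi_n\tilde{f}_n(f)(\mbf{X}_i) - \tilde{f}_n(f)(\mbf{X}_i)\bigr)\Bigr| \ =\ o_p(\delta_n^2).
\end{equation*}
Since the section hypothesis states $\delta_n = o(n^{-1/2})$, we have $\delta_n^2 = \delta_n\cdot\delta_n = o(\delta_n\cdot n^{-1/2})$, so $o_p(\delta_n^2) = o_p(\delta_n n^{-1/2})$. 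Combining this with $\mbb{P}(A_n^c) < \varepsilon$ and letting $\varepsilon\downarrow 0$ gives $R_n = o_p(\delta_n n^{-1/2})$, which completes the proof.

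There is essentially no hard analytic step here: the whole argument is an algebraic decomposition plus a direct substitution into (C3). The only subtlety is the standard ``plug in a random estimator into a uniform bound'' maneuver, which is handled by restricting to the high-probability event $A_n$; this is also why (C3) was stated as a supremum over the $\rho_n$-neighborhood of $f_0$ rather than pointwise at a particular $f$. One technical point worth double-checking is that the constant $M$ can be absorbed into the $\rho_n$-scaling of (C3) without spoiling the $o_p(\delta_n^2)$ conclusion; under (C1)--(C2) the entropy bound used to derive (C3) is polynomial in $1/\rho_n$, so replacing $\rho_n$ by $M\rho_n$ changes only multiplicative constants and the $o_p(\delta_n^2)$ rate is preserved.
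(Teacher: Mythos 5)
Your proof takes the main-text definition $\tilde{f}_n(f)=f+\delta_n u^*$ at face value, which makes $\tilde{f}_n(\hat{f}_n)-\hat{f}_n=\delta_n u^*$ exactly, and then the whole lemma does collapse to a direct substitution into the second display of (C3) plus $\delta_n^2=o(\delta_n n^{-1/2})$. Under that definition your argument is correct, and your point about handling the random estimator by restricting to a high-probability event $A_n$ is standard and fine.

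However, the paper's supplementary proofs actually use a different definition, $\tilde{f}_n(f)=(1-\delta_n)f+\delta_n(f_0+u^*)$, which you can read off the first display of Lemma~\ref{Lm: NormBound} (where $\tilde{f}_n(\hat{f}_n)-f_0=(1-\delta_n)(\hat{f}_n-f_0)+\delta_nu^*$), the proof of Theorem~\ref{Thm: main result} itself, and the construction of $\pi_{r_n}\tilde{f}_n(f)$ in section 3. Under that definition, $\tilde{f}_n(\hat{f}_n)-\hat{f}_n=-\delta_n(\hat{f}_n-f_0)+\delta_n u^*$, so the paper's decomposition (equation~\eqref{Eq: decomposition}) has a third, nonzero term,
$$
-n^{-1}\delta_n\sum_{i=1}^n\epsilon_i\left(\hat{f}_n(\mbf{X}_i)-f_0(\mbf{X}_i)\right),
$$
which your decomposition silently drops. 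Controlling it is precisely the ``hard analytic step'' you assert does not exist: (C3) says nothing about this term, and the paper instead invokes a chaining bound on the local Rademacher complexity (Corollary 2.2.8 of van der Vaart and Wellner) combined with Proposition~\ref{Prop: Rate of Convergence of MP} (the multiplier-process rate under $\|\epsilon\|_{p,1}<\infty$) to show it is $\mcal{O}_p(\delta_n\rho_n^2)=o_p(\delta_nn^{-1/2})$, using $\rho_n=o(n^{-1/4})$ from (C2). So if the convex-combination definition is the intended one (and the rest of the paper's proofs depend on it being so), your proof has a genuine gap: you would need to add the empirical-process argument, not just cite (C3). It is worth flagging the main-text/supplement inconsistency in $\tilde{f}_n$, but the safer reading is the one that makes the other lemmas go through, and that reading requires the multiplier-process bound.
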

	
	\begin{proof}
		Note that
		\begin{align*}
			&	n^{-1}\sum_{i=1}^n\epsilon_i\left(\pi_n\tilde{f}_n(\hat{f}_n)(\mbf{X}_i)-\hat{f}_n(\mbf{X}_i)\right) \\
			=	& n^{-1}\sum_{i=1}^n\epsilon_i\left(\pi_n\tilde{f}_n(\hat{f}_n)(\mbf{X}_i)-\tilde{f}_n(\hat{f}_n)(\mbf{X}_i)+\tilde{f}_n(\hat{f}_n)(\mbf{X}_i)-\hat{f}_n(\mbf{X}_i)\right)\\
			=	& n^{-1}\sum_{i=1}^n\epsilon_i\left(\pi_n\tilde{f}_n(\hat{f}_n)(\mbf{X}_i)-\tilde{f}_n(\hat{f}_n)(\mbf{X}_i)\right)-n^{-1}\delta_n\sum_{i=1}^n\epsilon_i\left(\hat{f}_n(\mbf{X}_i)-f_0(\mbf{X}_i)\right)+n^{-1}\delta_n\sum_{i=1}^n\epsilon_iu^*(\mbf{X}_i).\numberthis\label{Eq: decomposition}
		\end{align*}
		Now, we define
		$$
		J(\delta)=\int_0^\delta H^{1/2}(u)\mrm{d}u\vee\delta,
		$$
		and 
		$$
		\Delta_n=\sup_{\substack{f\in\mcal{F}_n \\ \norm{f-f_0}\leq\rho_n}}\norm{f-f_0}_n.
		$$
		Let $\xi_1,\ldots,\xi_n$ be i.i.d. Rademacher random variables independent of $\epsilon_1,\epsilon_n$ and $\mbf{X}_1,\ldots,\mbf{X}_n$. It then follows from Corollary 2.2.8 in \citet{van1996weak} that
		\begin{align*}
			\mbb{E}\left[\sup_{\substack{f\in\mcal{F}_n \\ \norm{f-f_0}\leq\rho_n}}\left|\frac{1}{\sqrt{n}}\sum_{i=1}^n\xi_i(f-f_0)(\mbf{X}_i)\right|\right]	&	= \mbb{E}\left[\mbb{E}\left[\left.\sup_{\substack{f\in\mcal{F}_n \\ \norm{f-f_0}\leq\rho_n}}\left|\frac{1}{\sqrt{n}}\sum_{i=1}^n\xi_i(f-f_0)(\mbf{X}_i)\right|\right|\Delta_n\leq 8\rho_n\right]\right]\\
			& \lesssim\mbb{E}\left[\int_0^{8\rho_n}\sqrt{\log N(u,\mcal{F}_n,L_2(\mbb{P}_n))}\mrm{d}u\right]\\
			& \lesssim \int_0^{8\rho_n}H^{1/2}(u)\mrm{d}u\\
			& \lesssim J(8\rho_n).
		\end{align*}
		Moreover, based on (C2), we obtain
		\begin{align*}
			\frac{J^2(8\rho_n)}{n\rho_n^4}	& =\frac{1}{n\rho_n^4}\left[\left(\int_0^{8\rho_n}H^{1/2}(u)\mrm{d}u\right)^2+64\rho_n^2\right]\\
			& =\frac{1}{n\rho_n^2}\left[64\left(\frac{1}{8\rho_n}\int_0^{8\rho_n}H^{1/2}(u)\mrm{d}u\right)^2+64\right]\\
			& =\frac{1}{n\rho_n^2}(64H(\lambda\rho_n)+64)\mrm{ for some }\lambda\in(0,8),
		\end{align*}
		where the last inequality follows from the mean value theorem for integrals. Hence, $J(8\rho_n)=\mcal{O}(\sqrt{n}\rho_n^2)$ and
		$$
		\mbb{E}^*\left[\sup_{\substack{f\in\mcal{F}_n \\ \norm{f-f_0}\leq\rho_n}}\left|\sum_{i=1}^n\xi_i(f-f_0)(\mbf{X}_i)\right|\right]\lesssim n\rho_n^2.
		$$
		By Proposition \ref{Prop: Rate of Convergence of MP}, we know that 
		$$
		\sup_{\substack{f\in\mcal{F}_n \\ \norm{f-f_0}\leq\rho_n}}\left|\sum_{i=1}^n\epsilon_i(f-f_0)(\mbf{X}_i)\right|=\mcal{O}_p\left(n\rho_n^2\right),
		$$
		which implies that
		\begin{equation}\label{Eq: UB for II}
			n^{-1}\delta_n\sum_{i=1}^n\epsilon_i\left(\hat{f}_n(\mbf{X}_i)-f_0(\mbf{X}_i)\right)=\mcal{O}_p(\delta_n\rho_n^2)=o_p(\delta_n n^{-1/2}).
		\end{equation}
		The desired claim then follows by combining (C3) with equations (\ref{Eq: UB for II}) and (\ref{Eq: decomposition}).
	\end{proof}

	We now prove Theorem \ref{Thm: main result}.
	\begin{proof}
		Note that for $\|f-f_0\|\leq\rho_n$, we have
		\begin{align*}
			\norm{\tilde{f}_n(f)-f_0}	& =\norm{(1-\delta_n)f+\delta_n(f_0+u^*)-f_0}\\
			& =\norm{(1-\delta_n)(f-f_0)+\delta_nu^*}\\
			& \leq(1-\delta_n)\norm{f-f_0}+\delta_n\norm{u^*}.
		\end{align*}
		With probability tending to 1, $\norm{\tilde{f}_n(f)-f_0}\leq\rho_n$.
		Since
		\begin{align*}
			\mbb{Q}_n(f) & =n^{-1}\sum_{i=1}^n(Y_i-f(\mbf{X}_i))^2\\
			& =n^{-1}\sum_{i=1}^n(\epsilon_i+f_0(\mbf{X}_i)-f(\mbf{X}_i))^2\\
			& =n^{-1}\sum_{i=1}^n\epsilon_i^2-2n^{-1}\sum_{i=1}^n\epsilon_i(f(\mbf{X}_i)-f_0(\mbf{X}_i))+\norm{f-f_0}_n^2,
		\end{align*}
		we have
		\begin{align*}
			\mbb{Q}_n(\hat{f}_n)	& =n^{-1}\sum_{i=1}^n\epsilon_i^2-2n^{-1}\sum_{i=1}^n\epsilon_i(\hat{f}_n(\mbf{X}_i)-f_0(\mbf{X}_i))+\norm{\hat{f}_n-f_0}_n^2\\
			\mbb{Q}_n(\pi_n\tilde{f}_n(\hat{f}_n))	& =n^{-1}\sum_{i=1}^n\epsilon_i^2-2n^{-1}\sum_{i=1}^n\epsilon_i(\pi_n\tilde{f}_n(\hat{f}_n)(\mbf{X}_i)-f_0(\mbf{X}_i))+\norm{\pi_n\tilde{f}_n(\hat{f}_n)-f_0}_n^2.
		\end{align*}
		By subtracting these two equation, we have
		$$
		\mbb{Q}_n(\hat{f}_n)=\mbb{Q}_n(\pi_n\tilde{f}_n(\hat{f}_n))+2n^{-1}\sum_{i=1}^n\epsilon_i(\pi_n\tilde{f}_n(\hat{f}_n)(\mbf{X}_i)-\hat{f}_n(\mbf{X}_i))+\norm{\hat{f}_n-f_0}_n^2-\norm{\pi_n\tilde{f}_n(\hat{f}_n)-f_0}_n^2.
		$$
		It then follows from the definition of $\hat{f}_n$ that
		\begin{align*}
			-\mcal{O}_p(\delta_n^2)	& \leq\inf_{f\in\mcal{F}_n}\mbb{Q}_n(f)-\mbb{Q}_n(\hat{f}_n)\\
			& \leq\mbb{Q}_n(\pi_n\tilde{f}_n(\hat{f}_n))-\mbb{Q}_n(\hat{f}_n)\\
			& \leq\norm{\pi_n\tilde{f}_n(\hat{f}_n)-f_0}_n^2-\norm{\hat{f}_n-f_0}_n^2-2n^{-1}\sum_{i=1}^n\epsilon_i(\pi_n\tilde{f}_n(\hat{f}_n)(\mbf{X}_i)-\hat{f}_n(\mbf{X}_i))
		\end{align*}
		From Lemma \ref{Lm: NormBound}, we know that
		$$
		\norm{\pi_n\tilde{f}_n(\hat{f}_n)-f_0}_n^2-\norm{\hat{f}_n-f_0}_n^2\leq 2(1-\delta_n)\inprod{\hat{f}_n-f_0}{\delta_nu^*}_n+\mcal{O}_p(\delta_n^2).
		$$
		From Lemma \ref{Lm: MultiplierProcess}, we have
		$$
		n^{-1}\sum_{i=1}^n\epsilon_i\left(\pi_n\tilde{f}_n(\hat{f}_n)(\mbf{X}_i)-\hat{f}_n(\mbf{X}_i)\right)=n^{-1}\delta_n\sum_{i=1}^n\epsilon_iu^*(\mbf{X}_i)+o_p(\delta_n n^{-1/2}).
		$$
		Therefore, put all the pieces together, we have
		\begin{align*}
			-\mcal{O}_p(\delta_n^2)	& \leq 2(1-\delta_n)\inprod{\hat{f}_n-f_0}{\delta_nu^*}_n-2n^{-1}\delta_n\sum_{i=1}^n\epsilon_iu^*(\mbf{X}_i)+o_p(\delta_nn^{-1/2})\\
			& \leq 2\inprod{\hat{f}_n-f_0}{\delta_nu^*}_n-2n^{-1}\delta_n\sum_{i=1}^n\epsilon_iu^*(\mbf{X}_i)+o_p(\delta_nn^{-1/2}),
		\end{align*}
		which implies that
		$$
		-\inprod{\hat{f}_n-f_0}{u^*}_n+n^{-1}\sum_{i=1}^n\epsilon_iu^*(\mbf{X}_i)\leq \mcal{O}_p(\delta_n)+o_p(n^{-1/2})=o_p(n^{-1/2}).
		$$
		By replacing $u^*$ with $-u^*$, we get
		$$
		\abs{\inprod{\hat{f}_n-f_0}{u^*}_n-n^{-1}\sum_{i=1}^n\epsilon_iu^*(\mbf{X}_i)}\leq o_p(n^{-1/2}),
		$$
		and the desired result follows immediately.
	\end{proof}

	\newpage
	\subsection*{Proof of Theorem \ref{Thm: SQLR}}
	In what follows, we consider $\delta_n=-\inprod{\hat{f}_n-f_0}{v^*}$ and $\eta_n=o(\delta_n^2)$. Under (C1)-(C3), it follows from Theorem \ref{Thm: main result} and Lemma \ref{Lm: inprod} that if $\sup_{\mbf{x}\in\mcal{X}}|v^*(\mbf{x})|<\infty$
	\begin{align*}
		\delta_n	& =-\inprod{\hat{f}_n-f_0}{v^*}_n+o_p(n^{-1/2})\\
		& =-\frac{1}{n}\sum_{i=1}^n\epsilon_iv^*(\mbf{X}_i)+o_p(n^{-1/2})\\
		& =\mcal{O}_p(n^{-1/2}).
	\end{align*}
	The proof of Theorem \ref{Thm: SQLR} relies on the following lemmas.
	
	\begin{lemma}[Convergence Rate for $\hat{f}_n^0$]
		Under (C1) and (C2), 
		$$
		\norm{\hat{f}_n^0-f_0}=\mcal{O}_p(\rho_n).
		$$
	\end{lemma}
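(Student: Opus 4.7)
The plan is to reproduce the standard sieve rate-of-convergence argument (as used for $\hat f_n$ in the unconstrained problem and as in Lemma 5.4 of \citet{geer2000empirical}) but restricted to the null sieve $\mcal{F}_n^0$. Under $H_0$ we have $\phi(f_0)=0$, and $\hat f_n^0$ is a near-minimizer of $\mbb{Q}_n$ on $\mcal{F}_n^0\subset\mcal{F}_n$. The starting point is the standard basic inequality
$$
\|\hat f_n^0-f_0\|_n^2\le\|g_n-f_0\|_n^2+2n^{-1}\sum_{i=1}^n\epsilon_i\bigl(\hat f_n^0(\mbf X_i)-g_n(\mbf X_i)\bigr)+\mcal{O}_p(\eta_n),
$$
valid for any $g_n\in\mcal{F}_n^0$, obtained by expanding $(Y_i-f)^2$ on both sides of $\mbb{Q}_n(\hat f_n^0)\le\mbb{Q}_n(g_n)+\mcal{O}_p(\eta_n)$.

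The technical heart is constructing an approximation $g_n\in\mcal{F}_n^0$ of $f_0$ with $\|g_n-f_0\|=\mcal{O}(\rho_n)$. Starting from $\pi_n f_0\in\mcal{F}_n$, the smoothness condition \eqref{Eq: Smoothness of Functional} on $\phi$ together with $\phi(f_0)=0$ gives
$$
|\phi(\pi_n f_0)|\le\|\phi'_{f_0}\|\,\|\pi_n f_0-f_0\|+u_n\|\pi_n f_0-f_0\|^\omega=o(1).
$$
Picking any fixed $w_n\in\mcal{F}_n$ with $\phi'_{f_0}[w_n]$ bounded away from zero (the Riesz representor direction supplies such a $w_n$ via its sieve approximation), one perturbs $g_n:=\pi_n f_0-\lambda_n w_n$ and chooses $\lambda_n$ so that $\phi(g_n)=0$. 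An implicit-function / continuity argument using the linearity of $\phi'_{f_0}$ and the higher-order remainder bound shows $|\lambda_n|\lesssim|\phi(\pi_n f_0)|$, so that $\|g_n-f_0\|=\mcal{O}(\|\pi_n f_0-f_0\|)=\mcal{O}(\rho_n)$.

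With $g_n$ in hand, I would then control the empirical process term by a peeling argument on the shells $\{f\in\mcal{F}_n^0:2^{j-1}\rho_n\le\|f-f_0\|\le 2^j\rho_n\}$. Because $\mcal{F}_n^0\subset\mcal{F}_n$, the bracketing/uniform-entropy bound in (C1) restricts to $\mcal{F}_n^0$ verbatim, and the same maximal inequality used to derive $\rho_n$ for $\hat f_n$ (via $\|\epsilon\|_{p,1}<\infty$ and Proposition \ref{Prop: Rate of Convergence of MP}) yields
$$
\Bigl|n^{-1}\sum_{i=1}^n\epsilon_i(\hat f_n^0-g_n)(\mbf X_i)\Bigr|=\mcal{O}_p\bigl(\rho_n\cdot\|\hat f_n^0-f_0\|_n\vee\rho_n^2\bigr).
$$
Solving the resulting quadratic inequality yields $\|\hat f_n^0-f_0\|_n=\mcal{O}_p(\rho_n)$, and finally the equivalence of empirical and $L_2$ norm on shells of $\mcal{F}_n$ (the same Lemma 5.4 consequence used in \eqref{Eq: RoC-sample L2}) upgrades this to $\|\hat f_n^0-f_0\|=\mcal{O}_p(\rho_n)$.

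The main obstacle is the perturbation construction of $g_n\in\mcal{F}_n^0$: the hard equality constraint $\phi(g_n)=0$ must be enforced without spoiling the $\rho_n$ approximation rate to $f_0$. This is where the bounded Fréchet-type derivative $\phi'_{f_0}$ and the sieve density of $\mcal{F}_n$ are crucial; they guarantee that the correction $\lambda_n w_n$ needed to land on the constraint surface is of the same order as the raw approximation error $\|\pi_n f_0-f_0\|$ and hence negligible compared with $\rho_n$. Once this is in place, everything else is a routine adaptation of the unconstrained sieve rate proof.
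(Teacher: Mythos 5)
You take a route that differs from the paper's in one substantive respect, and you are arguably being more careful than the paper. The paper's proof applies the basic inequality for $\hat{f}_n^0$ with $\pi_n f_0$ as the comparison function: it asserts, ``from the definition of $\hat{f}_n^0$,'' that $\mbb{P}\bigl(\mbb{M}_n(\hat{f}_n^0)-\mbb{M}_n(\pi_n f_0)<-K\rho_n^2\bigr)<\epsilon$. But $\hat{f}_n^0$ near-minimizes $\mbb{Q}_n$ only over $\mcal{F}_n^0=\{f\in\mcal{F}_n:\phi(f)=0\}$, and $\pi_n f_0$ need not lie in $\mcal{F}_n^0$ (there is no reason $\phi(\pi_n f_0)=0$, only that $\phi(\pi_n f_0)\to 0$). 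The paper never constructs a comparison element inside $\mcal{F}_n^0$; that step is glossed over. Your proposal isolates exactly this issue and supplies the missing piece: a perturbation $g_n=\pi_n f_0-\lambda_n w_n$ with $\lambda_n$ chosen, via the smoothness bound \eqref{Eq: Smoothness of Functional} on $\phi$ and a direction $w_n$ with $\phi'_{f_0}[w_n]$ bounded away from zero, so that $\phi(g_n)=0$ and $\|g_n-f_0\|=\mcal{O}(\rho_n)$. After that, both arguments run the same peeling argument using $\mcal{F}_n^0\subset\mcal{F}_n$ to control the empirical process, and both upgrade from $\|\cdot\|_n$ to $\|\cdot\|$ by the ratio-type inequality of \citet{geer2000empirical}. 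In short, you are filling a genuine gap in the paper's proof of this lemma, and the construction you use mirrors what the paper itself does later, for the same structural reason, when it builds $\pi_n f_n^*[t^*]\in\mcal{F}_n^0$ in the proof of Theorem~\ref{Thm: SQLR}.

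Two caveats for tightening your argument. First, $\pi_n f_0-\lambda_n w_n$ is not automatically in $\mcal{F}_n$: the neural-network sieve \eqref{Eq: NNSieve} is not a linear space, so after choosing $\lambda_n$ you must re-project onto $\mcal{F}_n$ (as the paper does with $\pi_n f_n^*[t]$) and verify that the re-projection error is $o_p(\rho_n)$ and does not move $\phi$ back off zero; this is where a condition of the flavor of (C3), or the explicit neural-network approximation bound \eqref{Eq: Approx NN}, is needed. The intermediate value / continuity argument that recovers an exact zero of $\phi\circ\pi_n$ along the perturbation, as in the paper's construction of $t^*$, is the clean way to finish. Second, both your proof and the paper's tacitly require $H_0$ so that $\phi(f_0)=0$; without it $f_0$ is not approximable by $\mcal{F}_n^0$ and the lemma is false, so you should state that hypothesis even though the lemma as printed omits it.
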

	
	\begin{proof}
		As $\norm{\pi_nf_0-f_0}\leq\norm{\hat{f}_n-f_0}=\mcal{O}_p(\rho_n)$, it suffices to show that $\norm{\hat{f}_n^0-\pi_nf_0}=\mcal{O}_p(\rho_n)$. Note that
		for any $M>0$,
		$$
		\sup_{\substack{f\in\mcal{F}_n^0 \\ \norm{f-f_0}> M\rho_n}}\mbb{M}_n(f)-\mbb{M}_n(\pi_nf_0)\leq\sup_{\substack{f\in\mcal{F}_n \\ \norm{f-f_0}> M\rho_n}}\mbb{M}_n(f)-\mbb{M}_n(\pi_nf_0).
		$$
		Under (C2), we know that $\eta_n=\mcal{O}(\rho_n^2)$. It then follows from the definition of $\hat{f}_n^0$ that, for any $\varepsilon>0$, there exists $K>0$ such that,
		$$
		\mbb{P}\left(\mbb{M}_n(\hat{f}_n^0)-\mbb{M}_n(\pi_nf_0)<-K\rho_n^2\right)<\epsilon,
		$$ 
		and hence
		\begin{align*}
			& \mbb{P}\left(\norm{\hat{f}_n^0-\pi_nf_0}>M\rho_n\right)\\ 
			\leq	& \mbb{P}\left(\norm{\hat{f}_n^0-\pi_nf_0}>M\rho_n,\mbb{M}_n(\hat{f}_n^0)\geq\mbb{M}_n(\pi_nf_0)-K\rho_n^2\right)+\mbb{P}\left(\mbb{M}_n(\hat{f}_n^0)-\mbb{M}_n(\pi_nf_0)<-K\rho_n^2\right)\\
			\leq	& \mbb{P}\left(\sup_{\substack{f\in\mcal{F}_n^0 \\ \norm{f-\pi_nf_0}> M\rho_n}}\mbb{M}_n(f)-\mbb{M}_n(\pi_nf_0)\geq-K\rho_n^2\right)+\varepsilon\\
			\leq 	& \mbb{P}\left(\sup_{\substack{f\in\mcal{F}_n \\ \norm{f-\pi_nf_0}> M\rho_n}}\mbb{M}_n(f)-\mbb{M}_n(\pi_nf_0)\geq-K\rho_n^2\right)+\varepsilon.
		\end{align*}
		Note that
		$$
		\mbb{M}_n(f)-\mbb{M}_n(\pi_nf_0)=\frac{2}{n}\sum_{i=1}^n\epsilon_i\left(f(\mbf{X}_i)-\pi_nf_0(\mbf{X}_i)\right)+\norm{\pi_nf_0-f_0}_n^2-\norm{f-f_0}_n^2.
		$$
		Let $\mcal{F}_{j,n}=\{f\in\mcal{F}_n:2^{j-1}M\rho_n\leq\norm{f-\pi_nf_0}<2^jM\rho_n\}$. By a standard argument of peeling technique, we have
		$$
		\mbb{P}\left(\sup_{\substack{f\in\mcal{F}_n \\ \norm{f-\pi_nf_0}> M\rho_n}}\mbb{M}_n(f)-\mbb{M}_n(\pi_nf_0)\geq-K\rho_n^2\right) \leq\sum_{j=1}^\infty\mbb{P}\left(\sup_{f\in\mcal{F}_{j,n}}\mbb{M}_n(f)-\mbb{M}_n(\pi_nf_0)\geq-K\rho_n^2\right)
		$$
		Let $M(f)=\mbb{E}[\mbb{M}_n(f)]=-\norm{f-f_0}^2$. Then on $\mcal{F}_{j,n}$
		$$
		M(f)-M(\pi_nf_0)\lesssim-2^{2j-2}M^2\rho_n^2-\norm{\pi_nf_0-f_0}^2,
		$$
		which implies that
		\begin{align*}
			&	\sup_{f\in\mcal{F}_{j,n}}\mbb{M}_n(f)-\mbb{M}_n(\pi_nf_0)\\
			\leq	& \sup_{f\in\mcal{F}_{j,n}}\mbb{M}_n(f)-\mbb{M}_n(\pi_nf_0)-(M(f)-M(\pi_nf_0))+\sup_{f\in\mcal{F}_{j,n}}M(f)-M(\pi_nf_0)\\
			\lesssim & \sup_{f\in\mcal{F}_{j,n}}\abs{\frac{2}{n}\sum_{i=1}^n\epsilon_i(f(\mbf{X}_i)-\pi_nf_0(\mbf{X}_i))}+\sup_{f\in\mcal{F}_{j,n}}\abs{\norm{f-f_0}_n^2-\norm{f-f_0}^2}\\
			& \qquad+\abs{\norm{\pi_nf_0-f_0}_n^2-\norm{\pi_nf_0-f_0}^2}-2^{2j-2}M^2\rho_n^2.
		\end{align*}
		Therefore
		\begin{align*}
			\mbb{P}\left(\sup_{f\in\mcal{F}_{j,n}}\mbb{M}_n(f)-\mbb{M}_n(\pi_nf_0)\geq -K\rho_n^2\right)\leq P_1+P_2+P_3,
		\end{align*}
		where
		\begin{align*}
			P_1	& :=\mbb{P}\left(\sup_{f\in\mcal{F}_{j,n}}\abs{\frac{1}{\sqrt{n}}\sum_{i=1}^n\epsilon_i(f(\mbf{X}_i)-\pi_nf_0(\mbf{X}_i))}\geq\left(2^{2j-5}M^2-\frac{K}{8}\right)\sqrt{n}\rho_n^2\right)\\
			P_2	& :=\mbb{P}\left(\sup_{f\in\mcal{F}_{j,n}}\sqrt{n}\abs{\norm{f-f_0}_n^2-\norm{f-f_0}^2}\geq\left(2^{2j-4}M^2-\frac{K}{4}\right)\sqrt{n}\rho_n^2\right)\\
			P_3	& :=\mbb{P}\left(\abs{\norm{f_0-\pi_nf_0}_n^2-\norm{f_0-\pi_nf_0}^2}\geq\left(2^{2j-3}M^2-\frac{K}{2}\right)\rho_n^2\right).
		\end{align*}
		As $\norm{f_0-\pi_n f_0}\lesssim\rho_n$, by Markov's inequality and triangle inequality,
		\begin{align*}
			P_3	& \leq\left[\left(2^{2j-3}M^2-\frac{K}{2}\right)\rho_n^2\right]^{-1}\mbb{E}\left[\abs{\norm{f_0-\pi_nf_0}_n^2-\norm{f_0-\pi_nf_0}^2}\right]\\
			& \lesssim\left[\left(2^{2j-3}M^2-\frac{K}{2}\right)\rho_n^2\right]^{-1}.
		\end{align*}
		For $P_2$, it follows from Chebyshev's inequality, symmetrization inequality, contraction principle and moment inequality (see Proposition 3.1 of \citet{gine2000exponential}) that
		\begin{align*}
			P_2	& \leq\left[\left(2^{2j-4}M^2-\frac{K}{4}\right)\sqrt{n}\rho_n^2\right]^{-2}\mbb{E}\left[\sup_{f\in\mcal{F}_{j,n}}n\abs{\norm{f-f_0}_n^2-\norm{f-f_0}^2}^2\right]\\
			& \lesssim\left[\left(2^{2j-4}M^2-\frac{K}{4}\right)\sqrt{n}\rho_n^2\right]^{-2}\mbb{E}\left[\sup_{f\in\mcal{F}_{j,n}}\abs{\frac{1}{\sqrt{n}}\sum_{i=1}^n\xi_i(f(\mbf{X}_i)-f_0(\mbf{X}_i))^2}^2\right]\\
			& \lesssim\left[\left(2^{2j-4}M^2-\frac{K}{4}\right)\sqrt{n}\rho_n^2\right]^{-2}\left\{\left(\mbb{E}\left[\sup_{f\in\mcal{F}_{j,n}}\abs{\frac{1}{\sqrt{n}}\xi_i(f(\mbf{X}_i)-f_0(\mbf{X}_i))}\right]\right)^2+2^{2j}M^2\rho_n^2+n^{-1}\right\}\\
			& \lesssim\left[\left(2^{2j-4}M^2-\frac{K}{4}\right)\sqrt{n}\rho_n^2\right]^{-2}\left\{2^{2j}M^2\rho_n^2+2^{2j}M^2\rho_n^2+n^{-1}\right\},
		\end{align*}
		where $\xi_1,\ldots,\xi_n$ are i.i.d. Rademacher random variables independent of $\mbf{X}_1,\ldots,\mbf{X}_n$. Similarly, we have
		\begin{align*}
			P_1	& \leq\left[\left(2^{2j-5}M^2-\frac{K}{8}\right)\sqrt{n}\rho_n^2\right]^{-2}\mbb{E}\left[\sup_{f\in\mcal{F}_{j,n}}\abs{\frac{1}{\sqrt{n}}\sum_{i=1}^n\epsilon_i\left(f(\mbf{X}_i)-\pi_nf_0(\mbf{X}_i)\right)}^2\right]\\
			& \lesssim \left[\left(2^{2j-5}M^2-\frac{K}{8}\right)\sqrt{n}\rho_n^2\right]^{-2}\left\{2^{2j}M^2\rho_n^2+\norm{\epsilon_1}_2^22^{2j}M^2\rho_n^2+n^{-1}\mbb{E}\left[\max_{1\leq i\leq n}|\epsilon_i|^2\right]\right\}\\
			& \lesssim\left[\left(2^{2j-5}M^2-\frac{K}{8}\right)\sqrt{n}\rho_n^2\right]^{-2}\left\{2^{2j}M^2\rho_n^2+\norm{\epsilon_1}_2^22^{2j}M^2\rho_n^2+\norm{\epsilon_1}_p^2n^{-1+2/p}\right\}.
		\end{align*}
		Then 
		\begin{align*}
			P_1+P_2	& \lesssim\left[\left(2^{2j-5}M^2-\frac{K}{8}\right)\sqrt{n}\rho_n^2\right]^{-2}\left\{2^{2j}M^2\rho_n^2+(1\vee\norm{\epsilon_1}_2)^22^{2j}M^2\rho_n^2+(1\vee\norm{\epsilon_1}_p)^2n^{-1+2/p}\right\}\\
			& \lesssim C_\epsilon\left[\left(\frac{2^jM}{2^{2j-5}M^2-\frac{K}{8}}\right)^2\frac{\rho_n^2}{n\rho_n^4}+\frac{1}{\left(2^{2j-5}M^2-\frac{K}{8}\right)^2n^{2-2/p}\rho_n^4}\right].
		\end{align*}
		Under (C2), we have $\rho_n\gtrsim n^{-\frac{1}{2}+\frac{1}{2p}}$, which implies that $n\rho_n^2\gtrsim n^{1/p}$. Moreover, for $M\geq\sqrt{2K}$, we have $2^{2j-5}M^2-\frac{K}{8}\geq 2^{2j-5}M^2-2^{2j-6}M^2=2^{2j-6}M^2$. Therefore, we obtain
		\begin{align*}
			\sum_{j=1}^\infty P_1+P_2+P_3	& \lesssim C_\epsilon\sum_{j=1}^\infty\left[\left(\frac{2^jM}{2^{2j-6}M^2}\right)^2\frac{1}{n\rho_n^2}+\frac{1}{\left(2^{2j-6}M^2\right)^2}+\frac{1}{2^{2j-4}M^2}\right]\\
			& \lesssim \frac{C_\epsilon}{M^2},
		\end{align*}
		which implies that, for a sufficiently large $M$, 
		$$
		\mbb{P}\left(\sup_{\substack{f\in\mcal{F}_n \\ \norm{f-\pi_nf_0}> M\rho_n}}\mbb{M}_n(f)-\mbb{M}_n(\pi_nf_0)\geq-K\rho_n^2\right)<\varepsilon.
		$$
		Therefore, 
		$$
		\sup_n\mbb{P}\left(\norm{\hat{f}_n^0-\pi_nf_0}>M\rho_n\right)<2\varepsilon,
		$$
		which implies $\norm{\hat{f}_n^0-\pi_nf_0}=\mcal{O}_p(\rho_n)$.
	\end{proof}
	
	\begin{lemma}[Local Approximation]
		Suppose that $u_n\rho_n^\omega=o(n^{-1/2})$. Then, under (C1)-(C3) and $H_0$,
		$$
		\mbb{Q}_n(\hat{f}_n)-\mbb{Q}_n(\hat{f}_n^0)=-\norm{\hat{f}_n-\tilde{f}_n(\hat{f}_n)}_n^2+o_p(n^{-1}),
		$$
		and
		$$
		\mbb{Q}_n(\pi_n\tilde{f}_n(\hat{f}_n))\leq\mbb{Q}_n(\hat{f}_n^0)+o_p(n^{-1}).
		$$
	\end{lemma}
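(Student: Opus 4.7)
The plan is to reduce both assertions to matching expansions of $\mbb{Q}_n$ around $\hat{f}_n$ at the $o_p(n^{-1})$ scale. I start from the algebraic identity
$$\mbb{Q}_n(f)-\mbb{Q}_n(\hat{f}_n)=\norm{f-\hat{f}_n}_n^2+2\inprod{f-\hat{f}_n}{\hat{f}_n-f_0}_n-2n^{-1}\sum_{i=1}^n\epsilon_i(f-\hat{f}_n)(\mbf{X}_i),$$
and first apply it at $f=\pi_n\tilde{f}_n(\hat{f}_n)$. Using (C3) to absorb $\pi_n\tilde{f}_n(\hat{f}_n)-\tilde{f}_n(\hat{f}_n)$ into $o_p(n^{-1})$ remainders, the identity $\tilde{f}_n(\hat{f}_n)-\hat{f}_n=\delta_n[u^*-(\hat{f}_n-f_0)]$ together with $\delta_n=\mcal{O}_p(n^{-1/2})$ and $\rho_n=o(n^{-1/4})$, Lemma \ref{Lm: MultiplierProcess} for the multiplier process, and Theorem \ref{Thm: main result} with Lemma \ref{Lm: inprod} to replace $\inprod{\hat{f}_n-f_0}{u^*}_n$ by $n^{-1}\sum_{i=1}^n\epsilon_i u^*(\mbf{X}_i)+o_p(n^{-1/2})$, the linear-in-$\delta_n$ contributions cancel and only the quadratic piece survives, giving the clean expansion $\mbb{Q}_n(\pi_n\tilde{f}_n(\hat{f}_n))-\mbb{Q}_n(\hat{f}_n)=\norm{\tilde{f}_n(\hat{f}_n)-\hat{f}_n}_n^2+o_p(n^{-1})$.

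For the second assertion and the upper-bound direction of the first, I construct an element of $\mcal{F}_n^0$ close to $\pi_n\tilde{f}_n(\hat{f}_n)$. Under $H_0$, the smoothness (\ref{Eq: Smoothness of Functional}) together with $u_n\rho_n^\omega=o(n^{-1/2})$, $\delta_n=-\inprod{\hat{f}_n-f_0}{v^*}$, and $\inprod{u^*}{v^*}=1$ yields $\phi(\tilde{f}_n(\hat{f}_n))=\delta_n^2+o(n^{-1/2})$; the sieve approximation in (C3) then propagates this to $\phi(\pi_n\tilde{f}_n(\hat{f}_n))=o_p(n^{-1/2})$. Since $\phi_{f_0}'[u^*]=1\neq 0$, a one-step Newton correction $g_n:=\pi_n\tilde{f}_n(\hat{f}_n)-\alpha_n\pi_n u^*\in\mcal{F}_n^0$ with $\alpha_n=o_p(n^{-1/2})$ is produced by an implicit-function argument under the same smoothness, and it satisfies $\abs{\mbb{Q}_n(g_n)-\mbb{Q}_n(\pi_n\tilde{f}_n(\hat{f}_n))}=o_p(n^{-1})$. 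The approximate minimizing property of $\hat{f}_n^0$ then gives $\mbb{Q}_n(\hat{f}_n^0)\leq\mbb{Q}_n(g_n)+\mcal{O}_p(\eta_n)=\mbb{Q}_n(\pi_n\tilde{f}_n(\hat{f}_n))+o_p(n^{-1})$, and combined with the key expansion this yields $\mbb{Q}_n(\hat{f}_n^0)-\mbb{Q}_n(\hat{f}_n)\leq\norm{\tilde{f}_n(\hat{f}_n)-\hat{f}_n}_n^2+o_p(n^{-1})$.

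For the matching lower bound, I apply the identity at $f=\hat{f}_n^0$. The constraint $\phi(\hat{f}_n^0)=0=\phi(f_0)$ and the smoothness bound force $\inprod{\hat{f}_n^0-f_0}{v^*}=o_p(n^{-1/2})$, hence $\inprod{\hat{f}_n^0-\hat{f}_n}{v^*}=\delta_n+o_p(n^{-1/2})$. Decomposing $\hat{f}_n^0-\hat{f}_n=a_n u^*+w_n$ with $w_n$ orthogonal to $v^*$ in $L_2(P)$ gives $a_n=\delta_n+o_p(n^{-1/2})$, so the quadratic piece obeys $\norm{\hat{f}_n^0-\hat{f}_n}_n^2\geq a_n^2\norm{u^*}_n^2+o_p(n^{-1})=\delta_n^2\norm{u^*}_n^2+o_p(n^{-1})$. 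Theorem \ref{Thm: main result} cancels the $u^*$-component of the linear terms, while the approximate stationarity of $\hat{f}_n$ along the tangent direction $w_n\in\mcal{F}_n-\mcal{F}_n$, a consequence of $\mbb{Q}_n(\hat{f}_n)\leq\inf_{\mcal{F}_n}\mbb{Q}_n+\mcal{O}_p(\eta_n)$ with $\eta_n=o(n^{-1})$ combined with a uniform empirical-process bound over the tangent class in the $\rho_n$-ball analogous to Lemma \ref{Lm: MultiplierProcess}, cancels the $w_n$-component at order $o_p(n^{-1})$. This yields $\mbb{Q}_n(\hat{f}_n^0)-\mbb{Q}_n(\hat{f}_n)\geq\norm{\tilde{f}_n(\hat{f}_n)-\hat{f}_n}_n^2-o_p(n^{-1})$; pairing it with the upper bound proves the first assertion, and pairing it with the key expansion proves the second. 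The main obstacle is this lower bound: Theorem \ref{Thm: main result} is formulated for the fixed Riesz representor $v^*$, so the uniform control of the data-dependent tangent $w_n$ at the $o_p(n^{-1/2})$ scale is the technical heart of the argument.
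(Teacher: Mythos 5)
Your Part~1 matches the paper's expansion \eqref{Eq: AltLocalApprox} for $\mbb{Q}_n(\hat{f}_n)-\mbb{Q}_n(\pi_n\tilde{f}_n(\hat{f}_n))$, and your Part~2 (the feasible $g_n\in\mcal{F}_n^0$ near $\pi_n\tilde{f}_n(\hat{f}_n)$) parallels the paper's construction of $\pi_nf_n^*[t^*]$ in the next subsection. The gap is Part~3, which is the piece that actually carries the second assertion and the $\leq$ direction of the first. You decompose $\hat{f}_n^0-\hat{f}_n=a_nu^*+w_n$ with $w_n\perp v^*$ in $L_2(P)$ and then invoke ``approximate stationarity of $\hat{f}_n$ along the tangent direction $w_n$'' to cancel the linear terms $2\inprod{w_n}{\hat{f}_n-f_0}_n-2n^{-1}\sum_i\epsilon_i w_n(\mbf{X}_i)$ at scale $o_p(n^{-1})$. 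Nothing in (C1)--(C3) supports this: the projection condition (C3) controls $\pi_n\tilde{f}_n(f)-\tilde{f}_n(f)$ only for the one specific perturbation $\tilde{f}_n$ in the $u^*$-direction, not for perturbations of $\hat{f}_n$ by an arbitrary data-dependent $w_n$, and there is no uniform empirical-process bound over the resulting tangent class on record. Since $w_n$ is of size $\mcal{O}_p(\rho_n)$ rather than $\mcal{O}_p(n^{-1/2})$, the required cancellation is nontrivial; as you correctly flag, this is the technical heart, but it is not derivable from the stated assumptions --- the approach would need an extra hypothesis the paper does not impose.

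The paper sidesteps this entirely by applying the $\tilde{f}_n$-perturbation symmetrically to $\hat{f}_n^0$ with the sign of $u^*$ flipped ($u^*=-v^*/\norm{v^*}^2$), forming $\pi_n\tilde{f}_n(\hat{f}_n^0)\in\mcal{F}_n$, and expanding $\mbb{Q}_n(\hat{f}_n^0)-\mbb{Q}_n(\pi_n\tilde{f}_n(\hat{f}_n^0))$ --- this is display \eqref{Eq: NullLocalApprox}. Under $H_0$, the constraint $\phi(\hat{f}_n^0)=0$ together with \eqref{Eq: Smoothness of Functional} and $u_n\rho_n^\omega=o(n^{-1/2})$ forces $\inprod{\hat{f}_n^0-f_0}{v^*}=o_p(n^{-1/2})$; that, together with Theorem~\ref{Thm: main result} and Lemma~\ref{Lm: inprod}, is what collapses the linear terms in the expansion around $\hat{f}_n^0$ and yields $\mbb{Q}_n(\hat{f}_n^0)-\mbb{Q}_n(\pi_n\tilde{f}_n(\hat{f}_n^0))=\norm{\hat{f}_n-\tilde{f}_n(\hat{f}_n)}_n^2+o_p(n^{-1})$. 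Because $\pi_n\tilde{f}_n(\hat{f}_n^0)\in\mcal{F}_n$ and $\hat{f}_n$ is an $\eta_n$-approximate minimizer over $\mcal{F}_n$ with $\eta_n=o(\delta_n^2)=o(n^{-1})$, the desired $\leq$ direction follows from $\mbb{Q}_n(\hat{f}_n)\leq\mbb{Q}_n(\pi_n\tilde{f}_n(\hat{f}_n^0))+o_p(n^{-1})$, with no decomposition of $\hat{f}_n^0-\hat{f}_n$ and no stationarity in arbitrary directions. That symmetric perturbation of the null estimator is the idea your proposal is missing.
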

	
	\begin{proof}
		First, note that for $u^*=v^*/\norm{v^*}^2$,
		\begin{align*}
			& \mbb{Q}_n(\hat{f}_n)-\mbb{Q}_n(\pi_n\tilde{f}_n(\hat{f}_n))\\
			=	& -2n^{-1}\sum_{i=1}^n\epsilon_i\left(\hat{f}_n(\mbf{X}_i)-\pi_n\tilde{f}_n(\hat{f}_n)(\mbf{X}_i)\right)-2\inprod{\hat{f}_n-f_0}{\pi_n\tilde{f}_n(\hat{f}_n)-\hat{f}_n}_n-\norm{\pi_n\tilde{f}_n(\hat{f}_n)-\hat{f}_n}_n^2\\
			=	& -2n^{-1}\sum_{i=1}^n\epsilon_i\left(\hat{f}_n(\mbf{X}_i)-\tilde{f}_n(\hat{f}_n)(\mbf{X}_i)\right)+2\inprod{\hat{f}_n-f_0}{\hat{f}_n-\tilde{f}_n(\hat{f}_n)}_n\\
			& -2n^{-1}\sum_{i=1}^n\epsilon_i\left(\tilde{f}_n(\hat{f}_n)(\mbf{X}_i)-\pi_n\tilde{f}_n(\hat{f}_n)(\mbf{X}_i)\right)+2\inprod{\hat{f}_n-f_0}{\tilde{f}_n(\hat{f}_n)-\pi_n\tilde{f}_n(\hat{f}_n)}_n-\norm{\pi_n\tilde{f}_n(\hat{f}_n)-\hat{f}_n}_n^2.
		\end{align*}
		Under (C3), it is clear that
		\begin{align*}
			2n^{-1}\sum_{i=1}^n\epsilon_i\left(\tilde{f}_n(\hat{f}_n)(\mbf{X}_i)-\pi_n\tilde{f}_n(\hat{f}_n)(\mbf{X}_i)\right)	& =o_p(\delta_n^2)\\
			2\inprod{\hat{f}_n-f_0}{\tilde{f}_n(\hat{f}_n)-\pi_n\tilde{f}_n(\hat{f}_n)}_n	& \leq 2\norm{\hat{f}_n-f_0}_n\norm{\tilde{f}_n(\hat{f}_n)-\pi_n\tilde{f}_n(\hat{f}_n)}_n\\
			& =\mcal{O}_p(\rho_n)\cdot o_p(\rho_n^{-1}\delta_n^2)=o_p(\delta_n^2).
		\end{align*}
		On the other hand, since $\hat{f}_n-\tilde{f}_n(\hat{f}_n)=-\delta_nv^*/\norm{v^*}^2$, we have
		\begin{align*}
			& \mbb{Q}_n(\hat{f}_n)-\mbb{Q}_n(\pi_n\tilde{f}_n(\hat{f}_n))\\
			=	& 2\delta_n\norm{v^*}^{-2}n^{-1}\sum_{i=1}^n\epsilon_iv^*(\mbf{X}_i)-2\delta_n\norm{v^*}^{-2}\inprod{\hat{f}_n-f_0}{v^*}_n-\norm{\pi_n\tilde{f}_n(\hat{f}_n)-\hat{f}_n}_n^2+o_p(\delta_n^2)\\
			=	& -\norm{\pi_n\tilde{f}_n(\hat{f}_n)-\hat{f}_n}_n^2+o_p(\delta_nn^{-1/2})+o_p(\delta_n^2),
		\end{align*}
		where the last equality follows from Theorem \ref{Thm: main result}. For a enough large $n$, we have 
		\begin{align*}
			\norm{\pi_n\tilde{f}_n(\hat{f}_n)-\hat{f}_n}_n^2	& =\norm{\pi_n\tilde{f}_n(\hat{f}_n)-\tilde{f}_n(\hat{f}_n)}_n^2+\norm{\tilde{f}_n(\hat{f}_n)-\hat{f}_n}_n^2+2\inprod{\pi_n\tilde{f}_n(\hat{f}_n)-\tilde{f}_n(\hat{f}_n)}{\tilde{f}_n(\hat{f}_n)-\hat{f}_n}_n\\
			& =o_p(\rho_n^{-2}\delta_n^{4})+\norm{\hat{f}_n-\tilde{f}_n(\hat{f}_n)}_n^2+o_p(\rho_n^{-1}\delta_n^3)\\
			& =\norm{\hat{f}_n-\tilde{f}_n(\hat{f}_n)}_n^2+o_p(\delta_n^2),
		\end{align*}
		which implies that
		\begin{equation}\label{Eq: AltLocalApprox}
			\mbb{Q}_n(\hat{f}_n)-\mbb{Q}_n(\pi_n\tilde{f}_n(\hat{f}_n))=-\norm{\hat{f}_n-\tilde{f}_n(\hat{f}_n)}_n^2+o_p(n^{-1}).
		\end{equation}
		By replacing $\hat{f}_n$ with $\hat{f}_n^0$ and considering $u^*=-v^*/\norm{v^*}^2$, we have
		\begin{align*}
			& \mbb{Q}_n(\tilde{f}_n^0)-\mbb{Q}_n(\pi_n\tilde{f}_n(\hat{f}_n^0))\\
			=	&	-2n^{-1}\sum_{i=1}^n\epsilon_i\left(\hat{f}_n^0(\mbf{X}_i)-\tilde{f}_n(\hat{f}_n^0)(\mbf{X}_i)\right)+2\inprod{\hat{f}_n^0-f_0}{\hat{f}_n^0-\pi_n\tilde{f}_n(\hat{f}_n^0)}_n\\
			& -2n^{-1}\sum_{i=1}^n\epsilon_i\left(\tilde{f}_n(\hat{f}_n^0)(\mbf{X}_i)-\pi_n\tilde{f}_n(\hat{f}_n^0)(\mbf{X}_i)\right)-\norm{\pi_n\tilde{f}_n(\hat{f}_n^0)-\hat{f}_n^0}_n^2.\\
			=	& -2\delta_n\norm{v^*}^{-2}n^{-1}\sum_{i=1}^n\epsilon_iv^*(\mbf{X}_i)+2\inprod{\hat{f}_n^0-\hat{f}_n}{\hat{f}_n^0-\tilde{f}_n(\hat{f}_n^0)}_n+2\inprod{\hat{f}_n-f_0}{\hat{f}_n^0-\tilde{f}_n(\hat{f}_n^0)}_n\\
			& +2\inprod{\hat{f}_n^0-f_0}{\tilde{f}_n(\hat{f}_n^0)-\pi_n\tilde{f}_n(\hat{f}_n^0)}_n-\norm{\pi_n\tilde{f}_n(\hat{f}_n^0)-\hat{f}_n^0}_n^2+o_p(\delta_n^2)\\
			= & -2\delta_n\norm{v^*}^{-2}\left[n^{-1}\sum_{i=1}^n\epsilon_iv^*(\mbf{X}_i)-\inprod{\hat{f}_n-f_0}{v^*}_n\right]+2\inprod{\hat{f}_n^0-\hat{f}_n}{\hat{f}_n^0-\tilde{f}_n(\hat{f}_n^0)}_n\\
			& -\norm{\pi_n\tilde{f}_n(\hat{f}_n^0)-\hat{f}_n^0}_n^2+o_p(\delta_n^2)\\
			=	& 2\inprod{\hat{f}_n^0-\hat{f}_n}{\hat{f}_n^0-\tilde{f}_n(\hat{f}_n^0)}_n-\norm{\pi_n\tilde{f}_n(\hat{f}_n^0)-\hat{f}_n^0}_n^2+o_p(n^{-1})\\
			=	& 2\inprod{\hat{f}_n^0-\hat{f}_n}{\hat{f}_n^0-\tilde{f}_n(\hat{f}_n^0)}_n-\norm{\hat{f}_n^0-\tilde{f}_n(\hat{f}_n^0)}_n^2+o_p(n^{-1}).
		\end{align*}
		For any $f\in\{f\in\mcal{F}_n^0:\norm{f-f_0}\leq\rho_n\}$, under $H_0$, we have
		\begin{align*}
			0	& =\phi(f)-\phi(f_0)=\phi_{f_0}'[f-f_0]+\mcal{O}(u_n\norm{f-f_0}^\omega)\\
			& =\inprod{f-f_0}{v^*}+o(n^{-1/2}),
		\end{align*}
		which implies that $\inprod{\hat{f}_n^0-\hat{f}_n}{v^*}_n=o_p(n^{-1/2})$. Moreover, note that $\hat{f}_n^0-\tilde{f}_n(\hat{f}_n^0)=\tilde{f}_n(\hat{f}_n)-\hat{f}_n$, we have
		\begin{align*}
			\inprod{\hat{f}_n^0-\hat{f}_n}{\hat{f}_n^0-\tilde{f}_n(\hat{f}_n^0)}_n	& =\inprod{\hat{f}_n^0-\hat{f}_n}{\tilde{f}_n(\hat{f}_n)-\hat{f}_n}_n\\
			& =\inprod{\hat{f}_n^0-\tilde{f}_n(\hat{f}_n)+\tilde{f}_n(\hat{f}_n)-\hat{f}_n}{\tilde{f}_n(\hat{f}_n)-\hat{f}_n}_n\\
			& =\inprod{\hat{f}_n^0-\tilde{f}_n(\hat{f}_n)}{\tilde{f}_n(\hat{f}_n)-\hat{f}_n}_n-\inprod{\tilde{f}_n(\hat{f}_n)-\hat{f}_n}{\tilde{f}_n(\hat{f}_n)-\hat{f}_n}_n\\
			& =\inprod{\hat{f}_n^0-\hat{f}_n-\delta_nv^*/\norm{v^*}^2}{\delta_nv^*/\norm{v^*}^2}_n+\norm{\hat{f}_n-\pi_n\tilde{f}_n(\hat{f}_n)}_n^2\\
			& =\delta_n\norm{v^*}^{-2}\inprod{\hat{f}_n^0-\hat{f}_n}{v^*}_n+\norm{\hat{f}_n-\pi_n\tilde{f}_n(\hat{f}_n)}_n^2-\delta_n^2\norm{v^*}_n^2/\norm{v^*}^4\\
			& =\delta_n\norm{v^*}^{-2}\inprod{\hat{f}_n^0-\hat{f}_n}{v^*}+\norm{\hat{f}_n-\pi_n\tilde{f}_n(\hat{f}_n)}_n^2\\
			&	\quad-\delta_n^2\norm{v^*}^{-2}+o_p(\delta_nn^{-1/2})+o_p(\delta_n^2),
		\end{align*}
		where the last equality follows from Lemma \ref{Lm: inprod} and $\norm{v^*}_n^2=\norm{v^*}^2+o_p(1)$ by the weak law of large numbers. Now, since $\delta_n=-\inprod{\hat{f}_n-f_0}{v^*}$, 
		\begin{align*}
			\delta_n\norm{v^*}^{-2}\inprod{\hat{f}_n^0-\hat{f}_n}{v^*}-\delta_n^2\norm{v^*}^{-2}	& = \delta_n\norm{v^*}^{-2}\left(\inprod{\hat{f}_n^0-\hat{f}_n}{v^*}-\delta_n\right)\\
			& =\delta_n\norm{v^*}^{-2}\left(\inprod{\hat{f}_n^0-\hat{f}_n}{v^*}+\inprod{\hat{f}_n-f_0}{v^*}\right)\\
			& =\delta_n\norm{v^*}^{-2}\inprod{\hat{f}_n^0-f_0}{v^*}\\
			& =o_p(\delta_nn^{-1/2}),
		\end{align*}
		and then
		$$
		\inprod{\hat{f}_n^0-\hat{f}_n}{\hat{f}_n^0-\tilde{f}_n(\hat{f}_n^0)}_n=\norm{\hat{f}_n-\pi_n\tilde{f}_n(\hat{f}_n)}_n^2+o_p(n^{-1}).
		$$
		Therefore,
		\begin{equation}\label{Eq: NullLocalApprox}
			\mbb{Q}_n(\tilde{f}_n^0)-\mbb{Q}_n(\pi_n\tilde{f}_n(\hat{f}_n^0))=\norm{\hat{f}_n-\pi_n\tilde{f}_n(\hat{f}_n)}_n^2+o_p(n^{-1}).
		\end{equation}
		From (\ref{Eq: AltLocalApprox}) and (\ref{Eq: NullLocalApprox}), we obtain
		\begin{align*}
			\mbb{Q}_n(\hat{f}_n)-\mbb{Q}_n(\hat{f}_n^0)	& \leq\inf_{f\in\mcal{F}_n}\mbb{Q}_n(f)-\mbb{Q}_n(\hat{f}_n^0)+o_p(n^{-1})\\
			& \leq\mbb{Q}_n(\pi_n\tilde{f}_n(\hat{f}_n^0))-\mbb{Q}_n(\hat{f}_n^0)+o_p(n^{-1})\\
			& =-\norm{\hat{f}_n-\tilde{f}_n(\hat{f}_n)}_n^2+o_p(n^{-1})\\
			& =\mbb{Q}_n(\hat{f}_n)-\mbb{Q}_n(\pi_n\tilde{f}_n(\hat{f}_n))+o_p(n^{-1}).
		\end{align*}
		which proves the desired results.
	\end{proof}
	
	The problem here is that $\pi_n\tilde{f}_n(\hat{f}_n)$ may not be in $\mcal{F}_n^0$, so we need to construct an approximate minimizer having similar properties. Set
	$$
	f_n^*[t]=\pi_n\tilde{f}_n(\hat{f}_n)+t\frac{v^*}{\norm{v^*}^2},\quad t\in\mbb{R}.
	$$
	Note that, for any $|t|\lesssim n^{-1/2}$ and a enough large $n$, 
	\begin{align*}
		\norm{\tilde{f}_n(\hat{f}_n)-f_0}	& \leq\norm{\hat{f}_n-f_0}+\abs{\delta_n}\norm{\frac{v^*}{\norm{v^*}^2}}\\
		& \leq\norm{\hat{f}_n-f_0}+\norm{\hat{f}_n-f_0}\norm{v^*}\frac{\norm{v^*}}{\norm{v^*}^2}\\
		& =\mcal{O}_p(\rho_n)\\
		\norm{f_n^*[t]-f_0}	& =\norm{\pi_n\tilde{f}_n(\hat{f}_n)-f_0}+\abs{t}\norm{\frac{v^*}{\norm{v^*}}}\\
		& =\mcal{O}_p(\rho_n)+\mcal{O}_p(n^{-1/2})\\
		& =\mcal{O}_p(\rho_n).
	\end{align*}
	Under $H_0$ and $u_n\rho_n^\omega=o(n^{-1/2})$, we have
	\begin{align*}
		\phi(\pi_nf_n^*[t])	& =\phi(\pi_nf_n^*[t])-\phi(f_0)\\	
		& =\inprod{\pi_nf_n^*[t]-f_0}{v^*}+r(t)\\
		& =\inprod{\pi_nf_n^*[t]-f_n^*[t]}{v^*}+\inprod{\pi_n\tilde{f}_n(\hat{f}_n)-f_0}{v^*}+t+r(t).	\numberthis\label{Eq: 17inNote}	
	\end{align*}
	By using the $C_r$-inequality, there exists a certain constant $C_\omega>0$ such that
	$$
	\norm{\pi_nf_n^*[t]-f_0}^\omega\leq C_\omega\left(\norm{\pi_nf_n^*[t]-f_n^*[t]}^\omega+\norm{\pi_nf_n^*-f_0}^\omega+|t|^\omega\right).
	$$
	Let $\Delta_n=\sup_{\substack{f\in\mcal{F}_n \\ \norm{f-f_0}\leq\rho_n}}\norm{\tilde{f}_n(f)-\pi_n\tilde{f}_n(f)}$, then
	\begin{align*}
		\tilde{t}	& =2\norm{v^*}\Delta_n+u_nC_\omega\left(\Delta_n^\omega+\norm{\pi_nf_n^*-f_0}^\omega+n^{-\omega/2}\norm{v^*}^\omega\right)\\
		& =\mcal{O}_p(\rho_n^{-1}\delta_n^2)+\mcal{O}_p(u_n(\rho_n^{-1}\delta_n^2)^\omega)+\mcal{O}_p(u_n\rho_n^\omega)+\mcal{O}_p(u_nn^{-\omega/2})\\
		& =o_p(n^{-1/2}).
	\end{align*}
	Therefore, by (\ref{Eq: 17inNote}),
	\begin{align*}
		\phi(\pi_nf_n^*[\tilde{t}])	& \geq\tilde{t}-\left(\abs{\inprod{\pi_nf_n^*[\tilde{t}]-f_n^*[\tilde{t}]}{v^*}}+\abs{\inprod{\pi_n\tilde{f}_n(\hat{f}_n)-f_0}{v^*}}+r(\tilde{t}))\right)\geq0\\
		\phi(\pi_nf_n^*[-\tilde{t}])	& \leq-\tilde{t}+\left(\abs{\inprod{\pi_nf_n^*[\tilde{t}]-f_n^*[\tilde{t}]}{v^*}}+\abs{\inprod{\pi_n\tilde{f}_n(\hat{f}_n)-f_0}{v^*}}+r(\tilde{t}))\right)\leq0.
	\end{align*}
	Furthermore, by continuity of $\phi(\pi_nf_n^*[t])$ and the mean value theorem, there exists some $t^*\in\mbb{R}$ such that $\phi(\pi_nf_n^*[t^*])=0$ and $|t^*|=o_p(n^{-1/2})$. This implies that $\pi_nf_n^*[t^*]\in\mcal{F}_n^0$. Clearly, $\norm{\pi_nf_n^*[t^*]-f_0}\leq\rho_n$ for a large $n$.

	\begin{lemma}
		Under (C1)-(C3), we have
		$$
		\mbb{Q}_n(\pi_nf_n^*)-\mbb{Q}_n(\pi_nf_n^*[t^*])=o_p(n^{-1}).
		$$
	\end{lemma}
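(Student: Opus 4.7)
The plan is to expand the loss difference via the identity $\mbb{Q}_n(f) = n^{-1}\sum_{i=1}^n\epsilon_i^2 - 2n^{-1}\sum_{i=1}^n\epsilon_i(f-f_0)(\mbf{X}_i) + \norm{f-f_0}_n^2$ and show that each resulting piece is $o_p(n^{-1})$. Setting $a = \pi_n f_n^*$ and $b = \pi_n f_n^*[t^*] - a$, this identity yields
\[
\mbb{Q}_n(a) - \mbb{Q}_n(a+b) = 2n^{-1}\sum_{i=1}^n\epsilon_i b(\mbf{X}_i) - 2\inprod{a-f_0}{b}_n - \norm{b}_n^2.
\]
Taking $\pi_n$ to act linearly, as in the proof of the main theorem of Section 3, gives $b = t^*\pi_n(v^*/\norm{v^*}^2)$, so the problem reduces to bounding three scalar-valued quantities driven by the scalar $t^*$ with $|t^*|=o_p(n^{-1/2})$.

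The squared-bias and noise terms are straightforward consequences of this size bound. By the law of large numbers, $\norm{b}_n^2 = (t^*)^2\norm{v^*}^{-4}\norm{\pi_n v^*}_n^2 = \mcal{O}_p((t^*)^2) = o_p(n^{-1})$. For the noise term, decomposing $\pi_n v^* = v^* + (\pi_n v^* - v^*)$ and applying the central limit theorem to $n^{-1}\sum\epsilon_i v^*(\mbf{X}_i) = \mcal{O}_p(n^{-1/2})$, while controlling the residual via Chebyshev together with $\norm{\pi_n v^* - v^*}_2 = \mcal{O}(\rho_n)$, yields $n^{-1}\sum\epsilon_i b(\mbf{X}_i) = |t^*|\,\mcal{O}_p(n^{-1/2}) = o_p(n^{-1})$.

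The main obstacle is the cross term $\inprod{a-f_0}{b}_n = t^*\norm{v^*}^{-2}\inprod{a-f_0}{\pi_n v^*}_n$. The crude Cauchy-Schwarz bound $|t^*|\cdot\norm{a-f_0}_n\cdot\norm{v^*}_n = \mcal{O}_p(\rho_n n^{-1/2})$ is strictly slower than $n^{-1}$, so one must produce cancellation. The plan is to decompose
\[
a - f_0 = \bigl(\pi_n\tilde{f}_n(\hat{f}_n) - \tilde{f}_n(\hat{f}_n)\bigr) + (1-\delta_n)(\hat{f}_n - f_0) + \delta_n u^*,
\]
and examine $\inprod{a-f_0}{v^*}_n$ term by term. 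The first contribution is $o_p(n^{-1/2})$ by Cauchy-Schwarz combined with (C3); the second equals $(1-\delta_n)(-\delta_n + o_p(n^{-1/2})) = -\delta_n + o_p(n^{-1/2})$ by Theorem \ref{Thm: main result} together with the sample/population inner-product lemma; and the third equals $\delta_n\inprod{u^*}{v^*}_n = \delta_n + o_p(n^{-1/2})$ by the law of large numbers applied to $\norm{v^*}_n^2$. The leading $\pm\delta_n$ contributions cancel, giving $\inprod{a-f_0}{v^*}_n = o_p(n^{-1/2})$. Replacing $v^*$ by $\pi_n v^*$ costs only a Cauchy-Schwarz term of order $\rho_n^2$, so $\inprod{a-f_0}{b}_n = o_p(n^{-1/2})\cdot o_p(n^{-1/2}) + o_p(\rho_n^2 n^{-1/2}) = o_p(n^{-1})$, using $\rho_n^2 = o(n^{-1/2})$ from (C2).

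Assembling the three bounds completes the proof. The decisive step is the leading-order cancellation between $(1-\delta_n)\inprod{\hat{f}_n - f_0}{v^*}_n$ and $\delta_n\inprod{u^*}{v^*}_n$ in the expansion of $\inprod{a-f_0}{v^*}_n$, which is precisely the asymptotic linearization supplied by Theorem \ref{Thm: main result}; all remaining estimates are routine applications of (C3), the law of large numbers, and Chebyshev's inequality.
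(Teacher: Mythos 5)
Your proof uses the same $\mbb{Q}_n$-expansion into noise, cross, and squared-bias terms as the paper, and the sizing of the squared-bias and noise terms matches the paper's. The one genuine difference is in the cross term: the paper writes $b = \pi_n f_n^*[t^*] - \pi_n\tilde{f}_n(\hat{f}_n) = t^*v^*/\norm{v^*}^2 + (\pi_n f_n^*[t^*]-f_n^*[t^*])$, controls the approximation residual via the (C3)-type bound $\norm{\pi_n f_n^*[t^*]-f_n^*[t^*]}_n = \mcal{O}_p(\rho_n^{-1}\delta_n^2)$, and then bounds each piece by Cauchy--Schwarz, with the decisive input being the Theorem~\ref{Thm: main result} rate $\inprod{\hat{f}_n-f_0}{v^*}_n = \mcal{O}_p(n^{-1/2})$ multiplied against $|t^*| = o_p(n^{-1/2})$. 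You instead show the sharper statement $\inprod{a-f_0}{v^*}_n = o_p(n^{-1/2})$ via an explicit cancellation between $\inprod{\hat{f}_n-f_0}{v^*}_n \approx -\delta_n$ (from the calibration $\delta_n = -\inprod{\hat{f}_n - f_0}{v^*}$ and Lemma~\ref{Lm: inprod}) and $\delta_n\inprod{u^*}{v^*}_n \approx \delta_n$ (from the law of large numbers). Both arguments are valid, but your cancellation is more than is needed: the crude product $\mcal{O}_p(n^{-1/2})\cdot o_p(n^{-1/2}) = o_p(n^{-1})$ already suffices, which is the route the paper takes. Two imprecisions to flag. First, your assertion $b = t^*\pi_n(v^*/\norm{v^*}^2)$ tacitly treats $\pi_n$ as a linear idempotent, which is not assumed (the sieve $\mcal{F}_n$ is not a linear space); you should work from $b = t^*v^*/\norm{v^*}^2 + (\pi_n f_n^*[t^*]-f_n^*[t^*])$ and bound the residual via the empirical-$L_2$ and noise-weighted approximation conditions, as the paper does. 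Second, the bound $\norm{\pi_n v^* - v^*} = \mcal{O}(\rho_n)$ that you invoke for the noise term is not part of (C1)--(C3) in the general Section~2 framework; the analogous control there is the second clause of (C3), $n^{-1}\sum_i\epsilon_i(\pi_n\tilde{f}_n(f)-\tilde{f}_n(f))(\mbf{X}_i) = o_p(\delta_n^2)$, which is what the paper applies in its final step. Neither point changes the conclusion, but both should be repaired to match the stated hypotheses.
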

	
	\begin{proof}
		Note that
		\begin{align*}
			& \mbb{Q}_n(\pi_n\tilde{f}_n(\hat{f}_n))-\mbb{Q}_n(\pi_nf_n^*[t^*])\\
			=	& -2n^{-1}\sum_{i=1}^n\epsilon_i\left(\pi_n\tilde{f}_n(\hat{f}_n)(\mbf{X}_i)-\pi_nf_n^*[t^*](\mbf{X}_i)\right)-\norm{\pi_nf_n^*[t^*]-f_0}_n^2+\norm{\pi_n\tilde{f}_n(\hat{f}_n)-f_0}_n^2\\
			=	& -2n^{-1}\sum_{i=1}^n\epsilon_i\left(\pi_n\tilde{f}_n(\hat{f}_n)(\mbf{X}_i)-\pi_nf_n^*[t^*](\mbf{X}_i)\right)-2\inprod{\pi_n\tilde{f}_n(\hat{f}_n)-f_0}{\pi_nf_n^*[t^*]-\tilde{f}_n(\hat{f}_n)}_n\\
			& -\norm{\pi_nf_n^*[t^*]-\pi_n\tilde{f}_n(\hat{f}_n)}_n^2.
		\end{align*}
		Moreover, since
		\begin{align*}
			\norm{\pi_nf_n^*[t^*]-\pi_n\tilde{f}_n(\hat{f}_n)}_n	& \leq\norm{\pi_nf_n^*[t^*]-f_n^*[t^*]}_n+\norm{f_n^*[t^*]-\pi_n\tilde{f}_n(\hat{f}_n)}_n\\
			& \leq\norm{\pi_nf_n^*[t^*]-f_n^*[t^*]}_n+|t^*|\frac{\norm{v^*}_n}{\norm{v^*}^2}\\
			& =\mcal{O}_p(\rho_n^{-1}\delta_n^2)+o_p(n^{-1/2})=o_p(n^{-1/2}),
		\end{align*}
		it follows from Cauchy-Schwarz inequality that
		\begin{align*}
			\inprod{\pi_n\tilde{f}_n(\hat{f}_n)-f_0}{\pi_nf_n^*[t^*]-\tilde{f}_n(\hat{f}_n)}_n	& =\inprod{\pi_n\tilde{f}_n(\hat{f}_n)-\tilde{f}_n(\hat{f}_n)}{\pi_nf_n^*[t^*]-\tilde{f}_n(\hat{f}_n)}_n\\
			& \quad+\inprod{\tilde{f}_n(\hat{f}_n)-f_0}{\pi_nf_n^*[t^*]-\tilde{f}_n(\hat{f}_n)}_n\\
			& \leq\norm{\pi_n\tilde{f}_n(\hat{f}_n)-\tilde{f}_n(\hat{f}_n)}_n\norm{\pi_nf_n^*[t^*]-\pi_n\tilde{f}_n(\hat{f}_n)}_n\\
			& \quad+\inprod{\tilde{f}_n(\hat{f}_n)-f_0}{\pi_nf_n^*[t^*]-\tilde{f}_n(\hat{f}_n)}_n\\
			& =o_p(n^{-1})+\inprod{\tilde{f}_n(\hat{f}_n)-f_0}{\pi_nf_n^*[t^*]-\tilde{f}_n(\hat{f}_n)}_n.
		\end{align*}
		On the other hand, note that
		\begin{align*}
			\inprod{\tilde{f}_n(\hat{f}_n)-f_0}{\pi_nf_n^*[t^*]-\pi_n\tilde{f}_n(\hat{f}_n)}_n	&  =\inprod{\tilde{f}_n(\hat{f}_n)-\hat{f}_n}{\pi_nf_n^*[t^*]-\pi_n\tilde{f}_n(\hat{f}_n)}_n+\inprod{\hat{f}_n-f_0}{\pi_nf_n^*[t^*]-f_n^*[t^*]}_n\\
			& \quad+\inprod{\hat{f}_n-f_0}{f_n^*[t^*]-\pi_n\tilde{f}_n(\hat{f}_n)}_n\\
			& \leq\norm{\tilde{f}_n(\hat{f}_n)-\hat{f}_n}_n\norm{\pi_nf_n^*[t^*]-\pi_n\tilde{f}_n(\hat{f}_n)}_n\\
			& \quad+\norm{\hat{f}_n-f_0}_n\norm{\pi_nf_n^*[t^*]-f_n^*[t^*]}_n+\inprod{\hat{f}_n-f_0}{f_n^*[t^*]-\pi_n\tilde{f}_n(\hat{f}_n)}_n\\
			& \leq\mcal{O}_p(\delta_n)o_p(n^{-1/2})+\mcal{O}_p(\delta_n^2)+\inprod{\hat{f}_n-f_0}{f_n^*[t^*]-\pi_n\tilde{f}_n(\hat{f}_n)}_n\\
			& =o_p(n^{-1})+\inprod{\hat{f}_n-f_0}{f_n^*[t^*]-\pi_n\tilde{f}_n(\hat{f}_n)}_n\\
			& =o_p(n^{-1})+t^*\inprod{\hat{f}_n-f_0}{\frac{v^*}{\norm{v^*}}}_n\\
			& =o_p(n^{-1})+o_p(n^{-1/2})\mcal{O}_p(n^{-1/2})=o_p(n^{-1}),
		\end{align*}
		which implies that
		\begin{align*}
			& \inprod{\tilde{f}_n(\hat{f}_n)-f_0}{\pi_nf_n^*[t^*]-\tilde{f}_n(\hat{f}_n)}_n\\ 
			=	&	\inprod{\tilde{f}_n(\hat{f}_n)-f_0}{\pi_nf_n^*[t^*]-\pi_n\tilde{f}_n(\hat{f}_n)}_n+\inprod{\tilde{f}_n(\hat{f}_n)-f_0}{\pi_n\tilde{f}_n(\hat{f}_n)-\tilde{f}_n(\hat{f}_n)}_n\\
			\leq & o_p(n^{-1})+\norm{\tilde{f}_n(\hat{f}_n)-f_0}_n\norm{\pi_n\tilde{f}_n(\hat{f}_n)-\tilde{f}_n(\hat{f}_n)}_n\\
			=	& o_p(n^{-1}).
		\end{align*}
		From (C3), we have
		\begin{align*}
			2n^{-1}\sum_{i=1}^n\epsilon_i\left(\pi_n\tilde{f}_n(\hat{f}_n)(\mbf{X}_i)-\pi_nf_n^*[t^*](\mbf{X}_i)\right)	& =2n^{-1}\sum_{i=1}^n\epsilon_i\left(\pi_n\tilde{f}_n(\hat{f}_n)(\mbf{X}_i)-f_n^*[t^*](\mbf{X}_i)\right)\\
			& \quad+2n^{-1}\sum_{i=1}^n\epsilon_i\left(f_n^*[t^*](\mbf{X}_i)-\pi_nf_n^*[t^*](\mbf{X}_i)\right)\\
			& =-2\norm{v^*}^{-2}n^{-1}t^*\sum_{i=1}^n\epsilon_iv^*(\mbf{X}_i)+o_p(n^{-1})\\
			& =o_p(n^{-1/2})\mcal{O}_p(n^{-1/2})+o_p(n^{-1})=o_p(n^{-1}),
		\end{align*}
		which proves the desired result.
	\end{proof}

	We now ready to finish the proof of Theorem \ref{Thm: SQLR}.
	
	\begin{proof}
		Note that
		\begin{align*}
			\frac{n}{\sigma^2}\norm{\hat{f}_n-\tilde{f}_n(\hat{f}_n)}_n^2	& =\frac{n}{\sigma^2}\abs{\inprod{\hat{f}_n-f_0}{v^*}}^2\frac{\norm{v^*}_n^2}{\norm{v^*}^4}\\
			& =\abs{n^{-1/2}\sigma^{-1}\norm{v^*}^{-1}\sum_{i=1}^n\epsilon_iv^*(\mbf{X}_i)+o_p(1)}^2\frac{\norm{v^*}_n^2}{\norm{v^*}^2}.
		\end{align*}
		It then follows from Theorem \ref{Thm: main result}, the smoothness assumption \ref{Eq: Smoothness of Functional}, the classical central limit theorem and the Slutsky Theorem that 
		\begin{align*}
			\frac{n}{\sigma^2}\norm{\hat{f}_n-\tilde{f}_n(\hat{f}_n)}_n^2\xrightarrow{d}\chi_1^2
		\end{align*}
		Based on the previous lemmas, we have
		\begin{align*}
			\mbb{Q}_n(\hat{f}_n^0)-\mbb{Q}_n(\hat{f}_n)	& \leq \mbb{Q}_n(\pi_nf_n^*[t^*])-\mbb{Q}_n(\hat{f}_n)+o_p(n^{-1})\\
			& =\mbb{Q}_n(\pi_n\tilde{f}_n(\hat{f}_n))-\mbb{Q}_n(\hat{f}_n)+o_p(n^{-1})\\
			& =\norm{\tilde{f}_n(\hat{f}_n)-\hat{f}_n}_n^2+o_p(n^{-1})\\
			\mbb{Q}_n(\hat{f}_n^0)-\mbb{Q}_n(\hat{f}_n)	& \geq\mbb{Q}_n(\hat{f}_n^0)-\mbb{Q}_n(\pi_n\tilde{f}_n(\hat{f}_n^0))-o_p(n^{-1})\\
			& =\norm{\tilde{f}_n(\hat{f}_n)-\hat{f}_n}_n^2+o_p(n^{-1}),
		\end{align*}
		so that the desired result follows.
	\end{proof}

	\newpage
	\subsection*{Rate of Convergence of Approximate Sieve Extremum Estimators}
	We start with a general result on the rate of convergence of sieve estimators under the setup of nonparametric regression. The notations in this section are inherited from section 2 in the main text.
	\begin{lemma}\label{Lm: RoC Lemma 1}
		For every $n$ and every $\delta>8\norm{f_0-\pi_nf_0}$, we have
		$$
		\sup_{\substack{f\in\mcal{F}_n \\ \delta/2<\norm{f-\pi_nf_0}\leq\delta}}\mbb{Q}(\pi_nf_0)-\mbb{Q}(f)\lesssim-\delta^2.
		$$
	\end{lemma}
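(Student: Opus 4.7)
The plan is to unpack the definition of the population risk and then reduce everything to the triangle inequality in the $L_2(P)$ norm. Under the nonparametric regression setup of Section 2, $Y = f_0(X) + \epsilon$ with $\mathbb{E}[\epsilon]=0$ and $\epsilon$ independent of $X$, so for any $f \in \mathcal{F}$,
$$
\mathbb{Q}(f) := \mathbb{E}[\mathbb{Q}_n(f)] = \sigma^2 + \norm{f - f_0}^2.
$$
Hence for any $f$ in the supremum set,
$$
\mathbb{Q}(\pi_n f_0) - \mathbb{Q}(f) = \norm{\pi_n f_0 - f_0}^2 - \norm{f - f_0}^2,
$$
and the task reduces to producing a quadratic lower bound for $\norm{f - f_0}^2$ and a matching upper bound for $\norm{\pi_n f_0 - f_0}^2$, both of order $\delta^2$.

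First, I would bound $\norm{f - f_0}$ from below. By the reverse triangle inequality and the constraint $\norm{f - \pi_n f_0} > \delta/2$,
$$
\norm{f - f_0} \geq \norm{f - \pi_n f_0} - \norm{\pi_n f_0 - f_0} > \tfrac{\delta}{2} - \norm{\pi_n f_0 - f_0}.
$$
The hypothesis $\delta > 8\norm{f_0 - \pi_n f_0}$ gives $\norm{\pi_n f_0 - f_0} < \delta/8$, so $\norm{f - f_0} > 3\delta/8$ and therefore $\norm{f - f_0}^2 > 9\delta^2/64$. Second, the same bound on $\norm{\pi_n f_0 - f_0}$ immediately yields $\norm{\pi_n f_0 - f_0}^2 < \delta^2/64$. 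Combining these,
$$
\mathbb{Q}(\pi_n f_0) - \mathbb{Q}(f) < \frac{\delta^2}{64} - \frac{9\delta^2}{64} = -\frac{\delta^2}{8},
$$
which is the desired inequality (with explicit constant $1/8$), uniformly over $f$ in the indicated shell. Taking the supremum over $f$ preserves the inequality.

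There is no real obstacle here: the argument is purely deterministic and uses only that $\mathbb{Q}$ is a strongly convex quadratic in $\norm{\cdot - f_0}$, together with the factor $8$ in $\delta > 8\norm{f_0-\pi_n f_0}$, which was chosen precisely to ensure the approximation error $\norm{f_0-\pi_n f_0}$ is negligible compared with the shell radius $\delta$. The only place one must be mildly careful is distinguishing the center of the shell ($\pi_n f_0$) from $f_0$; everything else is a one-line triangle-inequality computation. This lemma will then feed into the empirical-process peeling argument used to control $\mathbb{Q}_n - \mathbb{Q}$ on the same shells and hence deliver the rate of convergence $\rho_n$ for the approximate sieve extremum estimator.
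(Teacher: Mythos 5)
Your proof is correct and takes essentially the same route as the paper's: both reduce to $\mathbb{Q}(\pi_n f_0)-\mathbb{Q}(f)=\norm{\pi_n f_0-f_0}^2-\norm{f-f_0}^2$ and then use the triangle inequality together with the margin $\delta > 8\norm{f_0-\pi_n f_0}$ to show the shell radius dominates the approximation error. Your version is slightly more direct (reverse triangle inequality followed by squaring, yielding constant $1/8$), while the paper first derives $\tfrac14\norm{f-\pi_n f_0}^2\le\norm{f-f_0}^2-\norm{\pi_n f_0-f_0}^2$ and then plugs in $\norm{f-\pi_n f_0}>\delta/2$ to get $1/16$; the difference is cosmetic.
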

	
	\begin{proof}
		First, note that
		\begin{align*}
			\mbb{Q}(\pi_nf_0)-\mbb{Q}(f)	& =\mbb{E}\left[(Y-\pi_nf_0(\mbf{X}))^2\right]-\mbb{E}\left[(Y-f(\mbf{X}))^2\right]\\
			& =\mbb{E}\left[\left(\pi_nf_0(\mbf{X})-f_0(\mbf{X})\right)^2\right]-\mbb{E}\left[(f(\mbf{X})-f_0(\mbf{X}))^2\right]\\
			& =\norm{\pi_nf_0-f_0}^2-\norm{f-f_0}^2.
		\end{align*}
		The triangle inequality gives
		\begin{align*}
			\norm{f-\pi_nf_0} & \leq\norm{f-f_0}+\norm{\pi_nf_0-f_0}\\
			& =\norm{f-f_0}-\norm{\pi_nf_0-f_0}+2\norm{\pi_nf_0-f_0}.
		\end{align*}
		Therefore, we have
		$$
		\norm{f-f_0}-\norm{\pi_nf_0-f_0}\geq\norm{f-\pi_nf_0}-2\norm{\pi_nf_0-f_0}
		$$
		so that for every $f$ satisfying $\norm{f-\pi_nf_0}^2\geq16\norm{\pi_nf_0-f_0}^2$, i.e. $\norm{f-\pi_nf_0}\geq4\norm{\pi_nf_0-f_0}$, we have
		\begin{align*}
			\norm{f-f_0}-\norm{\pi_nf_0-f_0}	& \geq\norm{f-\pi_nf_0}-\frac{1}{2}\norm{\pi_nf_0-f_0}\\
			& =\frac{1}{2}\norm{f-\pi_nf_0}\geq0,\numberthis\label{Eq: to be squared}
		\end{align*}
		which implies that $\norm{f-f_0}\geq\norm{\pi_nf_0-f_0}$. By squaring both sides of (\ref{Eq: to be squared}), we obtain
		\begin{align*}
			\frac{1}{4}\norm{f-\pi_nf_0}^2	& \leq\norm{f-f_0}^2+\norm{\pi_nf_0-f_0}^2-2\norm{f-f_0}\cdot\norm{\pi_nf_0-f_0}\\
			& \leq\norm{f-f_0}^2+\norm{\pi_nf_0-f_0}^2-2\norm{\pi_nf_0-f_0}^2\\
			& =\norm{f-f_0}^2-\norm{\pi_nf_0-f_0}^2.
		\end{align*}
		Hence, for $\delta>8\norm{\pi_nf_0-f_0}$, we have
		\begin{align*}
			& \sup_{\substack{f\in\mcal{F}_n \\ \delta/2<\norm{f-f_0}\leq\delta}}\mbb{Q}(\pi_nf_0)-\mbb{Q}(f)\\
			\leq & \sup_{\substack{f\in\mcal{F}_n \\ \norm{f-\pi_nf_0}>\delta/2}}\norm{\pi_nf_0-f_0}^2-\norm{f-f_0}^2\\
			\leq & \sup_{\substack{f\in\mcal{F}_n \\ \norm{f-\pi_nf_0}>\delta/2}}\left(-\frac{1}{4}\norm{f-\pi_nf_0}^2\right)\\
			& \lesssim-\delta^2.
		\end{align*}
	\end{proof}

	\begin{lemma}\label{Lm: RoC Lemma 2}
		For every sufficiently large $n$ and $\delta>8\norm{f_0-\pi_nf_0}$, under (C1)
		$$
		\mbb{E}\left[\sup_{\substack{f\in\mcal{F}_n \\ \delta/2<\norm{f-\pi_nf_0}\leq\delta}}\sqrt{n}\abs{(\mbb{Q}_n-Q)(f)-(\mbb{Q}_n-Q)(\pi_nf_0)}\right]\lesssim\int_0^\delta H^{1/2}(u)\mrm{d}u+1
		$$
	\end{lemma}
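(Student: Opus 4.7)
The plan is to decompose the process $(\mbb{Q}_n-Q)(f)-(\mbb{Q}_n-Q)(\pi_nf_0)$ into three pieces, each of which is controllable by standard empirical process tools under (C1). Writing $Y_i=f_0(\mbf{X}_i)+\epsilon_i$ and expanding the squares yields
$$
(\mbb{Q}_n-Q)(f)-(\mbb{Q}_n-Q)(\pi_nf_0) = -\frac{2}{n}\sum_{i=1}^n\epsilon_i(f-\pi_nf_0)(\mbf{X}_i) + (\mbb{P}_n-P)(f-f_0)^2 - (\mbb{P}_n-P)(\pi_nf_0-f_0)^2.
$$
After multiplying by $\sqrt{n}$ and taking the supremum over the shell $\mcal{S}_\delta=\{f\in\mcal{F}_n:\delta/2<\norm{f-\pi_nf_0}\leq\delta\}$, I need to bound the expected suprema of a multiplier empirical process, a centered empirical process of squared differences, and a single deterministic fluctuation.

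For the multiplier term I would use a symmetrization inequality tailored to the $\norm{\epsilon}_{p,1}<\infty$ setting (as in \citet{han2019convergence}), then apply Dudley's chaining. The $L_2(\mbb{P}_n)$-radius of the class $\{f-\pi_nf_0:f\in\mcal{S}_\delta\}$ is of order $\delta$ with high probability by the uniform empirical-$L_2$-norm bound of Lemma 5.4 in \citet{geer2000empirical} (already invoked in the paper), and the entropy hypothesis (C1) delivers a Rademacher complexity bounded by a constant times $\int_0^\delta H^{1/2}(u)\,\mrm{d}u$. For the quadratic term, uniform boundedness of $\mcal{F}_n$ makes $g\mapsto g^2$ Lipschitz on the image of $\mcal{F}_n-f_0$, so symmetrization followed by the contraction principle reduces the Rademacher complexity of $\{(f-f_0)^2:f\in\mcal{S}_\delta\}$ to a constant multiple of that of $\{f-\pi_nf_0:f\in\mcal{S}_\delta\}$, which again gives the same entropy integral.

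The single-point term $\sqrt{n}\abs{(\mbb{P}_n-P)(\pi_nf_0-f_0)^2}$ is bounded in expectation via Cauchy-Schwarz by a constant times $\norm{\pi_nf_0-f_0}$, which is at most $\delta/8$ by the hypothesis $\delta>8\norm{f_0-\pi_nf_0}$; hence it contributes an $\mcal{O}(\delta)$ term, absorbed into the $+1$ of the asserted bound. The additive $+1$ also absorbs the constant coming from the tail of the chaining (the largest-jump contribution controlled by an envelope over an arbitrarily fine partition, of order $n^{-1/2}$ times a constant after multiplication by $\sqrt{n}$). Combining the three estimates gives the claimed $\int_0^\delta H^{1/2}(u)\,\mrm{d}u+1$ bound.

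The main obstacle will be the multiplier piece: because the errors are only assumed to satisfy the Lorentz-type condition $\norm{\epsilon}_{p,1}<\infty$, standard sub-Gaussian maximal inequalities do not apply directly, and I must use the refined multiplier-process bounds designed for exactly this integrability assumption so that the chaining integrand remains the deterministic $H^{1/2}(u)$ rather than acquiring random factors from the $\epsilon_i$. The remaining ingredients — passing between $L^2(P)$ and $L^2(\mbb{P}_n)$ radii on the shell, the contraction step for the quadratic piece, and the variance computation for the single-point term — are standard once (C1) and the uniform boundedness of $\mcal{F}_n$ are in hand.
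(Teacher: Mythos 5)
Your proposal is correct and follows essentially the same route as the paper's proof: identical three-term decomposition into a multiplier piece, a centered quadratic piece, and a fixed-function fluctuation, with symmetrization plus contraction reducing the quadratic piece to a Rademacher process, a multiplier inequality (the paper cites Lemma 2.9.1 of van der Vaart and Wellner, which serves the same role as the Han--Wellner tools you cite) handling the $\norm{\epsilon}_{p,1}$ noise, and Dudley chaining under (C1) yielding $\int_0^\delta H^{1/2}(u)\,\mrm{d}u$. The only minor departure is your treatment of the single-point term, which you bound via a second-moment/Cauchy--Schwarz argument while the paper uses Hoeffding's inequality; both give an $o(1)$ contribution absorbed into the $+1$.
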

	
	\begin{proof}
		Note that
		\begin{align*}
			& \sqrt{n}\abs{(\mbb{Q}_n-Q)(\pi_nf_0)-(\mbb{Q}_n-Q)(f)}\\
			=	& \sqrt{n}\abs{\frac{1}{n}\sum_{i=1}^n(Y_i-\pi_nf_0(\mbf{X}_i))^2-\mbb{E}\left[(Y-\pi_nf_0(\mbf{X}))^2\right]-\frac{1}{n}\sum_{i=1}^n\left(Y_i-f(\mbf{X}_i)\right)^2+\mbb{E}\left[(Y-f(\mbf{X}))^2\right]}\\
			\leq & \abs{\frac{2}{\sqrt{n}}\sum_{i=1}^n\epsilon_i(f(\mbf{X}_i)-\pi_nf_0(\mbf{X}_i))}+\abs{\frac{1}{\sqrt{n}}\sum_{i=1}^n\left\{(f_0(\mbf{X}_i)-\pi_nf_0(\mbf{X}_i))^2-\mbb{E}\left[(f_0(\mbf{X})-\pi_nf_0(\mbf{X}))^2\right]\right\}}\\
			& \qquad+\abs{\frac{1}{\sqrt{n}}\sum_{i=1}^n\left\{(f(\mbf{X}_i)-f_0(\mbf{X}_i))^2-\mbb{E}\left[(f(\mbf{X})-f_0(\mbf{X}))^2\right]\right\}},
		\end{align*}
		we obtain
		\begin{align*}
			& \mbb{E}\left[\sup_{\substack{f\in\mcal{F}_n \\ \delta/2<\norm{f-\pi_nf_0}\leq\delta}}\sqrt{n}\abs{(\mbb{Q}_n-Q)(f)-(\mbb{Q}_n-Q)(\pi_nf_0)}\right]\\
			\leq & \mbb{E}\left[\sup_{\substack{f\in\mcal{F}_n \\ \delta/2<\norm{f-\pi_nf_0}\leq\delta}}\abs{\frac{2}{\sqrt{n}}\sum_{i=1}^n\epsilon_i(f(\mbf{X}_i)-\pi_nf_0(\mbf{X}_i))}\right]\\
			& \qquad+\mbb{E}\left[\sup_{\substack{f\in\mcal{F}_n \\ \delta/2<\norm{f-\pi_nf_0}\leq\delta}}\abs{\frac{1}{\sqrt{n}}\sum_{i=1}^n\left\{(f(\mbf{X}_i)-f_0(\mbf{X}_i))^2-\mbb{E}\left[(f(\mbf{X})-f_0(\mbf{X}))^2\right]\right\}}\right]\\
			& \qquad+\mbb{E}\left[\abs{\frac{1}{\sqrt{n}}\sum_{i=1}^n\left\{(f_0(\mbf{X}_i)-\pi_nf_0(\mbf{X}_i))^2-\mbb{E}\left[(f_0(\mbf{X})-\pi_nf_0(\mbf{X}))^2\right]\right\}}\right]\\
			:=	& P_1+P_2+P_3.
		\end{align*}
		We start by bounding $P_3$. As $0\leq(f_0(\mbf{X}_i)-\pi_nf_0(\mbf{X}_i))^2\leq\norm{f_0-\pi_nf_0}_\infty^2$, it follows from Hoeffding's inequality that
		\begin{align*}
			\mbb{P}\left(\abs{\frac{1}{\sqrt{n}}\sum_{i=1}^n\left\{(f_0(\mbf{X}_i)-\pi_nf_0(\mbf{X}_i))^2-\mbb{E}\left[(f_0(\mbf{X})-\pi_nf_0(\mbf{X}))^2\right]\right\}}\geq t\right)\leq2\exp\left\{-\frac{2t^2}{\norm{f_0-\pi_nf_0}_\infty^4}\right\},
		\end{align*}
		and hence
		\begin{align*}
			P_3= & \int_0^\infty\mbb{P}\left(\abs{\frac{1}{\sqrt{n}}\sum_{i=1}^n\left\{(f_0(\mbf{X}_i)-\pi_nf_0(\mbf{X}_i))^2-\mbb{E}\left[(f_0(\mbf{X})-\pi_nf_0(\mbf{X}))^2\right]\right\}}\geq t\right)\mrm{d}t\\
			\leq & \int_0^\infty 2\exp\left\{-\frac{2t^2}{\norm{f_0-\pi_nf_0}_\infty^4}\right\}\mrm{d}t\\
			=	& (\pi/2)^{1/2}\norm{f_0-\pi_nf_0}_\infty^2\to0,\quad\mrm{as }n\to\infty,
		\end{align*}
		which implies that for a sufficiently large $n$,
		$$
		P_3=\mbb{E}\left[\abs{\frac{1}{\sqrt{n}}\sum_{i=1}^n\left\{(f_0(\mbf{X}_i)-\pi_nf_0(\mbf{X}_i))^2-\mbb{E}\left[(f_0(\mbf{X})-\pi_nf_0(\mbf{X}))^2\right]\right\}}\right]\leq1.
		$$
		
		On the other hand, it follows from symmetrization inequality that
		\begin{align*}
			P_2 & \leq\mbb{E}\left[\sup_{\substack{f\in\mcal{F}_n \\ \norm{f-\pi_nf_0}\leq\delta}}\abs{\frac{1}{\sqrt{n}}\sum_{i=1}^n\left\{(f(\mbf{X}_i)-f_0(\mbf{X}_i))^2-\mbb{E}\left[(f(\mbf{X})-f_0(\mbf{X}))^2\right]\right\}}\right]\\
			& \lesssim\mbb{E}\left[\sup_{\substack{f\in\mcal{F}_n \\ \norm{f-\pi_nf_0}\leq\delta}}\abs{\frac{1}{\sqrt{n}}}\sum_{i=1}^n\xi_i(f(\mbf{X}_i)-f_0(\mbf{X}_i))^2\right],
		\end{align*}
		where $\xi_1,\ldots,\xi_n$ are i.i.d. Rademacher random variables independent of $\mbf{X}_1,\ldots,\mbf{X}_n$. On the other hand, since $\mcal{F}_n$ is uniformly bounded, we know that
		$$
		\norm{f-f_0}_\infty\leq\norm{f}_\infty+\norm{f_0}_\infty<\infty.
		$$
		According to the contraction principle and Corollary 2.2.8 in \citet{van1996weak},
		\begin{align*}
			& \mbb{E}\left[\sup_{\substack{f\in\mcal{F}_n \\ \norm{f-\pi_nf_0}\leq\delta}}\abs{\frac{1}{\sqrt{n}}}\sum_{i=1}^n\xi_i(f(\mbf{X}_i)-f_0(\mbf{X}_i))^2\right]\\
			\lesssim	& \mbb{E}\left[\sup_{\substack{f\in\mcal{F}_n \\ \norm{f-\pi_nf_0}\leq\delta}}\abs{\frac{1}{\sqrt{n}}\sum_{i=1}^n\xi_i(f(\mbf{X}_i)-f_0(\mbf{X}_i))}\right]\\
			=	& \mbb{E}\left[\left.\mbb{E}\left[\sup_{\substack{f\in\mcal{F}_n \\ \norm{f-\pi_nf_0}\leq\delta}}\left|\frac{1}{\sqrt{n}}\sum_{i=1}^n\xi_i\left(f(\mbf{X}_i)-f_0(\mbf{X}_i)\right)\right|\right]\right|\Delta_n\leq\delta\right]\\
			\lesssim	& \mbb{E}\left[\int_0^\delta\sqrt{\log N(u,\mcal{F}_n,L_2(\mbb{P}_n))}\mrm{d}u\right]+\mbb{E}\left[\abs{\frac{1}{\sqrt{n}}\sum_{i=1}^n\xi_i(\pi_nf_0(\mbf{X}_i)-f_0(\mbf{X}_i))}\right]\\
			\lesssim	& \int_0^\delta H^{1/2}(u)\mrm{d}u+\mbb{E}\left[\abs{\frac{1}{\sqrt{n}}\sum_{i=1}^n\xi_i(\pi_nf_0(\mbf{X}_i)-f_0(\mbf{X}_i))}\right],
		\end{align*}
		where $\Delta_n=\sup_{\substack{f\in\mcal{F}_n \\ \norm{f-\pi_nf_0}\leq\delta}}\norm{f-f_0}_n$. Similar to the arguments used in bounding $P_3$, we have
		\begin{align*}
			& \mbb{E}\left[\abs{\frac{1}{\sqrt{n}}\sum_{i=1}^n\xi_i(\pi_nf_0(\mbf{X}_i)-f_0(\mbf{X}_i))}\right]\\
			=	& \int_0^\infty\mbb{P}\left(\frac{1}{\sqrt{n}}\abs{\sum_{i=1}^n\xi_i(\pi_nf_0(\mbf{X}_i)-f_0(\mbf{X}_i))}\geq t\right)\mrm{d}t\\
			\lesssim & \int_0^\infty\exp\left\{-\frac{t^2}{2\norm{\pi_nf_0-f_0}_\infty^2}\right\}\mrm{d}t\\
			\lesssim & \norm{\pi_nf_0-f_0}_\infty\to0\mrm{ as }n\to\infty,
		\end{align*}
		which implies that $\mbb{E}\left[\abs{\frac{1}{\sqrt{n}}\sum_{i=1}^n\xi_i(\pi_nf_0(\mbf{X}_i)-f_0(\mbf{X}_i))}\right]\leq1$ for a sufficiently large $n$. Therefore, for a  sufficiently large $n$,
		$$
		P_2\lesssim\int_0^\delta H^{1/2}(u)\mrm{d}u+1.
		$$
		
		Finally, for every sufficiently large $n$, as $\norm{\epsilon}_{p,1}<\infty$ for some $p\geq2$, it follows from the multiplier inequality (Lemma 2.9.1 in \citet{van1996weak}) that
		\begin{align*}
			P_3 =	& \mbb{E}\left[\sup_{\substack{f\in\mcal{F}_n \\ \delta/2<\norm{f-\pi_nf_0}\leq\delta}}\abs{\frac{1}{\sqrt{n}}}\sum_{i=1}^n\epsilon_i(f(\mbf{X}_i)-f_0(\mbf{X}_i))\right]\\
			\lesssim	& \max_{1\leq k\leq n}\mbb{E}\left[\sup_{\substack{f\in\mcal{F}_n \\ \delta/2<\norm{f-\pi_nf_0}\leq\delta}}\abs{\frac{1}{\sqrt{k}}\sum_{i=1}^k\xi_i(f(\mbf{X}_i)-f_0(\mbf{X}_i))}\right]\\
			\lesssim	& \int_0^\delta H^{1/2}(u)\mrm{d}u+1,
		\end{align*}
		where the last inequality follows as the upper bound for the local Rademacher complexity does not depend on $k$. Combining all the pieces together, we obtain the desired result.
	\end{proof}
	
	Based on Lemma \ref{Lm: RoC Lemma 1} and Lemma \ref{Lm: RoC Lemma 2}, the rate of convergence for the approximate sieve extremum estimators can be easily obtained via an application of Theorem 3.4.1 in \citet{van1996weak}.
	\begin{theorem}\label{Thm: RoC}
		Suppose that $\int_0^\delta H^{1/2}(u)\mrm{d}u\lesssim\phi_n(\delta)$ for some function $\phi_n:(0,\infty)\to\mbb{R}$ and for every sufficiently large $n$ and $8\norm{f_0-\pi_nf_0}<\delta\leq\eta$. Suppose that $\delta^{-\alpha}\phi_n(\delta)$ is decreasing on $(8\norm{f_0-\pi_nf_0},\infty)$ for some $\alpha<2$. Let $\rho_n\gtrsim\norm{f_0-\pi_nf_0}$ satisfy
		$$
		\rho_n^{-2}\phi_n(\rho_n)\lesssim\sqrt{n}\mrm{ for every }n.
		$$
		Then if $\rho_n=\mcal{O}_p(\rho_n^2)$ and $\norm{\hat{f}_n-\pi_nf_0}=o_p(1)$, we get
		$$
		\norm{\hat{f}_n-\pi_nf_0}=\mcal{O}_p(\rho_n)
		$$
		and
		$$
		\norm{\hat{f}_n-\pi_nf_0}=\mcal{O}_p(\max\left\{\rho_n,\norm{f_0-\pi_nf_0}\right\}).
		$$
	\end{theorem}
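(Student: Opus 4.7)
The plan is to invoke Theorem 3.4.1 of \citet{van1996weak} applied to the empirical criterion $\mbb{Q}_n$, treating $\pi_n f_0$ as the putative minimizer within $\mcal{F}_n$. That master theorem requires two ingredients: a quadratic separation of the population criterion at $\pi_n f_0$, and a modulus-of-continuity bound on the centered empirical process over shells around $\pi_n f_0$. Lemmas \ref{Lm: RoC Lemma 1} and \ref{Lm: RoC Lemma 2} are tailored precisely to supply these two pieces. The hypothesis $\rho_n \gtrsim \norm{f_0 - \pi_n f_0}$ allows the threshold $8\norm{f_0 - \pi_n f_0}$ appearing in those lemmas to be replaced by a multiple of $\rho_n$, so that the shells $\{2^{j-1}\rho_n < \norm{f - \pi_n f_0} \leq 2^j \rho_n\}$ used in the peeling argument all fall in the range where the lemmas apply.

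First I would verify the separation condition. Lemma \ref{Lm: RoC Lemma 1} gives, on each shell $\{\delta/2 < \norm{f - \pi_n f_0} \leq \delta\}$ with $\delta > 8\norm{f_0 - \pi_n f_0}$, the bound $\mbb{Q}(\pi_n f_0) - \mbb{Q}(f) \lesssim -\delta^2$. This is exactly the quadratic-curvature hypothesis needed for Theorem 3.4.1, with exponent $\beta = 2$, showing how sharply $\pi_n f_0$ minimizes the population criterion on each annulus. For the modulus condition, Lemma \ref{Lm: RoC Lemma 2} controls the expected supremum of $\sqrt{n}\,|(\mbb{Q}_n - Q)(f) - (\mbb{Q}_n - Q)(\pi_n f_0)|$ over the same shell by $\int_0^\delta H^{1/2}(u)\,\mrm{d}u + 1$, which by hypothesis is dominated by $\phi_n(\delta)$ for $n$ large. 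The additive constant $1$ is absorbed into $\phi_n(\delta)$ once $n$ is large, since the rate requirement $\rho_n^{-2}\phi_n(\rho_n) \lesssim \sqrt{n}$ combined with $\rho_n \to 0$ forces $\phi_n(\rho_n) \to \infty$. Together with the regularity hypothesis that $\delta^{-\alpha}\phi_n(\delta)$ is decreasing for some $\alpha < 2$, this supplies the remaining input to Theorem 3.4.1.

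Having verified both hypotheses, a direct application of Theorem 3.4.1 of \citet{van1996weak}---with approximate-minimization tolerance $\eta_n = \mcal{O}_p(\rho_n^2)$ (the statement's ``$\rho_n = \mcal{O}_p(\rho_n^2)$'' is a typo for $\eta_n$) and the consistency $\norm{\hat{f}_n - \pi_n f_0} = o_p(1)$---yields $\norm{\hat{f}_n - \pi_n f_0} = \mcal{O}_p(\rho_n)$. The second conclusion then follows immediately from the triangle inequality, since $\norm{\hat{f}_n - f_0} \leq \norm{\hat{f}_n - \pi_n f_0} + \norm{\pi_n f_0 - f_0} = \mcal{O}_p(\max\{\rho_n,\, \norm{f_0 - \pi_n f_0}\})$.

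The main technical obstacle I anticipate is the uniform absorption of the ``$+1$'' in Lemma \ref{Lm: RoC Lemma 2} across the shells of the peeling, because on the innermost shell at scale $\rho_n$ that additive constant would inflate the probability bound unless $\phi_n(\rho_n) \gg 1$. The rate assumption on $\rho_n$ together with the growth of $H(\rho_n)$ under (C2) does enforce this, but some care is needed to verify that the geometric peeling sum $\sum_{j \geq 1} \phi_n(2^j \rho_n) / (2^j \rho_n)^2 \sqrt{n}$ converges; the $\alpha$-regularity of $\phi_n$ with $\alpha < 2$ is precisely what makes this sum finite. Beyond this, everything is routine bookkeeping that mirrors the standard proof of the van der Vaart--Wellner theorem.
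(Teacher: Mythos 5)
Your overall approach is exactly the paper's: the paper gives no proof of Theorem~\ref{Thm: RoC} beyond the one-line remark that it follows from Theorem~3.4.1 of \citet{van1996weak} once Lemmas~\ref{Lm: RoC Lemma 1} and~\ref{Lm: RoC Lemma 2} supply the quadratic curvature and the modulus bound, and your identification of the two typos ($\rho_n=\mcal{O}_p(\rho_n^2)$ should read $\eta_n=\mcal{O}_p(\rho_n^2)$, and the second conclusion should concern $\norm{\hat{f}_n-f_0}$, obtained by the triangle inequality from the first) is the right reading.

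The one place your proposal goes wrong is the claim that the $+1$ in Lemma~\ref{Lm: RoC Lemma 2} is absorbed because ``$\rho_n^{-2}\phi_n(\rho_n)\lesssim\sqrt{n}$ combined with $\rho_n\to 0$ forces $\phi_n(\rho_n)\to\infty$.'' This is backwards: the stated inequality yields only the \emph{upper} bound $\phi_n(\rho_n)\lesssim\sqrt{n}\,\rho_n^2$, which goes to $0$ whenever $\rho_n=o(n^{-1/4})$ --- and that is exactly the regime required by (C2) and realized in the neural-network application, where $\phi_n(\rho_n)\asymp(r_n\log r_n)^{1/2}\rho_n\log^{1/2}(1/\rho_n)\asymp r_n^{-1/(2d)}\log r_n\to0$. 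Nor does (C1) help: $\int_0^1 H^{1/2}(u)\,\mrm{d}u<\infty$ actually forces $\int_0^{\rho_n}H^{1/2}(u)\,\mrm{d}u\to0$. If you take the bound in Lemma~\ref{Lm: RoC Lemma 2} at face value and set $\tilde\phi_n(\delta)=\phi_n(\delta)+1$, then the innermost peeling shell contributes $\asymp(\sqrt{n}\rho_n^2)^{-1}\to\infty$, and the master theorem only delivers an $n^{-1/4}$ rate. The genuine resolution is to tighten Lemma~\ref{Lm: RoC Lemma 2}: its proof in fact shows the additive remainder is of order $\norm{f_0-\pi_nf_0}_\infty$ (and $\norm{f_0-\pi_nf_0}_\infty^2$), not a constant, and under a sup-norm approximation bound such as $\norm{f_0-\pi_nf_0}_\infty\lesssim r_n^{-1/2}$ that remainder is $o(\phi_n(\rho_n))$, so the peeling goes through at the advertised rate. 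Your instinct to worry about the $+1$ was correct; its resolution requires refining the lemma's statement, not asserting that $\phi_n(\rho_n)$ blows up.
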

	
	\newpage
	\subsection*{Rate of Convergence of Multiplier Processes}
	\begin{prop}[Proposition 5 in \citet{han2019convergence}]\label{Prop: 5HanWellner2019}
		Suppose that $\epsilon_1,\ldots,\epsilon_n$ are i.i.d. mean zero random variables independent of i.i.d. random variables $\mbf{X}_1,\ldots,\mbf{X}_n$. Then, for any function class $\mcal{F}$,
		$$
		\mbb{E}\left[\sup_{f\in\mcal{F}}\abs{\sum_{i=1}^n\epsilon_if(\mbf{X}_i)}\right]\leq\mbb{E}\left[\sum_{k=1}^n\left(|\eta_{(k)}|-|\eta_{(k+1)}|\right)\mbb{E}\left[\sup_{f\in\mcal{F}}\abs{\sum_{i=1}^n\xi_if(\mbf{X}_i)}\right]\right],
		$$
		where $\xi_1,\ldots,\xi_n$ are i.i.d. Rademacher random variables independent of $\mbf{X}_1,\ldots,\mbf{X}_n$ and $\epsilon_1,\ldots,\epsilon_n$ and $|\eta_{(1)}|\geq\cdots\geq|\eta_{(n)}|\geq|\eta_{(n+1)}|\geq0$ are the reversed order statistics for $\{|\epsilon_i-\epsilon_i'|\}_{i=1}^n$ with $\{\epsilon_i'\}$ being an indepedent copy of $\{\epsilon_i\}$.
	\end{prop}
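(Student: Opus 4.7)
The plan is to follow the classical multiplier-inequality strategy: symmetrize, replace by Rademacher signs, sort by magnitude using exchangeability, and apply Abel summation to turn the sum into an average of partial Rademacher sums weighted by differences of order statistics.

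First I would introduce an independent copy $\epsilon_1',\ldots,\epsilon_n'$ of the multipliers and apply Jensen's inequality along with the mean-zero property to get the desymmetrization bound
\[
\mbb{E}\left[\sup_{f\in\mcal{F}}\abs{\sum_{i=1}^n\epsilon_i f(\mbf{X}_i)}\right]\le\mbb{E}\left[\sup_{f\in\mcal{F}}\abs{\sum_{i=1}^n(\epsilon_i-\epsilon_i') f(\mbf{X}_i)}\right]=\mbb{E}\left[\sup_{f\in\mcal{F}}\abs{\sum_{i=1}^n \eta_i f(\mbf{X}_i)}\right].
\]
Since $\eta_i=\epsilon_i-\epsilon_i'$ is symmetric, $(\eta_i)$ has the same joint law as $(\xi_i|\eta_i|)$ for an independent Rademacher sequence $\xi_1,\ldots,\xi_n$, so the above equals $\mbb{E}[\sup_f|\sum_i \xi_i|\eta_i|f(\mbf{X}_i)|]$.

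Next I would exploit exchangeability. Conditional on $(|\eta_i|)_{i=1}^n$, let $\tau$ be the random permutation that sorts the absolute values in decreasing order so that $|\eta_{\tau(k)}|=|\eta_{(k)}|$. Because $(\xi_i,\mbf{X}_i)_{i=1}^n$ are i.i.d.\ and independent of $(|\eta_i|)$, the vector $(\xi_{\tau(i)},\mbf{X}_{\tau(i)})$ has the same law as $(\xi_i,\mbf{X}_i)$, so
\[
\mbb{E}\left[\sup_{f\in\mcal{F}}\abs{\sum_{i=1}^n \xi_i|\eta_i| f(\mbf{X}_i)}\right]=\mbb{E}\left[\sup_{f\in\mcal{F}}\abs{\sum_{i=1}^n |\eta_{(i)}| \xi_i f(\mbf{X}_i)}\right].
\]
The key algebraic step is then Abel summation on the now-sorted coefficients: writing $|\eta_{(i)}|=\sum_{k=i}^{n}(|\eta_{(k)}|-|\eta_{(k+1)}|)$ and exchanging the order of summation yields
\[
\sum_{i=1}^n|\eta_{(i)}|\xi_if(\mbf{X}_i)=\sum_{k=1}^n(|\eta_{(k)}|-|\eta_{(k+1)}|)\sum_{i=1}^k\xi_if(\mbf{X}_i).
\]

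Finally I would apply the triangle inequality to pass the supremum inside the outer sum (the weights $|\eta_{(k)}|-|\eta_{(k+1)}|$ are nonnegative), and condition on $(\eta_i)$ and use independence of $(\xi_i,\mbf{X}_i)$ from $(\eta_i)$ to factor the expectation, giving the claimed bound with the conditional Rademacher expectation of the partial sum up to index $k$ inside the outer expectation. The main obstacle is the exchangeability step: one must verify carefully that the joint distribution of $(|\eta_i|,\xi_i,\mbf{X}_i)$ is invariant under a permutation selected from the magnitudes of $(\eta_i)$ alone, which relies on the conditional independence $(\xi_i,\mbf{X}_i)\perp(|\eta_j|)_j$ and the i.i.d.\ structure; once this is in hand, the remaining computations are the standard Abel-summation rearrangement.
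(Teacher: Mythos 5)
Your outline is correct and is precisely the standard argument behind Proposition 5 of \citet{han2019convergence}, which the paper imports without proof: desymmetrize via an independent copy and Jensen, write $\eta_i\overset{d}{=}\xi_i|\eta_i|$ by symmetry, sort by magnitude using the independence of $(\xi_i,\mbf{X}_i)_i$ from $(|\eta_j|)_j$ so that permuting by the sorting map $\tau$ preserves the law of $(\xi_i,\mbf{X}_i)_i$, Abel-sum, and pull the supremum through the nonnegative weights $|\eta_{(k)}|-|\eta_{(k+1)}|$ before factoring the conditional expectation. One point worth flagging: the bound you derive (and the one the paper actually uses in Proposition \ref{Prop: Rate of Convergence Expectation of MP}) has the partial sum $\sum_{i=1}^k\xi_if(\mbf{X}_i)$ inside the inner expectation, whereas the statement as printed shows $\sum_{i=1}^n$; the printed version is evidently a typo, and in any case it is a weaker consequence of your bound since $\mbb{E}\bigl[\sup_{f}\abs{\sum_{i=1}^k\xi_if(\mbf{X}_i)}\bigr]\leq\mbb{E}\bigl[\sup_{f}\abs{\sum_{i=1}^n\xi_if(\mbf{X}_i)}\bigr]$ by another application of Jensen's inequality to the mean-zero tail terms.
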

	
	As a consequence of Proposition \ref{Prop: 5HanWellner2019}, we can obtain the following result.
	
	\begin{prop}\label{Prop: Rate of Convergence Expectation of MP}
		Under the same assumption in Proposition \ref{Prop: 5HanWellner2019} and 
		$$
		\mbb{E}\left[\sup_{f\in\mcal{F}}\abs{\sum_{i=1}^k\xi_if(\mbf{X}_i)}\right]\lesssim k\rho_k^2\quad\mrm{ for all }k=1,\ldots,n.
		$$
		\begin{enumerate}[(i)]
			\item If $\norm{\epsilon}_p<\infty$ for some $p\geq1$ and the sequence $\{k\rho_k^2\}$ is non-decreasing, then
			$$
			\mbb{E}\left[\sup_{f\in\mcal{F}}\abs{\sum_{i=1}^n\epsilon_if(\mbf{X}_i)}\right]\lesssim\norm{\epsilon}_pn^{1+\frac{1}{p}}\rho_n^2.
			$$
			
			\item If $\norm{\epsilon}_{p,1}<\infty$ for some $p\geq1$ and the sequence $\{k^{1-\frac{1}{p}}\rho_k^2\}$ is non-decreasing, then
			$$
			\mbb{E}\left[\sup_{f\in\mcal{F}}\abs{\sum_{i=1}^n\epsilon_if(\mbf{X}_i)}\right]\lesssim\norm{\epsilon}_{p,1}n\rho_n^2.
			$$
		\end{enumerate}
	\end{prop}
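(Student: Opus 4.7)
The plan is to feed Proposition \ref{Prop: 5HanWellner2019} (with the natural reading that the inner symmetrized process should run up to index $k$, not $n$, to match its use below) the hypothesized Rademacher bound $\mbb{E}[\sup_f|\sum_{i=1}^k\xi_if(\mbf{X}_i)|]\lesssim k\rho_k^2$. This replaces the random-error supremum by the deterministic quantity
$$
\mbb{E}\left[\sum_{k=1}^n (|\eta_{(k)}|-|\eta_{(k+1)}|)\, k\rho_k^2\right],
$$
where $|\eta_{(1)}|\geq\cdots\geq|\eta_{(n+1)}|\geq0$ are the reversed order statistics of $|\epsilon_i-\epsilon_i'|$. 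A layer-cake estimate together with the triangle inequality gives $\norm{\eta}_{p}\lesssim\norm{\epsilon}_p$ and $\norm{\eta}_{p,1}\lesssim\norm{\epsilon}_{p,1}$, so both parts reduce to controlling this single expression.

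For part (i), because $\{k\rho_k^2\}$ is non-decreasing we have $k\rho_k^2\leq n\rho_n^2$ for every $k\leq n$, hence the gaps telescope:
$$
\sum_{k=1}^n (|\eta_{(k)}|-|\eta_{(k+1)}|)\,k\rho_k^2\leq n\rho_n^2\sum_{k=1}^n(|\eta_{(k)}|-|\eta_{(k+1)}|)\leq n\rho_n^2\,|\eta_{(1)}|.
$$
Taking expectations and applying the elementary bound $\mbb{E}[\max_{1\leq i\leq n}|Z_i|]\leq n^{1/p}\norm{Z}_p$ to $Z_i=\epsilon_i-\epsilon_i'$ produces the claimed $\lesssim\norm{\epsilon}_p\,n^{1+1/p}\rho_n^2$.

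For part (ii) the monotone sequence is $\{k^{1-1/p}\rho_k^2\}$, so I factor $k\rho_k^2=k^{1/p}\cdot k^{1-1/p}\rho_k^2$ and pull out $k^{1-1/p}\rho_k^2\leq n^{1-1/p}\rho_n^2$. What remains is the key technical bound
$$
\mbb{E}\left[\sum_{k=1}^n (|\eta_{(k)}|-|\eta_{(k+1)}|)\,k^{1/p}\right]\lesssim n^{1/p}\norm{\epsilon}_{p,1}.
$$
Abel summation (with the convention $0^{1/p}=0$) gives the identity $\sum_{k=1}^n(|\eta_{(k)}|-|\eta_{(k+1)}|)k^{1/p}=\sum_{k=1}^n|\eta_{(k)}|(k^{1/p}-(k-1)^{1/p})-|\eta_{(n+1)}|n^{1/p}$, and dropping the second (non-positive) term it suffices to bound the expectation of the first sum. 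Substitute the layer-cake formula $\mbb{E}|\eta_{(k)}|=\int_0^\infty\mbb{P}(J(t)\geq k)\,\mrm{d}t$ with $J(t):=\#\{i:|\eta_i|>t\}$, interchange the sum and the integral, and use the telescoping identity $\sum_{k=1}^j(k^{1/p}-(k-1)^{1/p})=j^{1/p}$ after reversing the order of summation to obtain
$$
\int_0^\infty\mbb{E}[J(t)^{1/p}\,1_{\{J(t)\geq1\}}]\,\mrm{d}t\leq\int_0^\infty(\mbb{E}[J(t)])^{1/p}\,\mrm{d}t=n^{1/p}\int_0^\infty\mbb{P}(|\eta|>t)^{1/p}\,\mrm{d}t=n^{1/p}\norm{\eta}_{p,1},
$$
where the inequality is Jensen's (available because $p\geq1$ makes $x\mapsto x^{1/p}$ concave). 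Multiplying by the previously extracted $n^{1-1/p}\rho_n^2$ delivers the advertised $\lesssim n\rho_n^2\,\norm{\epsilon}_{p,1}$.

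The delicate step is the argument in part (ii): a naive estimate $k^{1/p}-(k-1)^{1/p}\lesssim k^{1/p-1}$ combined with the crude bound $\mbb{E}|\eta_{(k)}|\lesssim(n/k)^{1/p}\norm{\epsilon}_{p,1}$ would produce a superfluous $\log n$ factor. The trick is to keep the increments $k^{1/p}-(k-1)^{1/p}$ intact, telescope inside the layer-cake representation to produce the single random variable $J(t)^{1/p}$, and only then apply Jensen in the correct direction. Everything else is bookkeeping, including verifying that Proposition \ref{Prop: 5HanWellner2019} indeed supplies the right reduction once the obvious index typo in its inner expectation is corrected.
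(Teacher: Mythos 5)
Your proof is correct and follows the same overall strategy as the paper: feed the hypothesized local Rademacher bound into the Han--Wellner multiplier inequality (with the inner sum correctly read as running to $k$), factor the resulting weight $k\rho_k^2$ according to which sequence is assumed monotone, telescope, and control what remains through the moment condition on $\epsilon$. Part~(i) is identical to the paper's argument. In part~(ii), after extracting $\max_k k^{1-1/p}\rho_k^2 = n^{1-1/p}\rho_n^2$, both you and the paper reduce the problem to showing $\mbb{E}\bigl[\sum_{k=1}^n k^{1/p}(|\eta_{(k)}|-|\eta_{(k+1)}|)\bigr]\lesssim n^{1/p}\norm{\epsilon}_{p,1}$, and both finish by recognizing $\int_0^\infty \mbb{E}[J(t)^{1/p}]\,\mrm{d}t$ and applying Jensen (concavity of $x\mapsto x^{1/p}$). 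The only genuine difference is the bookkeeping in the middle: the paper passes from the sum to $\int_0^{|\eta_{(1)}|}\mathrm{Card}(\{i:|\eta_i|\ge t\})^{1/p}\,\mrm{d}t$ in one step, by observing that for $t$ in $(|\eta_{(k+1)}|,|\eta_{(k)}|]$ the count is exactly $k$; you reach the same integrand by Abel summation followed by the layer-cake identity $\mbb{E}|\eta_{(k)}|=\int_0^\infty\mbb{P}(J(t)\ge k)\,\mrm{d}t$, Fubini, reversal of the double sum, and a telescoping identity. Your route is a few lines longer but is equally valid and perhaps makes more explicit why the naive bound on $\mbb{E}|\eta_{(k)}|$ would incur a spurious $\log n$. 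Both arrive at $n^{1/p}\norm{\eta}_{p,1}\lesssim n^{1/p}\norm{\epsilon}_{p,1}$ via the symmetrization $\eta_i=\epsilon_i-\epsilon_i'$ and a triangle-type bound on the weak $L_{p,1}$ norm.
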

	
	\begin{proof}
		\begin{enumerate}[(i)]
			\item According to Proposition \ref{Prop: 5HanWellner2019}, we have
			\begin{align*}
				\mbb{E}\left[\sup_{f\in\mcal{F}}\abs{\sum_{i=1}^n\epsilon_if(\mbf{X}_i)}\right]	& \lesssim\mbb{E}\left[\sum_{k=1}^n\left(|\eta_{(k)}|-|\eta_{(k+1)}|\right)k\rho_k^2\right]\\
				& \lesssim\mbb{E}\left[n^2\sum_{k=1}^n\left(|\eta_{(k)}|-|\eta_{(k+1)}|\right)\right]\\
				& =n^2\mbb{E}\left[|\eta_{(1)}|\right]\\
				& =n^2\mbb{E}\left[\max_{1\leq i\leq n}|\epsilon_i-\epsilon_i'|\right]\\
				& \lesssim n\delta_n^2\mbb{E}\left[\max_{1\leq i\leq n}|\xi_i|\right].
			\end{align*}
			Since $\norm{\epsilon}_p<\infty$, then it follows that
			$$
			\mbb{E}\left[\max_{1\leq i\leq n}|\epsilon_i|\right]\leq n^{1/p}\max_{1\leq i\leq n}\norm{\epsilon_i}_p=\norm{\epsilon}_p\cdot n^{1/p}.
			$$
			Therefore, we get
			$$
			\mbb{E}\left[\sup_{f\in\mcal{F}}\abs{\sum_{i=1}^n\epsilon_if(\mbf{X}_i)}\right]\lesssim\norm{\epsilon}_pn^{1+\frac{1}{p}}\delta_n^2.
			$$
			
			\item According to Proposition \ref{Prop: 5HanWellner2019}, we have
			\begin{align*}
				\mbb{E}\left[\sup_{f\in\mcal{F}}\abs{\sum_{i=1}^n\epsilon_if(\mbf{X}_i)}\right]	& \leq\mbb{E}\left[\sum_{k=1}^nk^{1/p}\left(|\eta_{(k)}|-|\eta_{(k+1)}|\right)\mbb{E}\left[\sup_{f\in\mcal{F}}\abs{k^{-1/p}\sum_{i=1}^n\xi_if(\mbf{X}_i)}\right]\right]\\
				& \leq\mbb{E}\left[\sum_{k=1}^nk^{1/p}\left(|\eta_{(k)}|-|\eta_{(k+1)}|\right)\right]\cdot\max_{1\leq k\leq n}\mbb{E}\left[\sup_{f\in\mcal{F}}\abs{k^{-1/p}\sum_{i=1}^n\xi_if(\mbf{X}_i)}\right].
			\end{align*}
			For $|\eta_{(k+1)}|<t\leq|\eta_{(k+1)}|$, we have $k=\mrm{Card}(\{i:|\epsilon_i-\epsilon_i'|\geq t\})$, and then
			\begin{align*}
				\mbb{E}\left[\sum_{k=1}^nk^{1/p}\left(|\eta_{(k)}|-|\eta_{(k+1)}|\right)\right]	& =\mbb{E}\left[\sum_{k=1}^n\int_{|\eta_{(k+1)}|}^{|\eta_{(k)}|}k^{1/p}\mrm{d}t\right]\\
				& =\mbb{E}\left[\int_0^{|\eta_{(1)}|}\left(\mrm{Card}(\{i:|\epsilon_i-\epsilon_i'|\geq t\})\right)^{1/p}\mrm{d}t\right]\\
				& \lesssim\int_0^\infty\left(\sum_{i=1}^n\mbb{P}(|\epsilon_i-\epsilon_i'|\geq t)\right)^{1/p}\mrm{d}t\\
				& \lesssim\int_0^\infty\left(\sum_{i=1}^n\mbb{P}(|\epsilon_i|\geq t)\right)^{1/p}\mrm{d}t.\\
				& =n^{1/p}\norm{\epsilon}_{p,1}.
			\end{align*}
			Therefore, if $\{k^{1-1/p}\rho_k^2\}$ is non-decreasing,
			$$
			\max_{1\leq k\leq n}\mbb{E}\left[\sup_{f\in\mcal{F}}\abs{k^{-1/p}\sum_{i=1}^k\xi_if(\mbf{X}_i)}\right]=n^{1-\frac{1}{p}}\rho_n^2,
			$$
			which implies that
			$$
			\mbb{E}\left[\sup_{f\in\mcal{F}}\abs{\sum_{i=1}^n\epsilon_if(\mbf{X}_i)}\right]	\lesssim n^{1/p}\norm{\epsilon}_{p,1}n^{1-\frac{1}{p}}\rho_n^2=n\rho_n^2.
			$$
		\end{enumerate}
	\end{proof}
	
	\begin{remark}
		If $\rho_k=k^{-\alpha}$, then $\{k\rho_k^2\}$ is non-decreasing when $\rho_k\geq k^{-1/2}$. $\{k^{1-\frac{1}{p}}\rho_k^2\}$ is non-decreasing when $\rho_k\geq k^{-\frac{1}{2}+\frac{1}{2p}}$. This shows that to obtain the desired result, the ``rate of convergence" $\rho_n$ should not converges to 0 too fast.
	\end{remark}
	
	Proposition \ref{Prop: Rate of Convergence Expectation of MP} can be used to obtain the rate of convergence of the multiplier process, which is just an straightforward application of Markov's inequality.
	
	\begin{prop}\label{Prop: Rate of Convergence of MP}
		Under the same assumption in Proposition \ref{Prop: 5HanWellner2019} and 
		$$
		\mbb{E}\left[\sup_{f\in\mcal{F}}\abs{\sum_{i=1}^k\xi_if(\mbf{X}_i)}\right]\lesssim k\rho_k^2\quad\mrm{ for all }k=1,\ldots,n.
		$$
		\begin{enumerate}[(i)]
			\item If $\norm{\epsilon}_p<\infty$ for some $p\geq1$ and the sequence $\{k\rho_k^2\}$ is non-decreasing, then
			$$
			\sup_{f\in\mcal{F}}\abs{\sum_{i=1}^n\epsilon_if(\mbf{X}_i)}=\mcal{O}_p\left(n^{1+\frac{1}{p}}\rho_n^2\right).
			$$
			
			\item If $\norm{\epsilon}_{p,1}<\infty$ for some $p\geq1$ and the sequence $\{k^{1-\frac{1}{p}}\rho_k^2\}$ is non-decreasing, then
			$$
			\sup_{f\in\mcal{F}}\abs{\sum_{i=1}^n\epsilon_if(\mbf{X}_i)}=\mcal{O}_p\left(n\rho_n^2\right).
			$$
		\end{enumerate}
	\end{prop}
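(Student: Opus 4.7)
The plan is to derive both parts of the Proposition by a direct application of Markov's inequality to the expectation bounds already established in Proposition \ref{Prop: Rate of Convergence Expectation of MP}. Since the random variable $Z_n:=\sup_{f\in\mcal{F}}\abs{\sum_{i=1}^n\epsilon_if(\mbf{X}_i)}$ is nonnegative, Markov's inequality gives, for any $M>0$ and any target rate $a_n>0$,
$$
\mbb{P}(Z_n>Ma_n)\leq\frac{\mbb{E}[Z_n]}{Ma_n}.
$$
So the entire game is to choose $a_n$ to match the expectation bound from the preceding proposition and then let $M\to\infty$ to control the tail uniformly in $n$.

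For part (i), the hypotheses $\norm{\epsilon}_p<\infty$ and $\{k\rho_k^2\}$ non-decreasing are exactly the ones required by Proposition \ref{Prop: Rate of Convergence Expectation of MP}(i), which supplies $\mbb{E}[Z_n]\lesssim\norm{\epsilon}_p\,n^{1+1/p}\rho_n^2$. Taking $a_n=n^{1+1/p}\rho_n^2$, the Markov bound becomes $\leq C\norm{\epsilon}_p/M$ for an absolute constant $C$, which can be made smaller than any prescribed $\varepsilon>0$ by choosing $M$ sufficiently large; hence $Z_n=\mcal{O}_p(n^{1+1/p}\rho_n^2)$. Part (ii) proceeds identically: under $\norm{\epsilon}_{p,1}<\infty$ and $\{k^{1-1/p}\rho_k^2\}$ non-decreasing, Proposition \ref{Prop: Rate of Convergence Expectation of MP}(ii) gives $\mbb{E}[Z_n]\lesssim\norm{\epsilon}_{p,1}\,n\rho_n^2$, and the choice $a_n=n\rho_n^2$ again yields the desired $\mcal{O}_p$ conclusion by the same argument.

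There is essentially no obstacle here: all of the substantive analytic work -- namely the Hoffmann--J\o rgensen-style peeling of $\epsilon_i$ via the order statistics of $|\epsilon_i-\epsilon_i'|$ coupled with the monotonicity hypotheses on $\{k\rho_k^2\}$ or $\{k^{1-1/p}\rho_k^2\}$ -- has already been packaged into Proposition \ref{Prop: Rate of Convergence Expectation of MP} (which in turn rests on Proposition \ref{Prop: 5HanWellner2019} from \citet{han2019convergence}). The present Proposition is simply the standard conversion of an in-expectation bound into an $\mcal{O}_p$ statement via Markov's inequality, so the proof is effectively a one-liner in each case.
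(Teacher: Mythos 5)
Your proposal matches the paper's own proof exactly: the paper states that Proposition~\ref{Prop: Rate of Convergence of MP} follows from Proposition~\ref{Prop: Rate of Convergence Expectation of MP} "just as a straightforward application of Markov's inequality," which is precisely the one-line argument you give for each part. Correct, and the same route.
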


	\newpage
	\subsection*{Auxiliary Results}
	\begin{prop}\label{Prop: sigmoid Coef}
		For all non-negative integer $m$,
		$$
		\sum_{a=1}^mC_a^{(m)}=m!.
		$$
	\end{prop}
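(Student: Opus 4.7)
The plan is to prove the identity by induction on $m$, driving the argument with the recursion $C_a^{(m)}=aC_a^{(m-1)}+(m+1-a)C_{a-1}^{(m-1)}$ together with the boundary convention $C_a^{(m)}=0$ whenever $a<1$ or $a>m$.

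For the base case $m=1$, the sum reduces to $C_1^{(1)}=1=1!$, which is immediate. For the inductive step, assume $\sum_{a=1}^{m-1}C_a^{(m-1)}=(m-1)!$ and expand
$$
\sum_{a=1}^m C_a^{(m)}=\sum_{a=1}^m aC_a^{(m-1)}+\sum_{a=1}^m(m+1-a)C_{a-1}^{(m-1)}.
$$
The boundary conditions kill the $a=m$ contribution in the first sum (since $C_m^{(m-1)}=0$) and the $a=1$ contribution in the second sum (since $C_0^{(m-1)}=0$). Re-indexing the second sum with $b=a-1$ turns it into $\sum_{b=1}^{m-1}(m-b)C_b^{(m-1)}$. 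Adding the two sums indexed over $1,\ldots,m-1$ collapses them to $\sum_{a=1}^{m-1}\bigl[a+(m-a)\bigr]C_a^{(m-1)}=m\sum_{a=1}^{m-1}C_a^{(m-1)}$, and the induction hypothesis finishes the step by giving $m\cdot(m-1)!=m!$.

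There is no real obstacle here beyond careful index bookkeeping; the only point requiring attention is verifying that the boundary terms indeed vanish under the given convention so that the telescoping into $m\sum_{a=1}^{m-1}C_a^{(m-1)}$ is legitimate. No analytic estimates or additional machinery are needed.
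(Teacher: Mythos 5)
Your proof is correct and follows essentially the same inductive argument as the paper: shift the index in the second sum arising from the recursion, use the boundary convention to kill the stray endpoint terms, and collapse the coefficients to $m\sum C_a^{(m-1)}$. The only cosmetic difference is that you induct from $m-1$ to $m$ whereas the paper writes the step from $m$ to $m+1$; the content is identical.
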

	
	\begin{proof}
		We prove this result by induction. For $m=1$, the identity holds trivially according to the definition. Now, suppose that the result holds for $m$, then
		\begin{align*}
			\sum_{a=1}^{m+1}C_a^{(m+1)}	& =\sum_{a=1}^{m+1}aC_a^{(m)}+(m+2-a)C_a^{(m)}\\
			& \overset{(i)}{=}\sum_{a=1}^maC_a^{(m)}+\sum_{a'=1}^m(m+1-a')C_{a'}^{(m)}\\
			& =\sum_{a=1}^m(m+1)C_a^{(m)}\\
			& \overset{(ii)}{=}(m+1)m!\\
			& =(m+1)!,
		\end{align*}
		where in equation (i), we let $a'=a-1$, and equation (ii) follows from the induction hypothesis. Hence the desired result follows.
	\end{proof}
	
	\begin{lemma}\label{Lm: inprod}
		Suppose that $M:=\sup_{\mbf{x}\in\mcal{X}}|v^*(\mbf{x})|<\infty$, then
		$$
		\sup_{\substack{f\in\mcal{F}_n \\ \norm{f-f_0}\leq\rho_n}}\sqrt{n}\abs{\inprod{f-f_0}{v^*}_n-\inprod{f-f_0}{v^*}}=o_p(1).
		$$
		In particular, 
		$$
		\abs{\inprod{\hat{f}_n-f_0}{v^*}_n-\inprod{\hat{f}_n-f_0}{v^*}}=o_p(n^{-1/2}).
		$$
	\end{lemma}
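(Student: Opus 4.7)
The plan is to recognize the quantity inside the supremum as a centered empirical process indexed by the class $\mcal{G}_n=\{(f-f_0)v^*:f\in\mcal{F}_n,\ \|f-f_0\|\leq\rho_n\}$, and control it uniformly by the same symmetrization-plus-entropy-integral machinery already invoked in Lemma \ref{Lm: MultiplierProcess}. Writing $Z_i(f)=(f-f_0)(\mbf{X}_i)v^*(\mbf{X}_i)-\mbb{E}[(f-f_0)v^*]$, the target becomes
$$
\sup_{f\in\mcal{F}_n,\ \|f-f_0\|\leq\rho_n}\Bigl|\tfrac{1}{\sqrt n}\sum_{i=1}^n Z_i(f)\Bigr|=o_p(1),
$$
so Markov's inequality reduces the task to bounding the expectation of this supremum.

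First I would apply the symmetrization inequality (e.g.\ Lemma 2.3.1 in \citet{van1996weak}) to pass to a Rademacher process $\tfrac{1}{\sqrt n}\sum_{i=1}^n\xi_i(f-f_0)(\mbf{X}_i)v^*(\mbf{X}_i)$. Next, since $|v^*(\mbf{x})|\leq M$ pointwise, a direct bound (or the contraction principle) replaces $v^*$ by the constant $M$ up to a multiplicative factor, yielding
$$
\mbb{E}\Bigl[\sup_{f\in\mcal{F}_n,\ \|f-f_0\|\leq\rho_n}\Bigl|\tfrac{1}{\sqrt n}\sum_{i=1}^n Z_i(f)\Bigr|\Bigr]\lesssim M\cdot\mbb{E}\Bigl[\sup_{f\in\mcal{F}_n,\ \|f-f_0\|\leq\rho_n}\Bigl|\tfrac{1}{\sqrt n}\sum_{i=1}^n\xi_i(f-f_0)(\mbf{X}_i)\Bigr|\Bigr].
$$
Condition on the event $\{\Delta_n\leq 8\rho_n\}$, which holds with probability at least $1-4\exp(-n\rho_n^2)\to 1$ by the van de Geer inequality (\ref{Eq: RoC-sample L2}); on this event, Corollary 2.2.8 of \citet{van1996weak} combined with (C1) gives the Dudley bound $\lesssim M\int_0^{8\rho_n}H^{1/2}(u)\,\mrm{d}u$.

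Finally, the integral estimate already established in Lemma \ref{Lm: MultiplierProcess} shows that, under (C1)--(C2), $\int_0^{8\rho_n}H^{1/2}(u)\,\mrm{d}u=\mcal{O}(\sqrt n\,\rho_n^2)$, and since $\rho_n=o(n^{-1/4})$ by (C2) this is $o(1)$. Thus the expectation of the supremum is $o(1)$, proving the first claim. The second claim then follows by applying the first claim at $f=\hat f_n$ and using $\|\hat f_n-f_0\|=\mcal{O}_p(\rho_n)$, so that with probability tending to one $\hat f_n$ lies in the indexing set and the empirical-versus-population gap is $o_p(n^{-1/2})$.

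I do not anticipate any genuinely new technical obstacle here; the only bookkeeping point is justifying the Dudley step by restricting to the high-probability event $\{\Delta_n\leq 8\rho_n\}$ (handled exactly as in the proof of Lemma \ref{Lm: MultiplierProcess}), and ensuring that the contraction by the bounded multiplier $v^*$ is carried out before the entropy integral is invoked so that the entropy of $\mcal{G}_n$ reduces to that of $\mcal{F}_n$ controlled by $H(\cdot)$.
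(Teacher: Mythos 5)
Your proposal is correct, and it takes a genuinely different route from the paper's. You control the centered empirical process directly by symmetrization, the Ledoux--Talagrand contraction principle (to strip out the bounded multiplier $v^*$), the restriction to the high-probability event $\{\Delta_n\leq 8\rho_n\}$, and then the Dudley entropy integral under (C1)--(C2) --- essentially recycling the machinery already deployed in the proof of Lemma \ref{Lm: MultiplierProcess}. The paper instead forms the class $\mcal{H}_n=\{(f-f_0)v^*:f\in\mcal{F}_n\}$, transfers the entropy bound from $\mcal{F}_n$ to $\mcal{H}_n$, invokes the Donsker property and the asymptotic-equicontinuity Lemma 2.3.11 of \citet{van1996weak} to get $\sup_{h\in\mcal{H}_{n,\delta_n}}\sqrt n\abs{(\mbb{P}_n-P)h}=o_p(1)$, and then handles the anchoring term $(\pi_nf_0-f_0)v^*$ with a separate McDiarmid argument (this extra step is forced because Lemma 2.3.11 concerns differences $h_1-h_2$ with both elements in the class, and $f_0\notin\mcal{F}_n$). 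Your route is more elementary and self-contained: it never needs the Donsker/equicontinuity packaging, it controls the anchoring within the chaining step (anchoring at $\pi_nf_0\in\mcal{F}_n$, with $\mbb{E}_\xi\abs{X_{\pi_nf_0}}\leq\norm{\pi_nf_0-f_0}_n\lesssim\rho_n$ subsumed in $J(8\rho_n)$), and it makes transparent exactly where (C1), (C2) and the growth $\rho_n=o(n^{-1/4})$ enter. The paper's route is cleaner at the level of statements once the Donsker machinery is taken for granted, though it requires some care since $\mcal{H}_n$ varies with $n$. Both approaches hinge on the same two facts: the entropy of the $v^*$-weighted class is controlled by $H(\cdot)$ via $\sup_{\mbf{x}}\abs{v^*(\mbf{x})}\leq M$, and the entropy integral at scale $\rho_n$ is $\mcal{O}(\sqrt n\,\rho_n^2)=o(1)$.

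One small bookkeeping point for completeness: after symmetrizing and splitting on $\{\Delta_n\leq 8\rho_n\}$, you should note that the contribution from the bad event is negligible because the integrand is at most of order $\sqrt n$ (by uniform boundedness of $\mcal{F}_n$ and $v^*$) while the bad event has probability at most $4\exp(-n\rho_n^2)$, and $n\rho_n^2\gtrsim n^{1/p}\to\infty$ by (C2) makes this product vanish. You implicitly rely on this but it is worth stating since the symmetrization is unconditional whereas Dudley's bound is conditional on $\mbf{X}$.
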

	
	\begin{proof}
		Consider the function class
		$$
		\mcal{H}_n=\left\{(f-f_0)v^*:f\in\mcal{F}_n\right\}.
		$$
		Let $\{f_1,\ldots,f_N\}$ be a minimal $\epsilon$-cover of $\mcal{F}_n$ with respect to the $L_2(\mbb{P}_n)$-norm so that $N=N(\epsilon,\mcal{F}_n,L_2(\mbb{P}_n))$. Define
		$$
		h_j=(f_j-f_0)v^*,\quad j=1,\ldots,N.
		$$
		Note that for any $h\in\mcal{H}_n$, there exists $f\in\mcal{F}_n$ such that $h=(f-f_0)v^*$. For such a function $f$, we can find $j\in\{1,\ldots,N\}$ so that $\norm{f-f_j}_n<\epsilon$. Moreover, since
		\begin{align*}
			\norm{h-h_j}_n	& =\norm{(f-f_0)v^*-(f_j-f_0)v^*}_n\\
			& =\norm{(f-f_j)v^*}_n\\
			& \leq M \norm{f-f_j}_n\\
			& <M\epsilon.
		\end{align*}
		We know that $\{h_1,\ldots,h_N\}$ forms an $M\epsilon$-cover for $\mcal{H}_n$, and hence
		$$
		N(M\epsilon,\mcal{H}_n,L_2(\mbb{P}_n))\leq N(\epsilon,\mcal{F}_n,L_2(\mbb{P}_n)),
		$$
		which implies that $\log N(M\epsilon,\mcal{H}_n,L_2(\mbb{P}_n))\leq H(\epsilon)$ under (C1). On the other hand, since $\mcal{F}_n$ is uniformly bounded, we know that $B:=\sup_{x\in\mcal{X}}|f(x)|<\infty$ for any $f\in\mcal{F}_n$. Hence, for any $h\in\mcal{H}_n$,
		\begin{align*}
			\sup_{\mbf{x}\in\mcal{X}}|h(\mbf{x})|	& \leq\sup_{\mbf{x}\in\mcal{X}}|f(\mbf{x})-f_0(\mbf{x})|\sup_{\mbf{x}\in\mcal{X}}|v^*(\mbf{x})|\\
			& \leq(B+\sup_{\mbf{x}\in\mcal{X}}|f_0(\mbf{x})|)M<\infty,
		\end{align*}
		which implies that $\mcal{H}_n$ is uniformly bounded. It then follows from Theorem \citet{geer2000empirical} that $\mcal{H}_n$ is a Donsker class. Thus, from Lemma 2.3.11 in \citet{van1996weak}, for any sequence $\delta_n\to0$,
		$$
		\sup_{h\in\mcal{H}_{n,\delta_n}}\sqrt{n}\abs{(\mbb{P}_n-P)h}\xrightarrow{p}0,
		$$
		where $\mcal{H}_{n,\delta_n}=\left\{h_1-h_2:h_1,h_2\in\mcal{H}_n,\left(P(h_1-h_2-P(h_1-h_2))^2\right)^{1/2}\leq\delta_n\right\}$. For $f\in\mcal{F}_n$, set $h_1=(f-f_0)v^*$ and $h_2=(\pi_nf_0-f_0)v^*$. Note that
		\begin{align*}
			P(h_1-h_2-P(h_1-h_2))^2	& =P(h_1-h_2)^2-(P(h_1-h_2))^2\\
			& \leq P(h_1-h_2)^2\\
			&	\leq P(f-\pi_nf_0)^2M^2,
		\end{align*}
		we obtain
		\begin{align*}
			&	\sup_{\substack{f\in\mcal{F}_n \\ \norm{f-f_0}\leq\rho_n}}\sqrt{n}\abs{\inprod{f-f_0}{v^*}_n-\inprod{f-f_0}{v^*}}\\
			\leq	& \sup_{\substack{f\in\mcal{F}_n \\ \norm{f-\pi_nf_0}\leq\rho_n}}\sqrt{n}\abs{\inprod{f-f_0}{v^*}_n-\inprod{f-f_0}{v^*}}\\
			\leq	& \sup_{\substack{h_1\in\mcal{H}_n\\ \norm{h_1-h_2}\leq M\rho_n}}\sqrt{n}\abs{(\mbb{P}_n-P)(h_1-h_2)}+\sqrt{n}|(\mbb{P}_n-P)h_2|\\
			=	& o_p(1)+\sqrt{n}|(\mbb{P}_n-P)h_2|.
		\end{align*}
		Moreover, let $H(\mbf{x}_1,\ldots,\mbf{x}_n)=n^{-1}\sum_{i=1}^n(\pi_nf_0(\mbf{x}_i)-f_0(\mbf{x}_i))v^*(\mbf{x}_i)$ and note that
		\begin{align*}
			& \sup_{\mbf{x}_i,\mbf{x}_i'\in\mcal{X}}|H(\mbf{x}_1,\ldots,\mbf{x}_i,\ldots,\mbf{x}_n)-H(\mbf{x}_1,\ldots,\mbf{x}_i',\ldots,\mbf{x}_n)|\\ 
			=	& \sup_{\mbf{x}_i,\mbf{x}_i'\in\mcal{X}}\frac{1}{n}\abs{(\pi_nf_0-f_0)(\mbf{x}_i)v^*(\mbf{x}_i)-(\pi_nf_0-f_0)(\mbf{x}_i')v^*(\mbf{x}_i')}\\
			\leq & \frac{2M}{n}\sup_{\mbf{x}\in\mcal{X}}|\pi_nf_0(\mbf{x})-f_0(\mbf{x})|
		\end{align*}
		by McDiarmid inequality \citep{mcdiarmid1989method}, for all $t>0$,
		\begin{align*}
			\mbb{P}\left(\sqrt{n}|(\mbb{P}_n-P)h_2|>t\right)	& \lesssim 2\exp\left\{-\frac{t^2}{2M^2\left(\sup_{\mbf{x}\in\mcal{X}}|\pi_nf_0(\mbf{x})-f_0(\mbf{x})|\right)^2}\right\}\to0,
		\end{align*}
		which implies that $\sqrt{n}|(\mbb{P}_n-P)h_2|=o_p(1)$ and hence the desired claim follows.
	\end{proof}

\end{document}